\documentclass{article} %
\usepackage{iclr2026_conference,times}

\iclrfinalcopy

\usepackage{amsmath,amsfonts,bm}

\def\eqref#1{equation~\ref{#1}}
\def\Eqref#1{Equation~\ref{#1}}

\def\1{\bm{1}}

\DeclareMathAlphabet{\mathsfit}{\encodingdefault}{\sfdefault}{m}{sl}
\SetMathAlphabet{\mathsfit}{bold}{\encodingdefault}{\sfdefault}{bx}{n}

\usepackage{hyperref}
\usepackage{url}

\usepackage{amsfonts}       %
\usepackage[linesnumbered,ruled,vlined,algo2e]{algorithm2e} %
\usepackage{nicefrac}       %
\usepackage{microtype}      %
\usepackage{xcolor}         %
\usepackage{mathtools} %
\usepackage{wrapfig}  %

\usepackage{bbm} %

\usepackage{amsmath} %
\usepackage{amsthm}
\usepackage{bbm}
\usepackage{algorithm}
\usepackage{algpseudocode}
\usepackage{comment}

\usepackage{enumitem}

\usepackage{tikz}
\usetikzlibrary{arrows,arrows.meta,positioning,automata}
\usepackage{subcaption}

\usepackage[linesnumbered]{algorithm2e}
\RestyleAlgo{ruled}
\SetKwComment{Comment}{// }{}
\SetKwProg{Def}{function}{:}{}

\newtheorem{definition}{Definition}
\newtheorem{theorem}{Theorem}

\newtheorem{lemma}{Lemma}

\newtheorem{corollary}{Corollary}

\usepackage{thmtools, thm-restate}

\title{Matching Multiple Experts: On the Exploitability of Multi-Agent Imitation Learning}

\author{Antoine Bergerault\thanks{The work was done when the first author was an EPFL master's student visiting UC Berkeley. Now at the University of Zurich. Correspondence to \href{mailto:antoine.bergerault@uzh.ch}{antoine.bergerault@uzh.ch}.}\\
EPFL, UC Berkeley\\
\And
Volkan Cevher \\
EPFL\\
\And
Negar Mehr \\
UC Berkeley\\
}

\begin{document}

\maketitle

\begin{abstract}
Multi-agent imitation learning (MA-IL) aims to learn optimal policies from expert demonstrations of interactions in multi-agent interactive domains. Despite existing guarantees on the performance of the resulting learned policies, characterizations of how far the learned polices are from a Nash equilibrium are missing for offline MA-IL. In this paper, we demonstrate impossibility and hardness results of learning low-exploitable policies in general $n$-player Markov Games. We do so by providing examples where even exact measure matching fails, and demonstrating a new hardness result on characterizing the Nash gap given a fixed measure matching error. We then show how these challenges can be overcome using strategic dominance assumptions on the expert equilibrium. Specifically, for the case of dominant strategy expert equilibria, assuming Behavioral Cloning error $\epsilon_{\text{BC}}$, this provides a Nash imitation gap of $\mathcal{O}\left(n\epsilon_{\text{BC}}/(1-\gamma)^2\right)$ for a discount factor $\gamma$. We generalize this result with a new notion of best-response continuity, and argue that this is implicitly encouraged by standard regularization techniques.
\end{abstract}

\section{Introduction}

Learning from expert demonstrations via imitation learning (IL) has recently seen growing adoption in the Machine Learning and Robotics communities \citep{finn2016guided, shih2022conditional, pearce2023imitating, yang2023watch}. Given a demonstration dataset, IL is traditionally done by either regressing a policy (Behavioral Cloning \citep{pomerleau1991efficient}), fitting a plausible reward function and extracting a policy via Reinforcement Learning (Inverse Reinforcement Learning \citep{ng2000algorithms, abbeel2004apprenticeship}), or implicitly matching expert occupancy measures \citep{finn2016connection, ho2016generativeadversarialimitationlearning}. Crucially, imitation learning bypasses the need of designing a reward function, a common limitation for Reinforcement Learning in practice, that often requires domain expertise or extensive iterative refinements. Instead, it directly leverages demonstrations from optimal agents. This advantage becomes even more compelling when learning tasks requiring collaboration or competition between multiple agents, where reward assignment constitutes an extra ambiguity \citep{sunehag2017value,nagpal2025leveraging,choi2025craft}.

While many works successfully tackle single agent IL (SA-IL, \citet{ho2016generativeadversarialimitationlearning, ng2000algorithms, ross2010efficient}), their extensions to multi-agent settings \citep{song2018multi, zhan2018generative,mehr2023maximum} inherit fundamental limitations. In particular, they produce imitation policies that can be exploited by unilaterally deviating strategic agents \citep{tang2024multiagentimitationlearningvalue, freihaut2025learningequilibriadataprovably}.

In this work, we study the question of learning a Nash equilibrium using demonstrations from an expert Nash equilibrium, given fixed imitation errors commonly measured in practice. More precisely, assuming a known behavioral cloning or measure matching error, we measure the exploitability of the learned policy as its distance to a Nash equilibrium. We characterize situations where we can derive both \textit{consistent} and \textit{tractable} bounds on the Nash gap (see Section~\ref{sec:preliminaries} for a formal definition), where
\begin{itemize}
    \item A \textit{consistent} bound vanishes with the imitation error; making the imitation error an intuitive proxy of the exploitability by guaranteeing a pure Nash equilibrium from no imitation error.
    \item A \textit{tractable} bound is efficient to compute based on the game assumptions. It can be computed in polynomial time to measure exploitability during IL training.
\end{itemize}

Intuitively, a \textit{consistent} bound ensures that a Nash equilibrium is learned from an imitation error of zero. Consistency can be implicitly assumed by SA-IL extensions to multi-agent domains, but \emph{we show that this is a strong assumption that does not hold in general games}.

Specifically, we prove how not assuming full-state support of the expert or only matching its state distribution can lead to bound inconsistencies. Then, we present continuity conditions under which both \textit{consistent} and \textit{tractable} upper bounds on the Nash gap can be computed. In summary, we make the following contributions:
\begin{itemize}
    \item In Section~\ref{sec:perfect-learners}, we show the impossibility of deriving \textit{consistent} Nash gap bounds in general Markov Games. We provide concrete examples where even exactly matching expert occupancy measures can result in highly exploitable policies.
    \item We further demonstrate in Section~\ref{sec:lower-bounds} the impossibility of deriving tight \textit{tractable} exploitability lower bounds in general games, even if we know both the rewards and transition dynamics.
    \item Finally, Section~\ref{sec:positive-results} presents a new notion of best-response continuity not observed in SA-IL and shows how assumptions on this continuity property can be used to construct \textit{tractable} upper bounds. As a special case, we prove that a ``good'' approximate Nash equilibrium can be learned from Behavioral Cloning with a dominant strategy expert.
\end{itemize}

To keep the arguments straightforward, we present our results for infinite-horizon games in the main document. These results also extend to finite-horizon games, as demonstrated in Appendix~\ref{app:finite-horizon}.

\section{Previous work}

\paragraph{Single-agent Imitation Learning.} Given a dataset of demonstrations produced by an expert, SA-IL aims to extract a near-optimal policy from the data. The expert is considered optimal in maximizing a reward function over time, as in the reinforcement learning framework. Without requiring access to the environment or expert oracles, imitation learning is done through Behavioral Cloning (BC, \citet{pomerleau1991efficient}), Inverse Reinforcement Learning (IRL, \citet{ng2000algorithms, abbeel2004apprenticeship}) or Adversarial Imitation Learning \citep{ho2016generativeadversarialimitationlearning}. These methods essentially fit one of: the expert policy function, the reward function, or the expert occupancy measures \citep{finn2016connection, ho2016generativeadversarialimitationlearning}. In the single-agent setting, performance of such approaches are well-understood, measured by the sub-optimality gap of the learned policy with respect to the expert \citep{dagger, foster2024behavior}.

\paragraph{Multi-agent Imitation Learning.} A growing body of work focuses on imitation learning in multi-agent domains \citep{song2018multi, lin2018multi, wang2021multi, shih2022conditional, mehr2023maximum}, with applications such as autonomous driving \citep{bhattacharyya2018multi} or robotic interactions \citep{bogert2018multi,chandra2025multi}. MA-IRL inherits the ambiguity of reward design from the reinforcement learning framework \citep{sunehag2017value, freihaut2025feasiblerewardsmultiagentinverse,ward2025active}. More simple methods (BC, Adversarial IL) are therefore tempting and have been extended from their single-agent counterpart \citep{song2018multi, zhan2018generative}. However, they do not carry guarantees on the extracted policy in terms of robustness to the presence of strategic interactions.

\paragraph{Theoretical Barriers for MA-IL.} Indeed, previous work showed that BC and GAIL policies are exploitable in general games \citep{cui2022offlinetwoplayerzerosummarkov, freihaut2025learningequilibriadataprovably, tang2024multiagentimitationlearningvalue}. Among those works, \citet{freihaut2025learningequilibriadataprovably} introduces the first regret upper bound for multi-agent behavioral cloning, bounding the sub-optimality of the imitation policy from any unilateral deviations. They rely on a new concentrability coefficient related to broader concentrability assumptions \citep{cui2022offlinetwoplayerzerosummarkov, yin2021near, cai2023uncoupled}. This coefficient is \textit{intractable} in general games and can become unbounded, making their bound both \textit{inconsistent} and hard to use in practice.

There remains a gap in the literature in identifying the phenomena behind impossibility results from prior work, and conditions to make offline MA-IL well-behaved are still unclear. We reduce this gap by characterizing such issues and deriving conditions for \textit{consistent} and \textit{tractable} Nash gap bounds.

\section{Preliminaries}\label{sec:preliminaries}

\subsection{Markov Games}

We use the tuple $G = (\mathcal{S}, \mathcal{A}, P, \{r_i\}_{i=1}^n, \nu_0, \gamma)$ to define an $n$-player Markov Game. Players in $[n] \coloneqq \{1, \dots, n\}$ take joint actions in $\mathcal{A} = \mathcal{A}_1 \times \dots \times \mathcal{A}_{n}$ while navigating a shared state space $\mathcal{S}$. The dynamics of the system are described by the transition function $P: \mathcal{S} \times \mathcal{A} \to \Delta_\mathcal{S}$ and the initial state distribution $\nu_0 \in \Delta_\mathcal{S}$, where $\Delta_U$ denotes the probability simplex over a space $U$. All players simultaneously take actions by sampling from their individual policy $\pi_i: \mathcal{S} \to \Delta_{\mathcal{A}_i}$. The resulting joint policy is $\pi \coloneqq \pi_1 \times \dots \times \pi_n$, also denoted by $\pi_i \times \pi_{-i}$ where $\pi_{-i}$ represents the joint policy of all players but $i$. Lastly, we define the reward function of player $i \in [n]$ as $r_i : \mathcal{S} \times \mathcal{A} \to [-1, 1]$, and define a discount factor $\gamma \in [0, 1)$. We will use $\Pi = \Pi_1 \times \dots \times \Pi_n$ to denote the set of joint policies $\pi$, with $\pi_i \in \Pi_i$ for each $i \in [n]$.

A trajectory $\{(s_t, a_t)\}_{t\geq 0}$ of the policy $\pi$ starts in an initial state $s_0 \sim \nu_0$ and successively samples joint actions $a_t \sim \pi(\cdot | s_t)$ and future states $s_{t+1} \sim P(\cdot | s_t, a_t)$ for every $t \geq 0$. This procedure induces the following state-only and state-action occupancy measures for every state $s \in \mathcal{S}$, respectively:
$$
\mu_{\pi}(s) \coloneqq (1-\gamma)\sum_{t=0}^{\infty} \gamma^t \mathbb{P}(s_t = s), \hspace{5em} \rho_{\pi}(s, a) \coloneqq \mu_{\pi}(s)\pi(a | s),
$$
where $\mathbb{P}(s_t = s)$ is the probability of reaching state $s$ after rolling out the policy $\pi$ for $t$ steps. Intuitively, $\mu_{\pi}(s)$ is the discounted visitation frequency of state $s$ after infinitely many steps. Similarly, $\rho_{\pi}(s,a)$ is the discounted frequency of the state-action pair $(s, a)$. This allows us to define for every state $s \in \mathcal{S}$, the state-value functions as the expected discounted cumulative rewards of the players:
$$
V_i^{\pi}(s) = \frac{1}{1-\gamma} \cdot \mathbb{E}_{(s, a) \sim \rho_{\pi}}[r_i(s, a)] \quad \forall i \in [n],
$$

where $\mathbb{E}_{(s, a) \sim \rho_{\pi}}[\cdot]$ samples state-action pairs from the density function $\rho_{\pi}$. By extension, we also define $V_i^{\pi}(\nu) = \mathbb{E}_{s \sim \nu}[V_i^{\pi}(s)]$ for any state distribution $\nu \in \Delta_\mathcal{S}$.

Markov Games extend Markov Decision Processes (MDPs, when $n = 1$ \citep{puterman2014markov}) to multi-agent games, where each agent has its own reward function. While in MDPs we consider a policy to be optimal if it maximizes $V^{\pi}(\nu_0)$\footnote{Or $V^{\pi}_1(\nu_0)$ using the notation above}, the distinct individual rewards of a Markov Game necessitate the introduction of the solution concept of a game-theoretic equilibrium to model the outcome of interactions.

\subsection{Measuring optimality in games}

Fixing other players' policies as $\pi_{-i}$, the performance of a player $i$ in terms of average (discounted) rewards is measured with $V_i^{\pi}(\nu_0)$. Therefore, we can denote the set of optimal policies for player $i$ as the optimal policies in the MDP induced by $\pi_{-i}$ using the concept of best-response mapping.
\begin{definition}[Best-response mapping]
    For an agent $i \in [n]$, the best-response mapping to $\pi_{-i}$ is defined as
    $$
    \operatorname{BR}_i(\pi_{-i}) \coloneqq \arg\max_{\pi_i'} V^{\pi'_i, \pi_{-i}}_i(\nu_0).
    $$
\end{definition}
Intuitively, when playing a best-response $\pi^*_i \in \operatorname{BR}_i(\pi_{-i})$, player $i$ cannot improve by unilaterally deviating from $\pi^*_i$. From this definition, we define a Nash equilibrium as a combination of independent policies where no player would be better off by unilaterally deviating from their equilibrium policies.
\begin{definition}[Nash equilibrium]
    A policy $\pi$ is a Nash equilibrium of the game if $\pi$ is a product policy and each individual policy is a best-response to the other policies, i.e.
    $$
    \pi_i \in \operatorname{BR}_i(\pi_{-i}) \quad \forall i \in [n].
    $$
\end{definition}

As is common in multi-agent games, we use Nash equilibria as solution concepts to model interaction outcomes throughout this paper. Specifically, our goal is to learn (approximate) Nash equilibria\footnote{An (approximate) $\epsilon$-Nash equilibrium $\pi^E$ is a product policy where $V^{\pi_i, \pi^E_{-i}}_i(\nu_0) - V^{\pi^E}_i(\nu_0) \leq \epsilon$ for all $i \in [n], \pi_i \in \Pi_i$.} from a dataset of trajectories sampled from a Nash equilibrium policy $\pi^E$ termed the \emph{expert policy}.

\subsection{Offline Imitation learning}

Given a dataset of finite-length trajectories\footnote{While we consider stationary policies maximizing cumulated rewards over an infinite horizon, IL usually assumes a set of $N$ trajectories $\{\tau_k\}_{k=1}^N$ of length $|\tau_k| \sim \operatorname{Geometric}(1-\gamma)$.} produced by rolling-out $\pi^E$ in a given Markov Game $G$, an imitation learning procedure aims to recover a ``good'' joint policy $\pi$ without access to the environment.

Following the above discussion, we measure the performance of $\pi$ with the following metric.
\begin{definition}[Value gap]
    Given an expert policy $\pi^E$ of $G$, the Value gap of a policy $\pi$ is:
    $$
    \operatorname{ValueGap(\pi)} \coloneqq \max_i \left(V^{\pi^E}_i(\nu_0) - V^{\pi}_i(\nu_0)\right).
    $$
\end{definition}

This is essentially the maximum sub-optimality gap among the agents for playing $\pi$ instead of the optimal expert $\pi^E$. This value gap is the standard optimality metric used in single-agent MDP settings \citep{dagger, foster2024behavior}.

In the multi-agent case, however, the performance of individual players is usually not a sufficient guarantee as we use the imitation policy in an environment with strategic agents. Hence, we need to measure the impact of individual policies deviating from the learned joint behavior $\pi$. We will therefore evaluate the exploitability of $\pi$, as a measure of its gap to a Nash equilibrium.
\begin{definition}[Nash gap, see \citet{ramponi2023imitationmeanfieldgames}]
    We define the Nash (imitation) gap of a product policy $\pi$ of the game $G$ as
    \begin{equation}\label{def:nash-gap}
        \operatorname{NashGap(\pi)} \coloneqq \max_{i} \left(V^{\pi^*_i, \pi_{-i}}_i(\nu_0) - V^{\pi}_i(\nu_0)\right)
    \end{equation}
    with any $\pi^*_i \in \operatorname{BR}_i(\pi_{-i})$.
\end{definition}
The Nash gap is a notion of maximal regret \citep{tang2024multiagentimitationlearningvalue} for the individual players. It directly links to Nash equilibria as an $\epsilon$-Nash equilibrium is any product policy $\pi_{\epsilon} \in \Pi$ satisfying $\operatorname{NashGap}(\pi_{\epsilon}) \leq \epsilon$. Note that to ensure the imitation policy $\pi$ is a product policy, it suffices to learn all individual policies $\pi_i$ independently, for example through behavioral cloning.

In the general case, both $\operatorname{ValueGap}(\pi)$ and $\operatorname{NashGap}(\pi)$ are upper bounded by $\frac{2}{1-\gamma}$ as differences of cumulative normalized rewards. The goal of an imitation learning procedure is to leverage the expert data in order to minimize these gaps.

Regressing on the training data, BC and Adversarial IL bring one of the following error assumptions: a \textbf{BC Error} from matching the empirical distribution of the individual agents, or a \textbf{Measure Matching Error} measuring a discrepancy in occupancy measures (state-only or state-action). They are respectively defined as:
\begin{equation}\label{def:bc-error}
    \epsilon_{\text{BC}} \coloneqq \max_i \mathbb{E}_{s \sim \mu_{\pi^E}}\left[\left\lVert\pi_i(\cdot | s) - \pi^E_i(\cdot | s)\right\rVert_{1}\right],
\end{equation}
\begin{equation*}
    \epsilon_{\mu} \coloneqq \lVert \mu_{\pi} - \mu_{\pi^E} \rVert_{1}, \qquad \text{and} \qquad \epsilon_{\rho} \coloneqq \lVert \rho_{\pi} - \rho_{\pi^E} \rVert_{1},
\end{equation*}
where $\mathbb{E}_{s \sim \mu_{\pi^E}}[\cdot]$ is the expectation with respect to the state-occupancy induced by $\pi^E$.

While general \textit{consistent} and \textit{tractable} upper bounds are known on the Value gap assuming either $\epsilon_{\text{BC}}$ or $\epsilon_{\rho}$\footnote{$\operatorname{ValueGap(\pi)} \leq n\epsilon_{\text{BC}}/(1-\gamma)^2$ from the Performance Difference Lemma (see e.g. \citet{xiao2022convergence}) and $\operatorname{ValueGap}(\pi) \leq \epsilon_{\rho}/(1-\gamma)$ by Hölder's inequality.}, deriving similar bounds for the Nash gap remains an open problem. Under the assumption of a fixed imitation error, we therefore lack an understanding of the distance of $\pi$ to a Nash equilibrium.

More information about the connection between adversarial imitation learning and occupancy measure matching can be found in Appendix~\ref{sec:additional-background}.

\section{Impossibility results for exact measure matching}\label{sec:perfect-learners}

As a first step to understand the difficulty of extracting Nash equilibria from expert demonstrations, we focus in this section on the idealized case of exact occupancy measure matching. This is a crucial step for determining when a bound can be \textit{consistent}, while the assumption is relaxed in later sections. Specifically, this section addresses the following question:
\begin{center}
    \textit{When does exact occupancy measure matching learn a Nash equilibrium?}
\end{center}

We start by showing that under the strong assumption of full-state support, exact state-action occupancy measure matching (shortened state-action matching below) recovers an exact Nash equilibrium. Then, we show how relaxing any of these two assumptions can lead to catastrophic errors. Note that assuming exact state-action matching (i.e. $\epsilon_{\rho} = 0$) is equivalent to assuming state-only matching (i.e. $\epsilon_{\mu} = 0$) and exact Behavioral Cloning (i.e. $\epsilon_{\text{BC}} = 0$).

To make our statement more precise, note that any policy $\pi$ of a Markov Game partitions the state space $\mathcal{S}$ into a visited region $\mathcal{S}^+_{\pi} = \{s:\mu_{\pi}(s) > 0\}$ and an unvisited region $\mathcal{S}^-_{\pi} = \{s: \mu_{\pi}(s) = 0\}$. We prove that state-action matching ($\epsilon_{\rho} = 0$) and full-state support ($\mathcal{S}^+_{\pi^E} = \mathcal{S}$) recovers the Nash expert, i.e. $\pi = \pi^E$. When the state-support is incomplete ($\mathcal{S}^+_{\pi^E} \neq \mathcal{S}$) or only state-matching ($\epsilon_{\mu} = 0$) holds, we can only guarantee the trivial bound $\operatorname{NashGap}(\pi) \leq \mathcal{O}\left(1/(1-\gamma)\right)$.

\subsection{Sufficiency of state-action matching under full-state support}

Under full-state support ($\mathcal{S}^+_{\pi} = \mathcal{S}$), we show how state-action matching is sufficient to learn a Nash equilibrium. This is the direct consequence of the following fact: a policy $\pi$ is uniquely characterized by its state-action occupancy measure $\rho_{\pi}$ on the visited region $\mathcal{S}^+_{\pi}$. We formalize this idea in the following theorem, and explain its implications for measure matching.

\begin{restatable}{theorem}{thmRhoMatchingSupport}
    \label{thm:rho-matching-support}
    Let $\pi, \pi' \in \Pi$ be such that $\rho_{\pi} = \rho_{\pi'}$. Then, $\mathcal{S}^+_{\pi} = \mathcal{S}^+_{\pi'}$ and $\pi(\cdot | s) = \pi'(\cdot | s)$ for every $s \in \mathcal{S}^+_{\pi}$.
\end{restatable}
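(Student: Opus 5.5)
The plan is to recover both the state occupancy and the conditional policy from $\rho_{\pi}$ by marginalization, since the definition $\rho_{\pi}(s,a) = \mu_{\pi}(s)\pi(a|s)$ writes $\rho_{\pi}$ as a product of exactly these two quantities. First, summing over joint actions $a \in \mathcal{A}$ and using $\sum_a \pi(a|s) = 1$ gives
\begin{equation*}
\mu_{\pi}(s) = \sum_{a \in \mathcal{A}} \rho_{\pi}(s,a) = \sum_{a \in \mathcal{A}} \rho_{\pi'}(s,a) = \mu_{\pi'}(s) \quad \text{for all } s \in \mathcal{S}.
\end{equation*}
So $\mu_{\pi} = \mu_{\pi'}$, and the visited regions coincide: $\mathcal{S}^+_{\pi} = \{s : \mu_{\pi}(s) > 0\} = \mathcal{S}^+_{\pi'}$. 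This settles the first claim.

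Second, fix $s \in \mathcal{S}^+_{\pi}$, so that $\mu_{\pi}(s) = \mu_{\pi'}(s) > 0$. Dividing gives
\begin{equation*}
\pi(a|s) = \frac{\rho_{\pi}(s,a)}{\mu_{\pi}(s)} = \frac{\rho_{\pi'}(s,a)}{\mu_{\pi'}(s)} = \pi'(a|s)
\end{equation*}
for every joint action $a$. Hence the joint policies agree at $s$. This is precisely where positivity of $\mu_{\pi}(s)$ is needed: on $\mathcal{S}^-_{\pi}$ both sides of the product vanish, and nothing can be inferred about $\pi(\cdot|s)$.

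If the statement is meant for individual policies $\pi_i(\cdot|s)$ rather than only the joint policy, we add one more step. Both policies are product policies in $\Pi$, so marginalizing the joint distribution $\pi(\cdot|s) = \pi'(\cdot|s)$ over $a_{-i}$ yields $\pi_i(\cdot|s) = \pi'_i(\cdot|s)$ for every $i$.

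None of these steps is a real obstacle; the argument is a direct consequence of the definitions. The only point needing care is keeping the measure-theoretic setting discrete (finite or countable $\mathcal{S}$ and $\mathcal{A}$) so that the summations and pointwise divisions are legitimate. The paper implicitly assumes this setting by writing occupancy measures as mass functions.
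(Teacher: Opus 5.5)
Your proof is correct and rests on the same fact as the paper's: normalization $\sum_a \pi(a|s)=1$ is what lets $\rho_\pi$ determine $\mu_\pi$. You use it directly by marginalizing, whereas the paper argues each claim by contradiction, first exhibiting a positive-probability action for the supports, then summing $\pi(a'|s)<\pi'(a'|s)$ over actions to contradict normalization. Your direct version is a bit cleaner: it gives the stronger intermediate fact $\mu_\pi=\mu_{\pi'}$ on all of $\mathcal{S}$, and it avoids the paper's claim of strict inequality for every $a'$, which is false for actions with $\pi'(a'|s)=0$ (harmlessly so, since some action carries positive mass).
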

The proof can be found in Appendix~\ref{app:proof-rho-matching-support}.

This shows that in the limit of infinite data where state-action matching is attainable, the empirical state-action occupancy measure is a sufficient statistic for learning $\pi^E$ on its state support $\mathcal{S}^+_{\pi^E}$. Assuming full-state support, we can then derive the following corollary for state-action matching.

\begin{corollary}
    Let $\pi^E, \pi \in \Pi$ be such that $\mathcal{S}^+_{\pi^E} = \mathcal{S}$ and $\rho_{\pi^E} = \rho_{\pi}$; then, $\operatorname{NashGap}(\pi) = 0$.
\end{corollary}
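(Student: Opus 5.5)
The plan is to reduce the corollary directly to Theorem~\ref{thm:rho-matching-support}. Applying it with $\pi' = \pi^E$ gives $\mathcal{S}^+_{\pi} = \mathcal{S}^+_{\pi^E} = \mathcal{S}$ and $\pi(\cdot\,|\,s) = \pi^E(\cdot\,|\,s)$ for every $s \in \mathcal{S}^+_{\pi^E}$. Since full-state support means this visited region is all of $\mathcal{S}$, the two joint policies coincide at every state, so $\pi = \pi^E$ as joint policies.

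Next I pass from equality of joint policies to equality of the individual factors. Both $\pi$ and $\pi^E$ are product policies in $\Pi$, and the marginal of a product distribution on $\mathcal{A}_i$ is exactly the $i$-th factor. Summing $\pi(a\,|\,s) = \pi^E(a\,|\,s)$ over $a_{-i}$ therefore yields $\pi_i(\cdot\,|\,s) = \pi^E_i(\cdot\,|\,s)$ for all $i$ and $s$. Hence $\pi_{-i} = \pi^E_{-i}$ and $\pi_i = \pi^E_i$ for every player.

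For the conclusion, fix $i$. Because $\pi^E$ is a Nash equilibrium, $\pi^E_i \in \operatorname{BR}_i(\pi^E_{-i}) = \operatorname{BR}_i(\pi_{-i})$, so
\[
\max_{\pi_i'} V_i^{\pi_i',\pi_{-i}}(\nu_0) = V_i^{\pi^E}(\nu_0) = V_i^{\pi}(\nu_0).
\]
Thus each term in the definition of $\operatorname{NashGap}$ equals zero, and so does the maximum over $i$. The Nash gap is also nonnegative, since $\pi_i$ itself is a candidate deviation.

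I expect no real obstacle, since all the work lives in Theorem~\ref{thm:rho-matching-support}. The only point needing care is the marginalization step, i.e.\ that equality of product joint policies forces equality of each factor. That step is immediate precisely because both policies are assumed to lie in the product set $\Pi$.
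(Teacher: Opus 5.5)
Your proof is correct and follows the paper's intended route: the paper presents this corollary as an immediate consequence of Theorem~\ref{thm:rho-matching-support}, since full-state support forces $\pi(\cdot\,|\,s) = \pi^E(\cdot\,|\,s)$ at every state, so $\pi$ inherits the expert's Nash property. Your marginalization step for the individual factors and your explicit use of the fact that $\pi^E$ is a Nash equilibrium (implicit in the paper, not restated in the corollary) are the right details to fill in.
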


Matching state-action occupancy measure under full-state support is therefore a sufficient condition for learning the expert Nash equilibrium. In the next section, we show that only assuming state-only matching $\mu_{\pi^E} = \mu_{\pi}$ becomes insufficient to learn a Nash equilibrium.

\subsection{Insufficiency of state-only matching with full-state support}

In this section, we show that even with full-state support, state-only matching doesn't provide exploitability guarantees in general Markov Games. This is because rewards are functions of state-action pairs but state distributions are not necessarilly tied to specific transitions. We can therefore construct examples of games where a given state distribution can be realized by distinct transitions and very different rewards. To prove that state-only matching with full-state support cannot extract Nash equilibria (Nash gap of zero) in general games, we demonstrate below that it can even incur a Nash gap linear in the effective horizon $1/(1-\gamma)$.
\begin{lemma}\label{state-coverage-not-enough-lemma}
    There exists a game and a corresponding expert policy $\pi^E$ such that $\mathcal{S}^+_{\pi^E} = \mathcal{S}$. Moreover, there exists a policy $\pi$ such that $\mu_{\pi^E} = \mu_{\pi}$ and $\operatorname{NashGap}(\pi) \geq \Omega\left(1/(1-\gamma)\right)$.
\end{lemma}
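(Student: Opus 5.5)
The plan is to build an explicit small game in which a fixed state distribution can be realized by two different joint policies. One of them is the expert Nash equilibrium. The other plays actions with the same transition statistics but a different reward pattern, which leaves a large profitable unilateral deviation.

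\textbf{Construction.} I would take two players and two states $\mathcal{S}=\{s_1,s_2\}$, with $\nu_0$ uniform. Each player has actions $\{a,b\}$, and the transitions are state-independent and determined by player 1's action alone: action $a$ moves to $s_1$ and action $b$ moves to $s_2$.

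The expert policy $\pi^E_1$ mixes uniformly in both states, so $\mu_{\pi^E}$ is uniform and $\mathcal{S}^+_{\pi^E}=\mathcal{S}$. The alternative $\pi_1$ plays $b$ deterministically in $s_1$ and $a$ deterministically in $s_2$. Starting from a uniform state, it alternates deterministically, so its state marginal stays uniform at every time and $\mu_\pi=\mu_{\pi^E}$.

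\textbf{Rewards.} Player 1 gets $r_1\equiv 0$, so any $\pi_1$ is a best response and player 1 never contributes to the Nash gap. Player 2 gets $r_2(s,a_1,a_2)=1$ if $a_2=a_1$ and $-1$ otherwise, so player 2 wants to match player 1's action in the current state.

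\textbf{Expert is Nash.} Against the uniformly mixing $\pi^E_1$, every action of player 2 has expected reward $0$. Hence any $\pi^E_2$ is a best response; fix $\pi^E_2(s_1)=\pi^E_2(s_2)=a$. Then $\pi^E$ is a product Nash equilibrium.

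\textbf{Learned policy.} Let $\pi=\pi_1\times\pi^E_2$. Its state occupancy equals $\mu_{\pi^E}$, because transitions depend only on player 1's action. In state $s_1$ player 2's action $a$ mismatches player 1's $b$, giving reward $-1$. In state $s_2$ the actions match, giving $+1$. Since $\mu_\pi$ is uniform, $V_2^\pi(\nu_0)=0$.

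Player 2 can deviate to play $b$ in $s_1$ and $a$ in $s_2$. Player 2 does not affect transitions, so this deviation earns $1$ every step and its value is $1/(1-\gamma)$. The Nash gap is therefore at least $1/(1-\gamma)$, which is $\Omega(1/(1-\gamma))$ as claimed.

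\textbf{Main obstacle.} The delicate step is ensuring exact equality $\mu_\pi=\mu_{\pi^E}$ as discounted occupancies, not just as stationary distributions. This is why I would choose a uniform $\nu_0$ together with a deterministic swap, so the marginal is uniform at every time step $t$. Making transitions independent of player 2 also keeps the deviation from disturbing the occupancy, which simplifies the value computation.
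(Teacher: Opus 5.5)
Your construction is correct, and it uses a different example from the paper's. The uniform $\nu_0$ gives full support, and the deterministic swap keeps every time-marginal uniform, so $\mu_\pi=\mu_{\pi^E}$. Both players are indifferent against $\pi^E$, so it is a Nash equilibrium. Player 2's state-dependent matching deviation then gives $\operatorname{NashGap}(\pi)\ge 1/(1-\gamma)$.

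The paper instead uses a three-state cooperative game. The pure expert $(a_1,a_1)$ cycles $s_0\to s_1\to s_2\to s_0$ collecting rewards $1/3,2/3,1$, while the learned policy $(a_1,a_2)$ self-loops in every state with reward $-1$. Both policies yield the uniform occupancy. Since $\pi_1=\pi^E_1$, player 2 can simply revert to $\pi^E_2$, so the Nash gap is bounded below by the value gap $\frac{5/3}{1-\gamma}$.

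The paper's route gives a pure equilibrium in which every unilateral deviation is strictly costly, certified by a short advantage argument in the appendix. The failure therefore cannot be blamed on indifference or on mixing. Your expert is knife-edge: player 1 receives zero reward, and player 2 is indifferent only because player 1 mixes exactly uniformly.

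Your route gives three things. The equilibrium check is trivial. The deviation leaves the occupancy unchanged, because player 2 does not affect transitions. And you get a strictly stronger separation: $V_i^{\pi}(\nu_0)=V_i^{\pi^E}(\nu_0)=0$ for both players, so $\operatorname{ValueGap}(\pi)=0$ while the Nash gap is linear in $1/(1-\gamma)$. In the paper's example, by contrast, the exploitability comes directly from a large value gap.
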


\begin{proof}
    We prove this lemma by constructing an example of such a game.
    Let $G$ be a cooperative two-player game with action sets $\mathcal{A}_1 = \mathcal{A}_2 = \{a_1, a_2\}$, state space $\mathcal{S} = \{s_0, s_1, s_2\}$, discount term $\gamma$, and uniform initial state distribution $\nu_U$. The rewards and transition dynamics of $G$ are shown in Figure~\ref{fig:example-state-measure}. By definition, $\mu_{\pi'}(s) \geq (1-\gamma)\nu_U(s) > 0$ for all $s \in \mathcal{S}$ and $\pi' \in \Pi$. Therefore, all policies have full-state support.
    
    \begin{figure}[h]\label{fig:example-state-measure}
        \centering
        \begin{subfigure}{0.48\textwidth}
            \centering
            \begin{tikzpicture}[
                > = Stealth,
                shorten > = 1pt,
                auto,
                node distance = 2cm,  %
                semithick
            ]
                \node[state] (s0) {$s_0$};
                \node[state] (s1) [below right={0.5cm} and {0.75*2cm} of s0] {$s_1$};
                \node[state] (s2) [above right= {0.5cm} and {0.75*2cm}of s1] {$s_2$};
                
                \draw[->] (s0) to[bend right=20] node[below left] {$(a_1, a_1)$} (s1);
                \draw[->] (s0) to[loop above] node {else} (s0);
                \draw[->] (s1) to[bend right=20] node[below right] {$(a_1, a_1)$} (s2);
                \draw[->] (s1) to[loop above] node {else} (s1);
                \draw[->] (s2) to[loop above] node {else} (s2);
                \draw[->] (s2) to[bend right=30] node[above] {$(a_1, a_1)$} (s0);
            \end{tikzpicture}
            \caption{Description of the game transition dynamics. Note that a tuple $(a_i, a_j)$ corresponds to player 1 playing $a_i$ and player 2 playing $a_j$. Arrows represent deterministic transitions from a state to another.}
        \end{subfigure}
        \hfill
        \begin{subfigure}{0.48\textwidth}
            \centering
            \begin{tikzpicture}[
                > = Stealth,
                shorten > = 1pt,
                auto,
                node distance = 2cm,
                semithick
            ]
                \node[state] (s0) {$s_0$};
                \node[state] (s1) [below right={0.5cm} and {0.75*2cm} of s0] {$s_1$};
                \node[state] (s2) [above right= {0.5cm} and {0.75*2cm}of s1] {$s_2$};
                
                \draw[->] (s0) to[bend right=20] node[below left] {$1/3$} (s1);
                \draw[->] (s0) to[loop above] node {$-1$} (s0);
                \draw[->] (s1) to[bend right=20] node[below right] {$2/3$} (s2);
                \draw[->] (s1) to[loop above] node {$-1$} (s1);
                \draw[->] (s2) to[loop above] node {$-1$} (s2);
                \draw[->] (s2) to[bend right=30] node[above] {$1$} (s0);
            \end{tikzpicture}
            \caption{Description of the reward function. The number on each arrow is the reward associated with the corresponding transition. The rewards are the same for both players.}
        \end{subfigure}
        \caption{Cooperative two-player game $G$: the transitions and rewards are described by the left and right sub-figures, respectively. In this game, there exists a Nash equilibrium $\pi^E$ with full-state support and a policy $\pi$ such that $\mu_{\pi^E} = \mu_{\pi}$ with a Nash gap linear in the effective horizon $1/(1-\gamma)$.}
        \label{fig:example-state-measure}
    \end{figure}
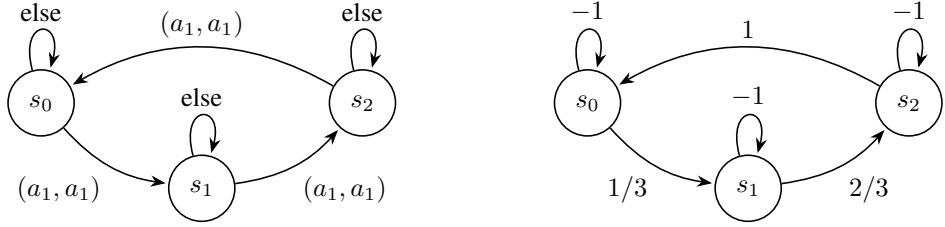

    A Nash equilibrium of $G$ is the constant policy $\pi^{E}((a_1, a_1) | s) = 1$ for all $s \in \mathcal{S}$ with uniform occupancy measure $\mu_{\pi^E}(\cdot) = 1/3$. This is a Nash equilibrium because mixed actions in this game always incur the worst possible value. A formal argument can be found in Appendix~\ref{proof-lem-example-state-coverage}.
    
    Let the learned policy be the constant $\pi((a_1, a_2) | s) = 1$ such that $\mu_{\pi} = \mu_{\pi^E}$ and $V_2^{\pi}(\nu_U) = -\frac{1}{1-\gamma}$. Noting that $V_2^{\pi^E}(\nu_U) = \frac{2/3}{1-\gamma}$, this concludes the proof as:
    $$
    \operatorname{NashGap}(\pi) \geq V_2^{\pi^E}(\nu_U) - V_2^{\pi}(\nu_U) = \frac{5/3}{1-\gamma} \geq \Omega\left(\frac{1}{1-\gamma}\right).
    $$
\end{proof}

Lemma~\ref{state-coverage-not-enough-lemma} emphasizes that state-based occupancy approaches as in \citet{wu2025diffusingstatesmatchingscores} cannot guarantee convergence to equilibria in general games even when policies are guaranteed to have full state coverage.

\subsection{Insufficiency of state-action measure matching with unvisited states}

Assuming again state-action matching, we show that full-state support is essential for learning a Nash equilibrium in general games. We draw on the example given by \citet[Figure~2]{tang2024multiagentimitationlearningvalue} to point out issues from a non-visited region $\mathcal{S}^-_{\pi^E} \neq \emptyset$ and derive the following theorem.

\begin{restatable}{theorem}{thmNoStateCoverage}(Adapted from \citet[Theorem 4.3]{tang2024multiagentimitationlearningvalue})
    \label{thm:no-state-coverage}
    There exists a Markov Game with expert policy $\pi^E$ and a learned policy $\pi$ such that even if $\rho_{\pi^E} = \rho_{\pi}$, the Nash gap scales linearly with the discounted horizon; i.e., $\operatorname{NashGap}(\pi) \geq \Omega\left(1/(1-\gamma)\right)$.
\end{restatable}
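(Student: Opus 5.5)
The plan is to build an explicit game where the expert never visits some state $s^-$. State-action matching leaves the learner's policy at $s^-$ unconstrained, but $s^-$ is exactly where a unilateral deviator can drive play. Theorem~\ref{thm:rho-matching-support} tells us that $\rho_{\pi}=\rho_{\pi^E}$ forces $\pi=\pi^E$ on $\mathcal{S}^+_{\pi^E}$ and nothing more. So the construction should be a Nash equilibrium $\pi^E$ that is enforced by a ``punishment'' behaviour at an unvisited state, together with a learned $\pi$ that agrees with $\pi^E$ on the visited states but drops the punishment at $s^-$.

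Concretely, take two players and states $\{s_0, s_1, s_2\}$ with $\nu_0=\delta_{s_0}$. At $s_0$, if player 1 plays $a_1$ the game moves to an absorbing state $s_1$ paying both players reward $0$ forever. If player 1 plays $a_2$ the game moves to $s_2$, where player 2's action decides player 1's reward:
\begin{itemize}
\item $(\cdot, a_1)$ at $s_2$ gives player 1 reward $-1$ (punishment) and self-loops;
\item $(\cdot, a_2)$ at $s_2$ gives player 1 reward $+1$ and self-loops.
\end{itemize}
To make punishing an equilibrium choice for player 2, set player 2's rewards at $s_2$ to $0$ regardless of action, so any action at $s_2$ is a best response for player 2.

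The expert is: player 1 plays $a_1$ everywhere, and player 2 plays $a_1$ everywhere. I will verify it is Nash. Player 2 is indifferent everywhere, since all its rewards are $0$. For player 1, deviating to $a_2$ at $s_0$ yields $-\gamma/(1-\gamma)$ (plus the initial $0$), which is worse than $0$. Under $\pi^E$ the state $s_2$ is never reached, so $\mathcal{S}^+_{\pi^E}=\{s_0,s_1\}$.

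The learned policy $\pi$ equals $\pi^E$ except that player 2 plays $a_2$ at $s_2$. Since $s_2\notin\mathcal{S}^+_{\pi}$, the visitation is unchanged, so $\mu_\pi=\mu_{\pi^E}$ and the conditionals agree on the support; hence $\rho_\pi=\rho_{\pi^E}$. Now player 1's best response to $\pi_{-1}$ plays $a_2$ at $s_0$ and collects $\gamma/(1-\gamma)$, while $V_1^\pi(\nu_0)=0$. Therefore $\operatorname{NashGap}(\pi)\geq \gamma/(1-\gamma)=\Omega(1/(1-\gamma))$ for, say, $\gamma\geq 1/2$. If one wants to avoid the $\gamma$ factor, the reward can be placed on the transition out of $s_0$ itself, with player 1 receiving $+1$ immediately for $a_2$ whenever player 2 will cooperate.

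The main obstacle is making sure the expert is a genuine Nash equilibrium in the paper's sense, namely value-maximizing from $\nu_0$ over stationary policies, and not merely some non-credible threat. I handle this by making player 2 exactly indifferent at $s_2$. The remaining checks are routine: that $\rho$ is blind to changes at zero-measure states, which is immediate from the definition $\rho_\pi(s,a)=\mu_\pi(s)\pi(a|s)$, and that both players' policies are product policies.
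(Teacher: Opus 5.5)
Your proof is correct and uses the same mechanism as the paper's. By Theorem~\ref{thm:rho-matching-support}, $\rho_\pi=\rho_{\pi^E}$ constrains $\pi$ only on $\mathcal{S}^+_{\pi^E}$. An equilibrium held in place by a punishment at an unvisited state can therefore be matched exactly by a policy that drops the punishment.

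The difference is the instance. The paper reuses the example of \citet{tang2024multiagentimitationlearningvalue}: a common-payoff game with three actions and a countably infinite odd/even chain, where the expert plays $(a_3,a_3)$ at the unvisited $s_1$ and the learner plays $(a_1,a_1)$ there. You use a three-state, two-action general-sum game with absorbing branches. Both yield the gap $\gamma/(1-\gamma)$. Yours is finite and, more importantly, your $\pi^E$ is an exact Nash equilibrium, because a deviation earns $-1$ at every step after leaving $s_0$. In the paper's example read literally, player~1 deviating to $a_2$ at $s_0$ reaches $s_1$ and collects $r(s_1)=1$ before being diverted to the even branch. That $\pi^E$ is therefore only $\gamma$-Nash, which is easily repaired, e.g.\ by setting $r(s_1)=0$.

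Two small remarks. First, making player~2 indifferent at $s_2$ is more than the paper's $\nu_0$-based definition of Nash equilibrium needs. Player~2 cannot reach $s_2$ unilaterally, so any behaviour there is already a best response. Taking $r_2=r_1$ would still work and would give a common-payoff version like the paper's.

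Second, your aside about removing the $\gamma$ factor is not well posed in the $r_i(s,a)$ model. A reward at $s_0$ cannot depend on what player~2 later does at $s_2$, and player~2's action at $s_0$ is fixed by matching. You do not need the aside, since $\gamma/(1-\gamma)\ge 1/(2(1-\gamma))$ for $\gamma\ge 1/2$.
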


The key idea and intuition behind this theorem is that the imitation dataset misses information about the unvisited region $S^-_{\pi^E}$. An illustrative example of when this is undesirable is shown in Figure~\ref{fig:unvisited-states-transitions}.

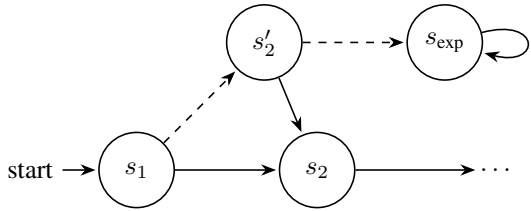
\begin{figure}[h]
    \centering
    \begin{minipage}{0.55\textwidth}
        \centering
        \begin{tikzpicture}[
            > = Stealth,
            shorten > = 1pt,
            auto,
            node distance=2.4cm,
            semithick,
            initial text={start},
            state/.style={circle, draw, minimum size=1cm, inner sep=0pt},
            skip/.style={dashed, bend left=45},
            sequential/.style={solid},
            dots/.style={inner sep=0pt, minimum size=0.5cm}
        ]
            \node[state, initial] (s1) {$s_1$};
            \node[state] (s2) [right of=s1] {$s_2$};
            \node[dots] (dots1) [right of=s2] {$\dots$};
            \node[state] (sp2) [above right of=s1] {$s'_2$};
            \node[state] (sexp) [right of=sp2] {$s_{\text{exp}}$};
            
            \draw[->, sequential] (s1) -- (s2);
            \draw[->, sequential] (sp2) -- (s2);
            \draw[->, sequential] (s2) -- (dots1);
            
            \draw[->, skip] (s1) -- (sp2);
            \draw[->, skip] (sp2) -- (sexp);

            \draw[->, sequential] (sexp) to[loop right] (sexp);
        \end{tikzpicture}
        \caption{Transitions of a two-player Markov Game. The unique initial state is $s_1$. The rest of the chain ($\cdots$) and reward functions can be designed to induce linear Nash gap for state-action matching.}
        \label{fig:unvisited-states-transitions}
    \end{minipage}%
    \hfill
    \begin{minipage}{0.40\textwidth}
        We can design reward functions for the transitions of Figure~\ref{fig:unvisited-states-transitions} such that the expert would always take the solid arrows.\\
        
        The dataset missing information about $\pi^E(\cdot | s'_2)$, a best-response can lead to states $s'_2$ then $s_{\text{exp}}$. This last state $s_{\text{exp}}$ can be designed to incur high rewards for one player, while the expert region $S^+_{\pi^E}$ incurs linear less.
    \end{minipage}
\end{figure}

For completeness, we adapt the proof of \citet{tang2024multiagentimitationlearningvalue} for infinite horizon games in Appendix~\ref{proof-thm-no-state-coverage}.

\section{On the Infeasibility of Tractable Lower Bounds for Exploitability}\label{sec:lower-bounds}
The analysis of Section~\ref{sec:perfect-learners} reveals that even under the idealized assumption of exact occupancy measure matching, MA-IL can still fail drastically. This prevents us from deriving \textit{consistent} exploitability bounds in the general case. For practical settings, focusing on the idealized case of no learning error is however not sufficient as most IL procedures will incur an approximation error. This is due to two main reasons: First, because of the finite number of samples in the dataset, matching empirical statistics is inevitably different from matching the expert ones. Second, tbe function approximations in the non-tabular setting \citep{song2018multi, wu2025diffusingstatesmatchingscores} can only approximate the desired distributions.

This shift from an exact to an approximate matching regime introduces new challenges. We know that the value gap always enjoys a \textit{consistent} and \textit{tractable} upper bound of rate $\mathcal{O}\left(\epsilon_{\text{BC}}/(1-\gamma)^2\right)$ \citep{ross2010efficient} or $\mathcal{O}(\epsilon_{\rho}/(1-\gamma))$. However, we will demonstrate in the current and the next sections that even small approximation errors can make exploitability bounds \textit{intractable}.

A natural step in assessing the exploitability of the imitated policies is to quantify the best-case Nash gap they might induce under approximation errors. Given an approximation error, this quantity corresponds to the smallest achievable Nash gap among all the possible imitation policies $\pi$. Formally, we define it as follows:
\begin{definition}[Tight Nash gap lower bound]
    Let $G$ be a Markov Game and let $\Pi^E$ the set of Nash equilibria of $G$. We define the tight Nash gap lower bound for $G$ with occupancy matching error $\epsilon_{\rho}$ as
    $$
    m_{\rho}(G, \epsilon_\rho) = \min_{\pi^E \in \Pi^E}\min_{\pi \in \mathcal{M}_{\epsilon_\rho}(\pi^E)} \operatorname{NashGap}(\pi),
    $$
    where $\mathcal{M}_{\epsilon_{\rho}}(\pi^E) = \left\{\pi \in \Pi: \left\lVert \rho_{\pi} - \rho_{\pi^E} \right\rVert_1 = \epsilon_{\rho}\right\}$.
\end{definition}

For a game $G$, given only the approximation error assumption $\epsilon_{\rho}$, $m_{\rho}(G, \epsilon_\rho)$ corresponds to the best possible achievable Nash gap.

This quantity is in general intractable to compute for the case of bimatrix games as we show below in Theorem~\ref{thm:PPAD-tight-bound}.

\begin{restatable}{theorem}{thmPPADTightBound}
    \label{thm:PPAD-tight-bound}
    The problem of computing $m_{\rho}(G_{\text{bi}}, \epsilon_{\rho})$ given an arbitrary bimatrix game $G_{\text{bi}}$ and an arbitrary $\epsilon_{\rho} \in \mathbb{R_+}$ is PPAD-hard\footnote{See Appendix~\ref{sec:additional-background} for more formal details on the PPAD complexity class.}.
\end{restatable}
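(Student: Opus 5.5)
We prove PPAD-hardness by a polynomial-time reduction from computing an exact Nash equilibrium of a bimatrix game, a problem known to be PPAD-hard (Chen--Deng--Teng; Daskalakis--Goldberg--Papadimitriou). The plan is to take $\epsilon_{\rho} = 0$ and show that knowing $m_{\rho}(G_{\text{bi}}, 0)$ lets us recover an equilibrium. A single value in $\{0\}$ is not informative on its own, so the oracle will be embedded in a search procedure.

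We first model a bimatrix game $(A,B)$ as a Markov Game with a single absorbing state $s_0$. This means $\mathcal{S}=\{s_0\}$, $\nu_0=\delta_{s_0}$, and $P(s_0\mid s_0,a)=1$. The rewards $r_1,r_2$ are $A,B$ rescaled into $[-1,1]$. Stationary product policies are then exactly mixed strategy profiles $(x,y)$. Moreover,
\[
V_i^{\pi}(\nu_0)=\frac{1}{1-\gamma}\,u_i(x,y),\qquad \rho_{\pi}(s_0,\cdot)=x\otimes y .
\]
Consequently $\operatorname{NashGap}(\pi)$ is $\frac{1}{1-\gamma}$ times the usual bimatrix exploitability, and Nash equilibria of the Markov Game coincide with those of the bimatrix game. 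This establishes that the problem as stated, for bimatrix games, is well posed.

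Next we compute what the oracle returns. With $\epsilon_\rho=0$, Theorem~\ref{thm:rho-matching-support} (full support holds trivially since $|\mathcal{S}|=1$) gives $\mathcal{M}_0(\pi^E)=\{\pi^E\}$, so $m_\rho(G,0)=0$ always. This is trivial, so hardness must come from general $\epsilon_\rho$. For $\epsilon_\rho>0$, a product $x\otimes y$ at $\ell_1$-distance exactly $\epsilon_\rho$ from some equilibrium $x^E\otimes y^E$ exists whenever $\epsilon_\rho$ is at most the diameter. The minimum over such profiles of the Nash gap is then $0$ if and only if there exist two distinct equilibria at product distance exactly $\epsilon_\rho$.

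Our main reduction therefore targets a PPAD-hard decision or search variant. One option is to decide whether there is an $\epsilon$-Nash profile within $\ell_1$-distance $\epsilon_\rho$ of some Nash equilibrium. A cleaner route is to modify the game, for example by adding a dominated ``escape'' action or by taking a product with a gadget game with a unique, known equilibrium. The goal of the modification is to make $m_\rho(G',\epsilon_\rho)$ encode a quantity that, via continuity and exact rational values, reveals the support or payoffs of an equilibrium of the original game. We would then recover an equilibrium by support enumeration on the revealed support, solving a linear program.

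The main obstacle is this encoding step: making the min-over-equilibria, min-over-sphere value genuinely carry equilibrium information rather than collapsing to $0$. Our first attempt is the following. Append to $G$ a gadget player-action pair whose payoffs make every profile at distance exactly $\epsilon_\rho$ from an equilibrium pay a Nash gap equal to a fixed affine function of that equilibrium's value $u_1(x^E,y^E)$. Computing $m_\rho$ then yields an equilibrium value, and equilibrium values are known to be hard to compute (PPAD-hard to find, and related quantities NP-hard by Gilboa--Zemel). Bit-by-bit queries on modified games would then pin down an equilibrium.
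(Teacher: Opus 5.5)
\textbf{There is a genuine gap: the reduction itself is never built.} Your preliminary steps are sound. The one-state embedding gives $\rho_\pi(s_0,\cdot)=x\otimes y$, $m_\rho(G,0)=0$ trivially, and for $\epsilon_\rho>0$ the value vanishes exactly when two distinct equilibria lie at product distance $\epsilon_\rho$. But the encoding step you flag as the main obstacle is never carried out, and the gadget you sketch is not coherent as stated.

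\begin{itemize}
\item $\operatorname{NashGap}$ depends only on the profile. So a profile at distance exactly $\epsilon_\rho$ from two equilibria with different $u_1$-values would have to take two different values.
\item By your own observation, the prescribed affine function must vanish whenever another equilibrium lies on the $\epsilon_\rho$-sphere.
\item A polynomial-time gadget would have to enforce this rigidity for every input game without knowing where its equilibria are. Appending it also changes the very equilibrium set you want to read off.
\end{itemize}

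Even granting such a gadget, three further problems remain. The oracle returns a minimum over all equilibria, i.e.\ an extremal value, not the value of a particular equilibrium you can track. You give no argument that successive ``bit-by-bit'' queries on different modified games concern a common equilibrium. And a payoff value does not feed your final LP step, which needs a support. In a general-sum game, passing from equilibrium payoffs to strategies involves bilinear complementarity conditions, not a linear program.

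\textbf{The missing idea.} The paper (Algorithm~\ref{alg:lower-bound-reduction}) extracts a \emph{support} by comparing oracle values across controlled modifications of the game, rather than by encoding a value.
\begin{itemize}
\item Fix $\epsilon_\rho$ in terms of the target precision and record the baseline $s=m_\rho(G,\epsilon_\rho)$.
\item For each action $i$ of player~1 (then of player~2), replace its payoff row by a constant $K$ strictly below every entry. Then $i$ becomes strictly dominated and leaves every equilibrium support, while equilibria that never play $i$ remain equilibria.
\item Query the oracle on the modified game. If the value exceeds $s$, put $i$ in the support; otherwise keep the modification.
\end{itemize}
After one pass over each player's actions (polynomially many calls) you hold supports $(\nu_1,\nu_2)$. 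An ($\epsilon$-)equilibrium with these supports is then recovered from the linear feasibility program of Lemma~\ref{lem:PPAD-eps-support}, so hardness of finding equilibria transfers to computing $m_\rho$.

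To repair your proof, replace the value-encoding gadget with such a comparison-based support search. The step that carries the weight, and needs a careful argument (the paper treats it only briefly), is this: deleting an action leaves the oracle value unchanged exactly when some equilibrium attaining the minimum in $m_\rho$ avoids that action. This is delicate. As your own observation shows, $m_\rho$ depends on fine geometry at distance exactly $\epsilon_\rho$, and making an action dominated can both create new equilibria and lower the Nash gap of nearby profiles.
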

\begin{proof}[Proof outline]
    If computing this lower bound can be done efficiently, then it is also efficient to compute the support of a Nash equilibrium in a general bimatrix game. As we prove, the latter is PPAD-hard, so is our problem.

    In the proof, we provide a polynomial reduction to the problem of computing a support of an approximate Nash equilibrium. Then, we show that this can be polynomially reduced to the problem of computing a Nash equilibrium itself.
    
    This concludes the proof as finding an $\epsilon$-Nash equilibrium is a PPAD-complete problem \citep{chen2007settlingcomplexitycomputingtwoplayer}. See Appendix~\ref{proof-ppad-tight-bound} for the formal proof.
\end{proof}

This theorem tells us that \textit{evaluating} the best-case Nash gap for a given $\epsilon_{\rho}$ is not a tractable problem even when the game is fully known. This observation on the specific case of bimatrix games naturally extends to (infinite) repeated games and allows us to derive the following corollary on a larger class of Markov games.

\begin{restatable}{corollary}{corPPADTightBound}
    \label{cor:PPAD-tight-bound}
    The problem of computing $m_{\rho}(G, \epsilon_{\rho})$ given an arbitrary Markov game $G$ and an arbitrary $\epsilon_{\rho} \in \mathbb{R_+}$ is PPAD-hard.
\end{restatable}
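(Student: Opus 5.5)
The plan is a polynomial-time reduction from the bimatrix case of Theorem~\ref{thm:PPAD-tight-bound}. Given an arbitrary bimatrix game $G_{\text{bi}}$ with payoff matrices $A, B \in [-1,1]^{m_1 \times m_2}$ and a target error $\epsilon_{\rho}$, I will build a Markov game $G$ that is the infinitely repeated version of $G_{\text{bi}}$. It has a single state $\mathcal{S} = \{s\}$, action sets $\mathcal{A}_1 = [m_1]$ and $\mathcal{A}_2 = [m_2]$, and the trivial transition $P(s \mid s, a) = 1$. The rewards are $r_1(s,a) = A_{a_1 a_2}$ and $r_2(s,a) = B_{a_1 a_2}$, with $\nu_0 = \delta_s$ and an arbitrary fixed $\gamma \in [0,1)$ (for instance $\gamma = 1/2$). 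The construction is clearly polynomial in the size of $G_{\text{bi}}$. Since Markov games contain such single-state instances, an efficient algorithm for $m_{\rho}(G, \cdot)$ would then yield an efficient algorithm for $m_{\rho}(G_{\text{bi}}, \cdot)$.

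The key step is to show that $m_{\rho}(G, \epsilon_{\rho})$ equals $m_{\rho}(G_{\text{bi}}, \epsilon_{\rho})$ up to an explicit, efficiently computable factor. With a single absorbing state, $\mu_{\pi}(s) = 1$ for every $\pi$, so $\rho_{\pi}(s, \cdot) = \pi_1(\cdot \mid s) \times \pi_2(\cdot \mid s)$. Stationary product policies of $G$ are therefore in bijection with mixed strategy profiles $(x,y)$ of $G_{\text{bi}}$. Under this bijection $\lVert \rho_{\pi} - \rho_{\pi'} \rVert_1 = \lVert x \otimes y - x' \otimes y' \rVert_1$, which is exactly the occupancy error used for the bimatrix instance; I take the bimatrix $m_{\rho}$ to be defined with the one-shot joint distribution, which is how the paper treats it. For values, $V_i^{\pi}(s) = u_i(x,y)/(1-\gamma)$, where $u_i$ is the bimatrix expected payoff.

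Next I need best responses. The MDP induced for player $i$ by the stationary $\pi_{-i}$ has a single state, so some stationary best response exists and its value is $\max_{a_i} u_i(a_i, \pi_{-i})/(1-\gamma)$. Hence $\operatorname{NashGap}_G(\pi) = \operatorname{NashGap}_{G_{\text{bi}}}(x,y)/(1-\gamma)$. In particular, the Nash equilibria of $G$ (taken over $\Pi$, the stationary product policies) correspond exactly to those of $G_{\text{bi}}$. Both minimizations in the definition of $m_{\rho}$ therefore transfer, giving $m_{\rho}(G, \epsilon_{\rho}) = m_{\rho}(G_{\text{bi}}, \epsilon_{\rho})/(1-\gamma)$. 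Multiplying by $(1-\gamma)$ recovers the bimatrix quantity, and Theorem~\ref{thm:PPAD-tight-bound} gives PPAD-hardness.

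The main obstacle is the best-response step: a deviating player could in principle use non-stationary, history-dependent policies in the repeated game. Because the opponent is stationary and the state never changes, the deviator faces the same one-shot problem at every step. Its per-step expected reward is therefore at most $\max_{a_i} u_i(a_i,\pi_{-i})$, so richer deviations gain nothing. The second point to check is that the reduction preserves the exact definition of $m_{\rho}$, including the ``$=\epsilon_{\rho}$'' constraint in $\mathcal{M}_{\epsilon_{\rho}}$. This holds because the occupancy norms coincide exactly rather than only up to a constant, so $\epsilon_{\rho}$ is passed through unchanged.
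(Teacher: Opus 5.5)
Your proposal is correct and follows the paper's route. The paper justifies the corollary only by remarking that the bimatrix hardness of Theorem~\ref{thm:PPAD-tight-bound} ``naturally extends to (infinite) repeated games'', and your single-state absorbing construction (with $\mu_{\pi}(s)=1$, $\rho_{\pi}=\pi_1\times\pi_2$, and $m_{\rho}(G,\epsilon_{\rho}) = m_{\rho}(G_{\text{bi}},\epsilon_{\rho})/(1-\gamma)$) makes that argument explicit. One small detail is missing: Theorem~\ref{thm:PPAD-tight-bound} covers arbitrary bimatrix payoffs while Markov game rewards must lie in $[-1,1]$, so you should first rescale both payoff matrices by a common positive constant, which leaves equilibria and occupancy measures unchanged and scales $m_{\rho}$ by that same constant.
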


We further note that the hardness results do not imply the impossibility of deriving analytical bounds, for example involving min-max optimization problems (that are known to be hard to compute \citep{daskalakis2020complexityconstrainedminmaxoptimization}). However, they imply that finding bounds with polynomial computation time is no easier than finding the Nash equilibria themselves, even if the game is known. This makes any potential tight lower bound on the Nash gap \textit{intractable}.

\section{Tractable and Consistent Exploitability Upper Bounds from Best-Response Continuity}\label{sec:positive-results}

In the previous section, we worked on a lower bound to understand the best possible Nash gap that we can hope to achieve from given approximation errors. In this section, we study and characterize worst-case exploitability guarantees with a new notion of best-response continuity.

For general $n$-player Markov Games, the worst-case Value gap for a given BC error (\Eqref{def:bc-error}) is given by a \textit{consistent} and \textit{tractable} uniform bound: $\operatorname{ValueGap}(\pi) \leq n\epsilon_{\text{BC}}/(1-\gamma)^2$. However, the exploitation nature of the Nash gap makes it impossible to derive such a bound for a fixed error term. We reveal how this phenomenon, not present in SA-IL, is characterized by a form of best-response continuity, leading to game-dependent bounds.

\subsection{Characterization of Markov Games via Best-Response delta-continuity}

We introduce below a notion of continuity of the best-response mapping of Markov Games that will allow us to derive new general upper bounds on the Nash gap in the next subsections.

\begin{definition}[$\delta$-continuity of the best-response correspondence]\label{def:delta-continuity}
    For a given game $G$, the best-response mapping is said to be $\delta$-continuous at equilibrium $\pi^E$ for some $\delta:\mathbb{R}^+ \to [0, 2]$ if for all $i \in [n]$, and $\epsilon > 0$, we have:
    $$
    \mathbb{E}_{s \sim \mu_{\pi^E}}\left[\left\lVert\pi_{-i}(\cdot | s) - \pi^E_{-i}(\cdot | s)\right\rVert_{1}\right] \leq \epsilon \implies \max_{\pi^*_i \in \operatorname{BR}_i(\pi_{-i})} \mathbb{E}_{s \sim \mu_{\pi^E}}\left[\lVert \pi_i^{*}(\cdot | s) - \pi_i^E(\cdot | s) \rVert_1\right] \leq \delta(\epsilon).
    $$
\end{definition}

This is a notion related to the maximal change over all $i \in [n]$ of the optimal policy $\pi^E_i \in \operatorname{BR}_i(\pi^E_{-i})$, when the induced MDP for player $i$ is produced by $\pi_{-i}$ instead of the equilibrium behavior $\pi^E_{-i}$. This continuity will be used below to reduce the complexity of computing Nash gap upper bounds to computing a valid $\delta$. This definition is naturally extended for a class of games as follows.
\begin{definition}[$\delta$-continuity of a class of games]
    A class $\mathcal{C}$ of Markov Games is $\delta$-continuous if every game $G \in \mathcal{C}$ is $\delta$-continuous at all its Nash equilibria.
\end{definition}

Provided with a $\delta$-continuous class of games, we will therefore be able to derive bounds without assuming a specific game. However, we note that even for the class of games with a \textit{consistent} bound as presented in Section~\ref{sec:perfect-learners}, we cannot guarantee more than the trivial $\delta(\epsilon) = 2$ for $\epsilon > 0$.
\begin{lemma}\label{lem:trivial-delta}
    Let $\mathcal{C}$ be the class of games with \textit{consistent} bounds. This class is $\delta$-continuous only for trivial $\delta$ such that $\delta(\epsilon) = 2$ for all $\epsilon > 0$.
\end{lemma}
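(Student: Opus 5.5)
The class $\mathcal{C}$ of games with \emph{consistent} bounds should be read through Section~\ref{sec:perfect-learners}: it contains at least the games in which exact state-action matching under full-state support recovers a Nash equilibrium. By Theorem~\ref{thm:rho-matching-support} and its corollary, this includes every game whose expert equilibrium has full-state support. In particular, every single-state bimatrix game (viewed as a repeated game with $\nu_0$ a point mass) belongs to $\mathcal{C}$.

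The plan is therefore to exhibit, for every $\epsilon>0$, a game $G_\epsilon\in\mathcal{C}$ with a Nash equilibrium $\pi^E$ and a perturbation $\pi_{-i}$ satisfying
\[
\mathbb{E}_{s\sim\mu_{\pi^E}}\lVert\pi_{-i}(\cdot|s)-\pi^E_{-i}(\cdot|s)\rVert_1\le\epsilon ,
\]
but with some best response $\pi_i^*\in\operatorname{BR}_i(\pi_{-i})$ satisfying $\lVert\pi_i^*-\pi_i^E\rVert_1=2$. Since $\delta$ must work uniformly over the class and $\delta$ takes values in $[0,2]$, this forces $\delta(\epsilon)=2$ for every $\epsilon>0$.

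\textbf{Construction.} Take a single-state, two-player game with $\mathcal{A}_1=\mathcal{A}_2=\{a_1,a_2\}$, where player~1's reward depends on player~2's action as follows. Against $a_1$, the two actions of player~1 differ in payoff by only a tiny margin $\eta$. Against $a_2$, that preference is strongly reversed. For concreteness, set
\[
r_1(a_1,a_1)=\eta,\quad r_1(a_2,a_1)=0,\quad r_1(a_1,a_2)=-1,\quad r_1(a_2,a_2)=1,
\]
and let player~2 have a strictly dominant action $a_1$ (for example $r_2=\mathbb{1}[a^2=a_1]$). The unique Nash equilibrium is the pure profile $(a_1,a_1)$, and it has full-state support, so the game lies in $\mathcal{C}$.

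Now perturb player~2 to $\pi_2=(1-\epsilon/2)a_1+(\epsilon/2)a_2$, which has $\ell_1$ distance exactly $\epsilon$ from $\pi^E_2$. Player~1's expected advantage of $a_2$ over $a_1$ becomes
\[
-(1-\epsilon/2)\,\eta+(\epsilon/2)\cdot 2 .
\]
Choosing $\eta<\epsilon/(1-\epsilon/2)$, for instance $\eta=\epsilon/2$, makes this advantage strictly positive. The unique best response is then the pure action $a_2$, which lies at distance $2$ from $\pi^E_1=a_1$. As a single-state game, the expectation over $\mu_{\pi^E}$ is trivial, and since the game is static, the stationary best response is exactly the stage best response.

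\textbf{Main obstacle.} The delicate step is justifying membership in $\mathcal{C}$, since the paper defines ``consistent'' only informally. I would argue it via the corollary to Theorem~\ref{thm:rho-matching-support}: full-state support plus $\epsilon_\rho=0$ gives $\operatorname{NashGap}=0$. Strict dominance for player~2 and the uniqueness of the equilibrium remove any ambiguity about which equilibrium is the expert.

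I would also remark that $\delta$-continuity is required at all equilibria of every game in the class, and the family $\{G_\epsilon\}$ ranges over $\epsilon$. So no single $\delta$ with $\delta(\epsilon_0)<2$ for some $\epsilon_0>0$ can satisfy Definition~\ref{def:delta-continuity} on $G_{\epsilon_0}$. Finally, the trivial choice $\delta\equiv2$ always works, since $\ell_1$ distances between distributions are at most $2$. This yields the lemma.
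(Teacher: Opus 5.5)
Your proof is correct, but it uses a different mechanism from the paper's. The paper builds a dynamic game. At $s_0$ player~1 chooses between a chain $M_k$ and an absorbing state $s_{\text{exp}}$, and $k$ is tuned to $\epsilon$ and $\gamma$ so that $\mu_{\pi^E}(s_{\text{exp}})\le\epsilon/2$. Perturbing player~2 only at $s_{\text{exp}}$ then costs at most $\epsilon$ in BC error, yet it makes rushing to $s_{\text{exp}}$ optimal for player~1. So the discontinuity there comes from distribution shift: the perturbation is cheap under $\mu_{\pi^E}$ but sits at a high-leverage state, while the expert's payoff margin at $s_0$ is large. Since $\mu_{\pi^E}(s_0)=1-\gamma$, reaching distance exactly $2$ also requires player~1 to be indifferent at every other state. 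The paper then uses the $\max$ over $\operatorname{BR}_1(\pi_2)$ to pick the best response that plays $a^r_2$ everywhere.

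Your single-state game instead locates the discontinuity in a vanishing payoff margin $\eta=\epsilon/2$ at the expert equilibrium. There are no dynamics and no dependence on $\gamma$. The deviating best response is strict and unique, so you rely neither on ties nor on the $\max$ in Definition~\ref{def:delta-continuity}. Yours is the more elementary witness and isolates near-indifference as the cause. The paper's shows that a large margin at the decision state does not protect you, because BC error is measured under $\mu_{\pi^E}$ and a rarely visited state can carry the perturbation.

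Two small remarks. First, your parametrization needs $\epsilon\le 2$, both for $\pi_2$ to be a distribution and for $\eta\le 1$. For $\epsilon>2$ you can simply reuse $G_2$, since the hypothesis of the implication is an inequality. Second, in a single-state game $\epsilon_\mu\equiv 0$ for every policy, so membership in $\mathcal{C}$ has to be read with respect to $\epsilon_{\text{BC}}$ or $\epsilon_\rho$, as you implicitly do. This is not a gap relative to the paper. The paper claims $\epsilon_{\text{BC}}=0\Leftrightarrow\epsilon_\mu=0$ for its own game, but that also fails there: its transitions depend only on player~1's action at $s_0$, so $\epsilon_\mu=0$ does not constrain $\pi_2$.
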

\begin{proof}
    Suppose $\mathcal{C}$ is $\delta$-continuous. For every $\epsilon > 0$, $\gamma \in (0, 1)$, we show that there exists a two-player game $G \in \mathcal{C}$ with Nash equilibrium $\pi^E$ where $\epsilon_{\text{BC}} \leq \epsilon$ can incur $\mathbb{E}_{s \sim \mu_{\pi^E}}\left[\left\lVert\pi^*_1(\cdot | s) - \pi^E_1(\cdot | s)\right\rVert_1\right] = 2$ for some $\pi^*_{1} \in \operatorname{BR}_1(\pi_2)$.

    We let $M_k$ be a chain of $k =\left\lceil\frac{\log(\frac{\epsilon}{2}(1-\gamma))}{\log(\gamma)}\right\rceil$ consecutive states and define its transitions as follows.
    
    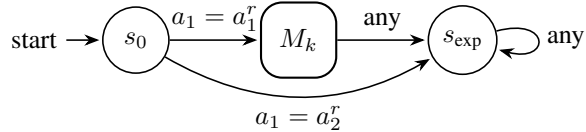
\begin{figure}[h]
        \centering
        \begin{tikzpicture}[
            > = Stealth, %
            shorten > = 1pt, %
            auto,
            node distance = 1.2cm, %
            semithick, %
            initial text={start},
            group/.style={
                rectangle, rounded corners=8pt, draw, thick, 
                minimum size=1cm
            }
        ]
        
            \node[state, initial] (s0) {$s_0$};
            \node[group] (s1) [right=of s0] {$M_k$};
            \node[state] (s2) [right=of s1] {$s_{\text{exp}}$};
            
            \draw[->] (s0) -- node[above] {$a_1 = a^r_1$} (s1);
            \draw[->] (s1) -- node[above] {any} (s2);
            \draw[->] (s0) to[bend right=30] node[below] {$a_1 = a^r_2$} (s2);
            \draw[->] (s2) to[loop right] node {any} (s2);
        \end{tikzpicture}
        \caption{Deterministic transition dynamics of a two-player game $G$, with states $s_0$, $s_{\text{exp}}$ and sub-Markov chain $M_k$. Player 1 has action space $\mathcal{A}_1 = \{a^r_1, a^r_2\}$ and player 2 has action space $\mathcal{A}_2 = \{a^c_1, a^c_2\}$.}
        \label{fig:br-disc-proof}
    \end{figure}

    The rewards $r_1, r_2$ are independent of $a_1$ and defined as:
    \begin{equation*}
        r_1(\cdot, a^c_1, s_{\text{exp}}) = 1, \qquad r_2(\cdot, a^c_2, s_{\text{exp}}) = 1, \qquad r_1(\cdot, a^c_2, s_{\text{exp}}) = -1,
    \end{equation*}
    and $r_1 = r_2 = 0$ otherwise.

    A Nash equilibrium of this game is the constant policy $\pi^E((a^r_1, a^c_2)|s) = 1$ for all $s \in \mathcal{S}$. The expert is such that $\mu_{\pi^E}(s_{\text{exp}}) \leq \epsilon/2$. Therefore, the policy $\pi((a_1, a_2)| s) = \pi^E((a_1, a_2)| s)$ if $s \neq s_{\text{exp}}$ and $\pi((a^r_1, a^c_1)| s_{\text{exp}}) = 1$, has BC error at most $\epsilon$.

    A best-response to $\pi_2$ is the constant policy $\pi^*_1(a^r_2|s) = 1$ for all $s \in \mathcal{S}$ which incurs $\mathbb{E}_{s \sim \mu_{\pi^E}}\left[\left\lVert\pi^*_1(\cdot | s) - \pi^E_1(\cdot | s)\right\rVert_1\right] = 2$. Note that $G$ has a \textit{consistent} bound for any error assumption ($\epsilon_{\text{BC}} = 0 \Leftrightarrow \epsilon_{\mu} = 0 \Leftrightarrow \epsilon_{\rho} = 0$ for $G$, and $\mathcal{S}^+_{\pi^E} = \mathcal{S}$).
\end{proof}

\subsection{Provable convergence under strategic dominance}

The previous section showed that even for the class of games with consistent bounds, we cannot do better than the trivial $\delta$-continuity where $\delta(\epsilon) = 2$ for $\epsilon > 0$. We study in this section the other extreme for $\delta$: the constant $\delta(\cdot) = 0$, before studying the general case in the next section.

In fact, we show that this special case corresponds to the class of Dominant Strategy Equilibria, for which we can thus derive \textit{consistent} and \textit{tractable} exploitability upper bounds. %
\begin{definition}[Dominant Strategy Equilibrium (DSE)]
    A policy $\pi^E$ is a (weak) dominant strategy equilibrium if for every player $i$, $\pi^E_i$ is a weak dominant strategy, i.e.:
    $$
    V_i^{\pi^E_i, \pi_{-i}}(\nu_0) \geq V_i^{\pi_i, \pi_{-i}}(\nu_0) \quad \forall \pi \in \Pi.
    $$
\end{definition}
Note that the key property induced by the DSE assumption is that $\pi^E_i$ is a best-response policy to any deviations $\pi_{-i}$ for every player $i$, which corresponds to $\delta$-continuity of the best-response for $\delta(\cdot) = 0$. This allows us to derive the following \textit{consistent} and \textit{tractable} upper bound on the Nash gap.

\begin{restatable}{lemma}{lemDSENashGap}
    \label{lem:dse-nash-gap}
    Suppose $\pi^E$ is a (weak) Dominant Strategy Equilibrium. Then, any learned policy $\pi$ with BC error $\epsilon_{\text{BC}}$ satisfies $\operatorname{NashGap}(\pi) \leq 2n\epsilon_{\text{BC}}/(1-\gamma)^2$.
\end{restatable}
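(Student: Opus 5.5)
The plan is to decompose the Nash gap for each player $i$ into two value differences, both controlled by the Performance Difference Lemma bound already used for the Value gap in the footnote of Section~\ref{sec:preliminaries}. Fix $i$ and a best response $\pi^*_i \in \operatorname{BR}_i(\pi_{-i})$. Because $\pi^E$ is a weak DSE, $\pi^E_i$ is a best response to every $\pi_{-i}$, so $V_i^{\pi^*_i,\pi_{-i}}(\nu_0) = V_i^{\pi^E_i,\pi_{-i}}(\nu_0)$. Hence
\begin{equation*}
V_i^{\pi^*_i,\pi_{-i}}(\nu_0) - V_i^{\pi}(\nu_0) = \left(V_i^{\pi^E_i,\pi_{-i}}(\nu_0) - V_i^{\pi^E}(\nu_0)\right) + \left(V_i^{\pi^E}(\nu_0) - V_i^{\pi}(\nu_0)\right).
\end{equation*}
The whole argument therefore reduces to comparing product policies that differ in at most $n$ coordinates from $\pi^E$, with the discrepancy measured under $\mu_{\pi^E}$.

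For the second term I will apply the value-gap bound $V_i^{\pi^E}(\nu_0) - V_i^{\pi}(\nu_0) \leq n\epsilon_{\text{BC}}/(1-\gamma)^2$. To make this self-contained, I would write the Performance Difference Lemma with the expectation taken over the occupancy of $\pi^E$:
\begin{equation*}
V_i^{\pi^E}(\nu_0) - V_i^{\pi}(\nu_0) = \frac{1}{1-\gamma}\mathbb{E}_{s\sim\mu_{\pi^E}}\left[\langle \pi^E(\cdot|s) - \pi(\cdot|s), Q_i^{\pi}(s,\cdot)\rangle\right].
\end{equation*}
I then bound the inner product by $\|Q_i^\pi\|_\infty \|\pi^E(\cdot|s)-\pi(\cdot|s)\|_1 \leq \frac{1}{1-\gamma}\sum_j \|\pi^E_j(\cdot|s)-\pi_j(\cdot|s)\|_1$. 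Here I use the standard telescoping bound of the total variation of a product distribution by the sum of the marginal distances. Taking expectations and using the definition of $\epsilon_{\text{BC}}$ gives $n\epsilon_{\text{BC}}/(1-\gamma)^2$.

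For the first term I use the same lemma with $\pi' = (\pi^E_i,\pi_{-i})$ in place of $\pi$. It again puts the expectation under $\mu_{\pi^E}$, which is exactly where the BC error is measured. Now only the $n-1$ coordinates $j\neq i$ differ, so the same computation bounds this term by $(n-1)\epsilon_{\text{BC}}/(1-\gamma)^2 \leq n\epsilon_{\text{BC}}/(1-\gamma)^2$. Summing the two terms and maximizing over $i$ yields $\operatorname{NashGap}(\pi)\leq 2n\epsilon_{\text{BC}}/(1-\gamma)^2$.

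The main point needing care is the orientation of the Performance Difference Lemma. The expectation must be under $\mu_{\pi^E}$, with the advantage or $Q$-function of the other policy. If the lemma were applied the other way round, the expectation would fall under $\mu_{\pi}$ or $\mu_{\pi^E_i,\pi_{-i}}$, and BC error says nothing about those distributions. This is precisely the distribution-shift issue behind the failures in Section~\ref{sec:perfect-learners}. With the $\pi^E$-occupancy form, the only remaining ingredients are the uniform bound $|Q_i|\leq 1/(1-\gamma)$ and the product-TV inequality, both routine.
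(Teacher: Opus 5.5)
Your proposal is correct and follows the paper's proof. In both, the DSE property replaces the best response by $\pi^E_i$, $V_i^{\pi^E}(\nu_0)$ is added and subtracted, the Performance Difference Lemma is applied twice with the expectation under $\mu_{\pi^E}$, $|Q_i|\leq 1/(1-\gamma)$ is used, and the product-$\ell_1$ inequality (Lemma~\ref{lem:l1-norm-factored-bound}) yields $(2n-1)\epsilon_{\text{BC}}/(1-\gamma)^2 \leq 2n\epsilon_{\text{BC}}/(1-\gamma)^2$. Your choice to fix $i$ and bound each term in absolute value before maximizing is slightly cleaner than the paper's step of splitting the maximum across the two terms.
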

\begin{proof}[Proof outline] We leverage the fact that the Dominant Strategy Equilibrium assumption removes the ambiguity in how far the best-response of any player is from its individual expert policy, i.e. we have:
    \begin{equation}\label{eq:nash-gap-dse}
        \operatorname{NashGap}(\pi) = \max_{i} \left[V_i^{\pi^E_i, \pi_{-i}}(\nu_0) - V_i^{\pi_i, \pi_{-i}}(\nu_0)\right].
    \end{equation}

    Using \Eqref{eq:nash-gap-dse}, we can add and subtract the expert value $V_i^{\pi^E}(\nu_0)$ for every $i \in [n]$ and apply the performance difference lemma (see e.g. \citet[Lemma~1]{xiao2022convergence}) twice to get:
    $$
    \operatorname{NashGap}(\pi) \leq \frac{1}{(1-\gamma)^2} \cdot \max_{i} \mathbb{E}_{s \sim \mu_{\pi^E}}\left[\left\lVert \pi_{-i}(\cdot | s) - \pi^E_{-i}(\cdot | s)\right\rVert_1 + \left\lVert \pi(\cdot | s) - \pi^E(\cdot | s)\right\rVert_1\right].
    $$

    We conclude the proof using the definition of the BC error and the fact that both $\pi$ and $\pi^E$ are product policies. The formal proof is deferred to Appendix~\ref{proof:dse-nash-gap}
\end{proof}
Note that when $n = 2$, this recovers the upper bound of \citet{freihaut2025learningequilibriadataprovably} with a fixed BC error.

Providing such an upper bound allows assessing an imitation policy a posteriori, given a specific BC error. As a bound polynomial in the game parameters, we consider it to be \textit{tractable}. Alternatively, we can interpret the bound as a criterion on the BC error to ensure a fixed Nash approximation error.
\begin{corollary}
    Suppose $\pi^E$ is a (weak) Dominant Strategy Equilibrium; then, the recovered behavioral cloning policy $\pi$ is an $\epsilon$-Nash equilibrium if $\epsilon_{\text{BC}} \leq \frac{\epsilon(1-\gamma)^2}{2n}$.
\end{corollary}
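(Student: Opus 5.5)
This corollary follows directly from Lemma~\ref{lem:dse-nash-gap}. Recall that an $\epsilon$-Nash equilibrium is any product policy $\pi_\epsilon \in \Pi$ with $\operatorname{NashGap}(\pi_\epsilon) \leq \epsilon$. So the plan is to check that the behavioral cloning policy is a product policy and then compose the lemma's bound with the assumed threshold on $\epsilon_{\text{BC}}$.

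\textbf{Step 1: product structure.} Behavioral cloning fits each individual policy $\pi_i$ separately, so the recovered joint policy is $\pi = \pi_1 \times \dots \times \pi_n \in \Pi$. As noted after Definition~\ref{def:nash-gap}, this is exactly what is needed for $\operatorname{NashGap}(\pi)$ to be well defined and to coincide with the $\epsilon$-Nash criterion.

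\textbf{Step 2: apply the lemma.} Since $\pi^E$ is a weak Dominant Strategy Equilibrium, Lemma~\ref{lem:dse-nash-gap} gives $\operatorname{NashGap}(\pi) \leq 2n\epsilon_{\text{BC}}/(1-\gamma)^2$.

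\textbf{Step 3: substitute the threshold.} Using $\epsilon_{\text{BC}} \leq \epsilon(1-\gamma)^2/(2n)$, we obtain
$$
\operatorname{NashGap}(\pi) \leq \frac{2n}{(1-\gamma)^2}\cdot\frac{\epsilon(1-\gamma)^2}{2n} = \epsilon .
$$
Hence $\pi$ is an $\epsilon$-Nash equilibrium.

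There is no substantive obstacle here. The only point requiring care is Step 1, because the Nash gap and the $\epsilon$-Nash definition are stated only for product policies. All the game-theoretic content is already contained in Lemma~\ref{lem:dse-nash-gap}.
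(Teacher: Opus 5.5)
Your proposal is correct and matches the paper's argument. The paper also obtains the corollary by substituting the threshold on $\epsilon_{\text{BC}}$ into the bound $\operatorname{NashGap}(\pi) \leq 2n\epsilon_{\text{BC}}/(1-\gamma)^2$ of Lemma~\ref{lem:dse-nash-gap}, and your check that the BC policy is a product policy corresponds to a remark the paper makes in its preliminaries.
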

\begin{proof}
    This is a direct consequence of Lemma~\ref{lem:dse-nash-gap} derived by inverting the Nash gap bound.
\end{proof}

\subsection{Relaxing the dominance assumption}\label{sec:delta-continuity}

Using a similar proof technique, we extend Lemma~\ref{lem:dse-nash-gap} by assuming general best-response $\delta$-continuity. This is demonstrated in the following lemma which we prove in Appendix~\ref{proof:gen-dse-nash-gap}.

\begin{restatable}{lemma}{lemGenDSENashGap}
    \label{lem:gen-dse-nash-gap}
    Suppose the equilibrium expert is $\pi^E$ and the game is $\delta$-continuous at $\pi^E$. Then, $\operatorname{NashGap}(\pi) \leq \frac{2n\epsilon_{\text{BC}} + \delta(\epsilon_{\text{BC}})}{(1-\gamma)^2}$.
\end{restatable}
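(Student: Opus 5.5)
We follow the same route as the proof of Lemma~\ref{lem:dse-nash-gap}. The only change is that the deviating player's best response is no longer the expert's own policy; it is merely close to it in the sense of Definition~\ref{def:delta-continuity}.

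\textbf{Step 1: fix the maximizing player and split the gap.} Let $i$ attain the maximum in \eqref{def:nash-gap}, and pick $\pi^*_i \in \operatorname{BR}_i(\pi_{-i})$. Add and subtract $V_i^{\pi^E_i,\pi_{-i}}(\nu_0)$ and $V_i^{\pi^E}(\nu_0)$ to write
\begin{align*}
\operatorname{NashGap}(\pi) &= \bigl[V_i^{\pi^*_i,\pi_{-i}}(\nu_0) - V_i^{\pi^E_i,\pi_{-i}}(\nu_0)\bigr] + \bigl[V_i^{\pi^E_i,\pi_{-i}}(\nu_0) - V_i^{\pi^E}(\nu_0)\bigr] \\
&\quad + \bigl[V_i^{\pi^E}(\nu_0) - V_i^{\pi}(\nu_0)\bigr].
\end{align*}

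\textbf{Step 2: bound the last two brackets exactly as in Lemma~\ref{lem:dse-nash-gap}.} Apply the performance difference lemma with $\pi^E$ as the reference policy, so that the expectation is taken over $\mu_{\pi^E}$. Bound the advantages by $\frac{2}{1-\gamma}$ times an $\ell_1$ difference. This gives
\[
\frac{1}{(1-\gamma)^2}\,\mathbb{E}_{s\sim\mu_{\pi^E}}\bigl[\lVert\pi_{-i}-\pi^E_{-i}\rVert_1 + \lVert\pi-\pi^E\rVert_1\bigr].
\]
Because both $\pi$ and $\pi^E$ are product policies, the product $\ell_1$ norm is subadditive across coordinates. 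This yields at most $(n-1)\epsilon_{\text{BC}} + n\epsilon_{\text{BC}} \le 2n\epsilon_{\text{BC}}$ inside the bracket. The same subadditivity gives $\mathbb{E}_{s\sim\mu_{\pi^E}}\lVert\pi_{-i}-\pi^E_{-i}\rVert_1 \le (n-1)\epsilon_{\text{BC}}$.

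\textbf{Step 3: the new term, and the main obstacle.} We need to control $V_i^{\pi^*_i,\pi_{-i}} - V_i^{\pi^E_i,\pi_{-i}}$. Here the two joint policies differ only in player $i$'s component, and that difference is measured under $\mu_{\pi^E}$ via $\delta$-continuity. The natural move is again the performance difference lemma, applied in the MDP induced by $\pi_{-i}$ with $\pi^*_i\times\pi_{-i}$ as the rolled-out policy. However, the resulting expectation is over $\mu_{\pi^*_i,\pi_{-i}}$, not $\mu_{\pi^E}$.

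I would first try to route the comparison through $\pi^E$. That is, bound each of $V_i^{\pi^*_i,\pi_{-i}}$ and $V_i^{\pi^E_i,\pi_{-i}}$ against the values with $\pi^E$-weighted occupancy, using PDL with $\pi^E$ as reference. The resulting state-wise $\ell_1$ terms are $\lVert\pi^*_i\times\pi_{-i} - \pi^E\rVert_1$ and $\lVert \pi^E_i\times\pi_{-i}-\pi^E\rVert_1$. By subadditivity these contribute $\delta(\cdot)$ plus BC-type terms, which are absorbed into the $2n\epsilon_{\text{BC}}$ budget.

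One technical point is which argument to feed $\delta$. The hypothesis of Definition~\ref{def:delta-continuity} is stated in terms of $\pi_{-i}$ and is at most $(n-1)\epsilon_{\text{BC}}$. To obtain $\delta(\epsilon_{\text{BC}})$ as stated, I would either read $\epsilon_{\text{BC}}$ as already bounding the joint opponent deviation, or note that $\delta$ may be taken monotone and adjust constants. This is the step I expect to require the most care, both in the bookkeeping that makes the constant come out as exactly $2n\epsilon_{\text{BC}} + \delta(\epsilon_{\text{BC}})$ and in justifying the change of occupancy measure.

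Finally, sum the three bounds over the factor $\frac{1}{(1-\gamma)^2}$ to conclude.
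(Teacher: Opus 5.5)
\textbf{Gap in the constant.} Your core mechanism is the paper's. You apply the performance difference lemma with $\pi^E$ as the rolled-out policy, so every $\ell_1$ term lives under $\mu_{\pi^E}$, and then use Lemma~\ref{lem:l1-norm-factored-bound} and $\delta$-continuity. The gap is in the bookkeeping your three-bracket split forces. Routing the first bracket through $\pi^E$ means writing it as $\bigl[V_i^{\pi^*_i,\pi_{-i}}-V_i^{\pi^E}\bigr]-\bigl[V_i^{\pi^E_i,\pi_{-i}}-V_i^{\pi^E}\bigr]$ (all at $\nu_0$). The subtracted piece is exactly your second bracket. If you bound the pieces separately, as proposed, you pay twice for $\mathbb{E}_{s\sim\mu_{\pi^E}}\lVert\pi^E_i\times\pi_{-i}-\pi^E\rVert_1\le(n-1)\epsilon_{\text{BC}}$, even though the two occurrences cancel exactly. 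You end with $\bigl((4n-3)\epsilon_{\text{BC}}+\delta(\cdot)\bigr)/(1-\gamma)^2$. The extra $2(n-1)\epsilon_{\text{BC}}$ cannot be ``absorbed into the $2n\epsilon_{\text{BC}}$ budget'', because Step~2 already spent $(2n-1)\epsilon_{\text{BC}}$ of it.

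\textbf{The fix.} The paper never introduces $\pi^E_i\times\pi_{-i}$:
\[
\operatorname{NashGap}(\pi)=\bigl[V_i^{\pi^*_i,\pi_{-i}}(\nu_0)-V_i^{\pi^E}(\nu_0)\bigr]+\bigl[V_i^{\pi^E}(\nu_0)-V_i^{\pi}(\nu_0)\bigr].
\]
Write the first bracket as $-\bigl(V_i^{\pi^E}-V_i^{\pi^*_i,\pi_{-i}}\bigr)$ and apply Lemma~\ref{lem:performance-difference-lemma} with $Q_i^{\pi^*_i,\pi_{-i}}$. This puts the expectation under $\rho_{\pi^E}$ directly, so no change of occupancy measure is ever needed. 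The result is $\frac{1}{(1-\gamma)^2}\mathbb{E}_{s\sim\mu_{\pi^E}}\bigl[\lVert(\pi^*_i,\pi_{-i})-\pi^E\rVert_1+\lVert\pi-\pi^E\rVert_1\bigr]\le\frac{\delta(\cdot)+(n-1)\epsilon_{\text{BC}}+n\epsilon_{\text{BC}}}{(1-\gamma)^2}$. Also, the per-state bound is $\bigl\lvert\sum_a Q(s,a)(p(a)-q(a))\bigr\rvert\le\frac{1}{1-\gamma}\lVert p-q\rVert_1$, since $\lVert Q\rVert_\infty\le\frac{1}{1-\gamma}$. Your $\frac{2}{1-\gamma}$ would double every term.

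\textbf{The argument of $\delta$.} Your concern here is well founded, and the paper's proof does not address it; it simply invokes ``our assumption''. By Lemma~\ref{lem:l1-norm-factored-bound}, the premise of Definition~\ref{def:delta-continuity} is only guaranteed with $\epsilon=(n-1)\epsilon_{\text{BC}}$. Since the definition holds for every $\epsilon>0$, this yields $\delta((n-1)\epsilon_{\text{BC}})$ directly, with no monotonicity needed. But neither monotonicity nor adjusting constants turns this into $\delta(\epsilon_{\text{BC}})$ for a general $\delta$. The statement as written is rigorous for $n=2$, where $\pi_{-i}$ is a single player's policy. For $n\ge 3$, either conclude with $\delta((n-1)\epsilon_{\text{BC}})$, or explicitly assume that $\epsilon_{\text{BC}}$ also bounds $\max_i\mathbb{E}_{s\sim\mu_{\pi^E}}\lVert\pi_{-i}-\pi^E_{-i}\rVert_1$.
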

This is a generalization of Lemma~\ref{lem:dse-nash-gap} where the DSE case is recovered by setting $\delta(\cdot) = 0$. A numerical validation of the bound is provided in Appendix~\ref{app:numerical-validation}.

Note that we do not claim that the above bound is tight due to our worse-case guarantee. However, this characterization of the Nash gap with $\delta$-continuity offers several advantages.

\paragraph{Properties of the bound.} Consistence and tractability properties of the bound are directly related to the properties of the $\delta$ function: the bound is \textit{consistent} if $\delta(0) = 0$ and it is \textit{tractable} if $\delta$ itself is tractable. Lemma~\ref{lem:gen-dse-nash-gap} then provides the key insight that deriving exploitability upper bounds reduces to characterizing $\delta$ for the considered game.

\paragraph{On the sensitivity of game equilibria.} In comparison to the previous bound of \citet{freihaut2025learningequilibriadataprovably} obtained via a change of measure, our bound doesn't necessarily become vacuous if strategic deviations on the imitation policy explores regions of the state space not visited by the expert. In fact, $\delta$-continuity of games serves as a spectrum to characterize their sensitivity to perturbations at equilibrium points.

The first extreme case is $\delta(\cdot) = 0$ that corresponds to games with only dominant strategy Nash equilibria. This assumption essentially reduces the Nash gap computation to a Value gap, recovering the known bound $\operatorname{ValueGap}(\pi) \leq n\epsilon_{\text{BC}}/(1-\gamma)^2$ for single-agent behavioral cloning\footnote{The joint policy $\pi$ can be viewed as a single-agent policy with behavioral cloning loss (\Eqref{def:bc-error}) upper bounded by $n\epsilon_{\text{BC}}$ (see Lemma~\ref{lem:l1-norm-factored-bound} in Appendix).} \citep{dagger} up to a constant factor $2$. The other extreme is $\delta$-continuity with the constant $\delta(\cdot) = 2$, reducing to a trivial upper bound greater than $2/(1-\gamma)$ and achieved by the limit of pathological games, as shown with the example given in the proof of Lemma~\ref{lem:trivial-delta}.

Intuitively, a well-behaving $\delta$ can be imposed by regularizing the game, essentially smoothing the best-response map by promoting exploration \citep{ahmed2019understanding, geist2019theory}. Furthermore, we note that large discontinuities in $\delta$ are favored by high variance in the expert rewards, where small deviations from the expert can induce large differences in expected rewards. Similarly, these high variations at the equilibrium can be penalized by risk-aversion \citep{mazumdar2024tractableequilibriumcomputationmarkov}, and we hypothesize that it could lead to the derivation of better $\delta$ functions. We provide additional numerical validations of the impact of entropy regularization on $\delta$-continuity in Appendix~\ref{app:numerical-validation}.

\section{Conclusion}

In this work, we consider the problem of learning a Nash equilibrium from a given dataset of expert demonstrations in a multi-agent system. Assuming a Nash equilibrium expert and a given imitation learning error (BC or measure matching), we study the derivation of both \textit{consistent} and \textit{tractable} guarantees on the Nash gap of the learned policy. In the idealized case of exact measure matching, we demonstrate that only full-state support and state-action matching can guarantee non-trivial Nash gaps. Moving to practical settings, we show how approximation errors introduce challenges that are not present in the single-agent case. For behavioral cloning, we then introduce the notion of delta-continuity related to strategy dominance, and show how this can be used to bound exploitability of the learned policy.

Looking forward, we see reachability assumptions and policy distribution-norms \citep{wei2017online, maillard2014hard} as good candidates for tighter game-dependent bounds. A potential improvement might also be achieved from the data part, by augmenting expert demonstrations with suboptimal trajectories (e.g. in SA-IL \citep{kim2021demodice}), inspired by online IL \citep{dagger, freihaut2025learningequilibriadataprovably} and unilateral deviation assumptions \citep{cui2022offlinetwoplayerzerosummarkov}.

\subsubsection*{Acknowledgments}
This work was supported in part by the National Science Foundation (NSF) under grants number ECCS-2438314 (CAREER Award) and CCF-2423134, the Army Research Laboratory (ARL) under grant number W911NF-26-1-0002, and institutional support from EPFL.

\bibliography{main}

\begin{thebibliography}{50}
\providecommand{\natexlab}[1]{#1}
\providecommand{\url}[1]{\texttt{#1}}
\expandafter\ifx\csname urlstyle\endcsname\relax
  \providecommand{\doi}[1]{doi: #1}\else
  \providecommand{\doi}{doi: \begingroup \urlstyle{rm}\Url}\fi

\bibitem[Abbeel \& Ng(2004)Abbeel and Ng]{abbeel2004apprenticeship}
Pieter Abbeel and Andrew~Y Ng.
\newblock Apprenticeship learning via inverse reinforcement learning.
\newblock In \emph{Proceedings of the twenty-first international conference on Machine learning}, pp.\ ~1, 2004.

\bibitem[Ahmed et~al.(2019)Ahmed, Le~Roux, Norouzi, and Schuurmans]{ahmed2019understanding}
Zafarali Ahmed, Nicolas Le~Roux, Mohammad Norouzi, and Dale Schuurmans.
\newblock Understanding the impact of entropy on policy optimization.
\newblock In \emph{International conference on machine learning}, pp.\  151--160. PMLR, 2019.

\bibitem[Alatur et~al.(2024)Alatur, Barakat, and He]{alatur2024independent}
Pragnya Alatur, Anas Barakat, and Niao He.
\newblock Independent policy mirror descent for markov potential games: Scaling to large number of players.
\newblock In \emph{2024 IEEE 63rd Conference on Decision and Control (CDC)}, pp.\  3883--3888. IEEE, 2024.

\bibitem[Bhattacharyya et~al.(2018)Bhattacharyya, Phillips, Wulfe, Morton, Kuefler, and Kochenderfer]{bhattacharyya2018multi}
Raunak~P Bhattacharyya, Derek~J Phillips, Blake Wulfe, Jeremy Morton, Alex Kuefler, and Mykel~J Kochenderfer.
\newblock Multi-agent imitation learning for driving simulation.
\newblock In \emph{2018 IEEE/RSJ International Conference on Intelligent Robots and Systems (IROS)}, pp.\  1534--1539. IEEE, 2018.

\bibitem[Bogert \& Doshi(2018)Bogert and Doshi]{bogert2018multi}
Kenneth Bogert and Prashant Doshi.
\newblock Multi-robot inverse reinforcement learning under occlusion with estimation of state transitions.
\newblock \emph{Artificial Intelligence}, 263:\penalty0 46--73, 2018.

\bibitem[Cai et~al.(2023)Cai, Luo, Wei, and Zheng]{cai2023uncoupled}
Yang Cai, Haipeng Luo, Chen-Yu Wei, and Weiqiang Zheng.
\newblock Uncoupled and convergent learning in two-player zero-sum markov games with bandit feedback.
\newblock \emph{Advances in Neural Information Processing Systems}, 36:\penalty0 36364--36406, 2023.

\bibitem[Chandra et~al.(2025)Chandra, Karnan, Mehr, Stone, and Biswas]{chandra2025multi}
Rohan Chandra, Haresh Karnan, Negar Mehr, Peter Stone, and Joydeep Biswas.
\newblock Multi-agent inverse reinforcement learning in real world unstructured pedestrian crowds.
\newblock In \emph{2025 IEEE/RSJ International Conference on Intelligent Robots and Systems (IROS)}, pp.\  18668--18675. IEEE, 2025.

\bibitem[Chen et~al.(2009)Chen, Deng, and Teng]{chen2007settlingcomplexitycomputingtwoplayer}
Xi~Chen, Xiaotie Deng, and Shang-Hua Teng.
\newblock Settling the complexity of computing two-player nash equilibria.
\newblock \emph{Journal of the ACM (JACM)}, 56\penalty0 (3):\penalty0 1--57, 2009.

\bibitem[Choi et~al.(2025)Choi, Ryu, Ock, and Mehr]{choi2025craft}
Seoyeon Choi, Kanghyun Ryu, Jonghoon Ock, and Negar Mehr.
\newblock Craft: Coaching reinforcement learning autonomously using foundation models for multi-robot coordination tasks.
\newblock \emph{arXiv preprint arXiv:2509.14380}, 2025.

\bibitem[Conitzer \& Sandholm(2002)Conitzer and Sandholm]{conitzer2002complexity}
Vincent Conitzer and Tuomas Sandholm.
\newblock Complexity results about nash equilibria.
\newblock \emph{CoRR}, cs.GT/0205074, 2002.

\bibitem[Cui \& Du(2022)Cui and Du]{cui2022offlinetwoplayerzerosummarkov}
Qiwen Cui and Simon~S Du.
\newblock When are offline two-player zero-sum markov games solvable?
\newblock \emph{Advances in Neural Information Processing Systems}, 35:\penalty0 25779--25791, 2022.

\bibitem[Daskalakis et~al.(2009)Daskalakis, Goldberg, and Papadimitriou]{daskalakis2009complexity}
Constantinos Daskalakis, Paul~W Goldberg, and Christos~H Papadimitriou.
\newblock The complexity of computing a nash equilibrium.
\newblock \emph{Communications of the ACM}, 52\penalty0 (2):\penalty0 89--97, 2009.

\bibitem[Daskalakis et~al.(2021)Daskalakis, Skoulakis, and Zampetakis]{daskalakis2020complexityconstrainedminmaxoptimization}
Constantinos Daskalakis, Stratis Skoulakis, and Manolis Zampetakis.
\newblock The complexity of constrained min-max optimization.
\newblock In \emph{Proceedings of the 53rd Annual ACM SIGACT Symposium on Theory of Computing}, pp.\  1466--1478, 2021.

\bibitem[Finn et~al.(2016{\natexlab{a}})Finn, Christiano, Abbeel, and Levine]{finn2016connection}
Chelsea Finn, Paul Christiano, Pieter Abbeel, and Sergey Levine.
\newblock A connection between generative adversarial networks, inverse reinforcement learning, and energy-based models.
\newblock \emph{arXiv preprint arXiv:1611.03852}, 2016{\natexlab{a}}.

\bibitem[Finn et~al.(2016{\natexlab{b}})Finn, Levine, and Abbeel]{finn2016guided}
Chelsea Finn, Sergey Levine, and Pieter Abbeel.
\newblock Guided cost learning: Deep inverse optimal control via policy optimization.
\newblock In \emph{International conference on machine learning}, pp.\  49--58. PMLR, 2016{\natexlab{b}}.

\bibitem[Foster et~al.(2024)Foster, Block, and Misra]{foster2024behavior}
Dylan~J Foster, Adam Block, and Dipendra Misra.
\newblock Is behavior cloning all you need? understanding horizon in imitation learning.
\newblock \emph{Advances in Neural Information Processing Systems}, 37:\penalty0 120602--120666, 2024.

\bibitem[Freihaut \& Ramponi(2025)Freihaut and Ramponi]{freihaut2025feasiblerewardsmultiagentinverse}
Till Freihaut and Giorgia Ramponi.
\newblock On feasible rewards in multi-agent inverse reinforcement learning.
\newblock In \emph{The Thirty-ninth Annual Conference on Neural Information Processing Systems}, 2025.

\bibitem[Freihaut et~al.(2025)Freihaut, Viano, Cevher, Geist, and Ramponi]{freihaut2025learningequilibriadataprovably}
Till Freihaut, Luca Viano, Volkan Cevher, Matthieu Geist, and Giorgia Ramponi.
\newblock Learning equilibria from data: Provably efficient multi-agent imitation learning.
\newblock In \emph{The Thirty-ninth Annual Conference on Neural Information Processing Systems}, 2025.

\bibitem[Garg et~al.(2021)Garg, Chakraborty, Cundy, Song, and Ermon]{garg2021iq}
Divyansh Garg, Shuvam Chakraborty, Chris Cundy, Jiaming Song, and Stefano Ermon.
\newblock Iq-learn: Inverse soft-q learning for imitation.
\newblock \emph{Advances in Neural Information Processing Systems}, 34:\penalty0 4028--4039, 2021.

\bibitem[Geist et~al.(2019)Geist, Scherrer, and Pietquin]{geist2019theory}
Matthieu Geist, Bruno Scherrer, and Olivier Pietquin.
\newblock A theory of regularized markov decision processes.
\newblock In \emph{International conference on machine learning}, pp.\  2160--2169. PMLR, 2019.

\bibitem[Goodfellow et~al.(2020)Goodfellow, Pouget-Abadie, Mirza, Xu, Warde-Farley, Ozair, Courville, and Bengio]{goodfellow2020generative}
Ian Goodfellow, Jean Pouget-Abadie, Mehdi Mirza, Bing Xu, David Warde-Farley, Sherjil Ozair, Aaron Courville, and Yoshua Bengio.
\newblock Generative adversarial networks.
\newblock \emph{Communications of the ACM}, 63\penalty0 (11):\penalty0 139--144, 2020.

\bibitem[Ho \& Ermon(2016)Ho and Ermon]{ho2016generativeadversarialimitationlearning}
Jonathan Ho and Stefano Ermon.
\newblock Generative adversarial imitation learning.
\newblock \emph{Advances in neural information processing systems}, 29, 2016.

\bibitem[Kim et~al.(2021)Kim, Seo, Lee, Jeon, Hwang, Yang, and Kim]{kim2021demodice}
Geon-Hyeong Kim, Seokin Seo, Jongmin Lee, Wonseok Jeon, HyeongJoo Hwang, Hongseok Yang, and Kee-Eung Kim.
\newblock Demodice: Offline imitation learning with supplementary imperfect demonstrations.
\newblock In \emph{International Conference on Learning Representations}, 2021.

\bibitem[Lin et~al.(2018)Lin, Adams, and Beling]{lin2018multi}
Xiaomin Lin, Stephen~C Adams, and Peter~A Beling.
\newblock Multi-agent inverse reinforcement learning for general-sum stochastic games.
\newblock \emph{arXiv preprint arXiv:1806.09795}, 2018.

\bibitem[Maillard et~al.(2014)Maillard, Mann, and Mannor]{maillard2014hard}
Odalric-Ambrym Maillard, Timothy~A Mann, and Shie Mannor.
\newblock How hard is my mdp?" the distribution-norm to the rescue".
\newblock \emph{Advances in Neural Information Processing Systems}, 27, 2014.

\bibitem[Mazumdar et~al.(2024)Mazumdar, Panaganti, and Shi]{mazumdar2024tractableequilibriumcomputationmarkov}
Eric Mazumdar, Kishan Panaganti, and Laixi Shi.
\newblock Tractable equilibrium computation in markov games through risk aversion, 2024.
\newblock URL \url{https://arxiv.org/abs/2406.14156}.

\bibitem[Mehr et~al.(2023)Mehr, Wang, Bhatt, and Schwager]{mehr2023maximum}
Negar Mehr, Mingyu Wang, Maulik Bhatt, and Mac Schwager.
\newblock Maximum-entropy multi-agent dynamic games: Forward and inverse solutions.
\newblock \emph{IEEE transactions on robotics}, 39\penalty0 (3):\penalty0 1801--1815, 2023.

\bibitem[Nagpal et~al.(2025)Nagpal, Dong, Bouvier, and Mehr]{nagpal2025leveraging}
Kartik Nagpal, Dayi Dong, Jean-Baptiste Bouvier, and Negar Mehr.
\newblock Leveraging large language models for effective and explainable multi-agent credit assignment.
\newblock \emph{arXiv preprint arXiv:2502.16863}, 2025.

\bibitem[Ng et~al.(2000)Ng, Russell, et~al.]{ng2000algorithms}
Andrew~Y Ng, Stuart Russell, et~al.
\newblock Algorithms for inverse reinforcement learning.
\newblock In \emph{Icml}, volume~1, pp.\ ~2, 2000.

\bibitem[Papadimitriou(1994)]{papadimitriou1994complexity}
Christos~H Papadimitriou.
\newblock On the complexity of the parity argument and other inefficient proofs of existence.
\newblock \emph{Journal of Computer and system Sciences}, 48\penalty0 (3):\penalty0 498--532, 1994.

\bibitem[Pearce et~al.(2023)Pearce, Rashid, Kanervisto, Bignell, Sun, Georgescu, Macua, Tan, Momennejad, Hofmann, et~al.]{pearce2023imitating}
Tim Pearce, Tabish Rashid, Anssi Kanervisto, Dave Bignell, Mingfei Sun, Raluca Georgescu, Sergio~Valcarcel Macua, Shan~Zheng Tan, Ida Momennejad, Katja Hofmann, et~al.
\newblock Imitating human behaviour with diffusion models.
\newblock In \emph{The Eleventh International Conference on Learning Representations (ICLR 2023)}, 2023.

\bibitem[Perolat et~al.(2015)Perolat, Scherrer, Piot, and Pietquin]{perolat2015approximate}
Julien Perolat, Bruno Scherrer, Bilal Piot, and Olivier Pietquin.
\newblock Approximate dynamic programming for two-player zero-sum markov games.
\newblock In \emph{International Conference on Machine Learning}, pp.\  1321--1329. PMLR, 2015.

\bibitem[Pomerleau(1991)]{pomerleau1991efficient}
Dean~A Pomerleau.
\newblock Efficient training of artificial neural networks for autonomous navigation.
\newblock \emph{Neural computation}, 3\penalty0 (1):\penalty0 88--97, 1991.

\bibitem[Puterman(2014)]{puterman2014markov}
Martin~L Puterman.
\newblock \emph{Markov decision processes: discrete stochastic dynamic programming}.
\newblock John Wiley \& Sons, 2014.

\bibitem[Ramponi et~al.(2023)Ramponi, Kolev, Pietquin, He, Lauri{\`e}re, and Geist]{ramponi2023imitationmeanfieldgames}
Giorgia Ramponi, Pavel Kolev, Olivier Pietquin, Niao He, Mathieu Lauri{\`e}re, and Matthieu Geist.
\newblock On imitation in mean-field games.
\newblock \emph{Advances in Neural Information Processing Systems}, 36:\penalty0 40426--40437, 2023.

\bibitem[Ross \& Bagnell(2010)Ross and Bagnell]{ross2010efficient}
St{\'e}phane Ross and Drew Bagnell.
\newblock Efficient reductions for imitation learning.
\newblock In \emph{Proceedings of the thirteenth international conference on artificial intelligence and statistics}, pp.\  661--668. JMLR Workshop and Conference Proceedings, 2010.

\bibitem[Ross et~al.(2011)Ross, Gordon, and Bagnell]{dagger}
St{\'e}phane Ross, Geoffrey Gordon, and Drew Bagnell.
\newblock A reduction of imitation learning and structured prediction to no-regret online learning.
\newblock In \emph{Proceedings of the fourteenth international conference on artificial intelligence and statistics}, pp.\  627--635. JMLR Workshop and Conference Proceedings, 2011.

\bibitem[Shih et~al.(2022)Shih, Ermon, and Sadigh]{shih2022conditional}
Andy Shih, Stefano Ermon, and Dorsa Sadigh.
\newblock Conditional imitation learning for multi-agent games.
\newblock In \emph{2022 17th ACM/IEEE International Conference on Human-Robot Interaction (HRI)}, pp.\  166--175. IEEE, 2022.

\bibitem[Song et~al.(2018)Song, Ren, Sadigh, and Ermon]{song2018multi}
Jiaming Song, Hongyu Ren, Dorsa Sadigh, and Stefano Ermon.
\newblock Multi-agent generative adversarial imitation learning.
\newblock \emph{Advances in neural information processing systems}, 31, 2018.

\bibitem[Sunehag et~al.(2017)Sunehag, Lever, Gruslys, Czarnecki, Zambaldi, Jaderberg, Lanctot, Sonnerat, Leibo, Tuyls, et~al.]{sunehag2017value}
Peter Sunehag, Guy Lever, Audrunas Gruslys, Wojciech~Marian Czarnecki, Vinicius Zambaldi, Max Jaderberg, Marc Lanctot, Nicolas Sonnerat, Joel~Z Leibo, Karl Tuyls, et~al.
\newblock Value-decomposition networks for cooperative multi-agent learning.
\newblock \emph{arXiv preprint arXiv:1706.05296}, 2017.

\bibitem[Tang et~al.(2024)Tang, Swamy, Fang, and Wu]{tang2024multiagentimitationlearningvalue}
Jingwu Tang, Gokul Swamy, Fei Fang, and Zhiwei~S Wu.
\newblock Multi-agent imitation learning: Value is easy, regret is hard.
\newblock \emph{Advances in Neural Information Processing Systems}, 37:\penalty0 27790--27816, 2024.

\bibitem[von Stengel()]{StengelAGT}
Bernhard von Stengel.
\newblock Algorithmic game theory.
\newblock \url{http://www.maths.lse.ac.uk/Personal/stengel/TEXTE/agt-stengel.pdf}.
\newblock Accessed: July 10, 2025.

\bibitem[Wang et~al.(2021)Wang, Yu, Cao, and Ermon]{wang2021multi}
Hongwei Wang, Lantao Yu, Zhangjie Cao, and Stefano Ermon.
\newblock Multi-agent imitation learning with copulas.
\newblock In \emph{Joint European Conference on Machine Learning and Knowledge Discovery in Databases}, pp.\  139--156. Springer, 2021.

\bibitem[Ward et~al.(2025)Ward, Yu, Levy, Mehr, Fridovich-Keil, and Topcu]{ward2025active}
William Ward, Yue Yu, Jacob Levy, Negar Mehr, David Fridovich-Keil, and Ufuk Topcu.
\newblock Active inverse learning in stackelberg trajectory games.
\newblock In \emph{2025 American Control Conference (ACC)}, pp.\  1547--1553. IEEE, 2025.

\bibitem[Wei et~al.(2017)Wei, Hong, and Lu]{wei2017online}
Chen-Yu Wei, Yi-Te Hong, and Chi-Jen Lu.
\newblock Online reinforcement learning in stochastic games.
\newblock \emph{Advances in Neural Information Processing Systems}, 30, 2017.

\bibitem[Wu et~al.(2025)Wu, Chen, Swamy, Brantley, and Sun]{wu2025diffusingstatesmatchingscores}
Runzhe Wu, Yiding Chen, Gokul Swamy, Kianté Brantley, and Wen Sun.
\newblock Diffusing states and matching scores: A new framework for imitation learning.
\newblock In \emph{The Thirteenth International Conference on Learning Representations}, 2025.

\bibitem[Xiao(2022)]{xiao2022convergence}
Lin Xiao.
\newblock On the convergence rates of policy gradient methods.
\newblock \emph{Journal of Machine Learning Research}, 23\penalty0 (282):\penalty0 1--36, 2022.

\bibitem[Yang et~al.(2023)Yang, Zhang, Song, Cheng, Wang, and Li]{yang2023watch}
Shuo Yang, Wei Zhang, Ran Song, Jiyu Cheng, Hesheng Wang, and Yibin Li.
\newblock Watch and act: Learning robotic manipulation from visual demonstration.
\newblock \emph{IEEE Transactions on Systems, Man, and Cybernetics: Systems}, 53\penalty0 (7):\penalty0 4404--4416, 2023.

\bibitem[Yin et~al.(2021)Yin, Bai, and Wang]{yin2021near}
Ming Yin, Yu~Bai, and Yu-Xiang Wang.
\newblock Near-optimal provable uniform convergence in offline policy evaluation for reinforcement learning.
\newblock In \emph{International Conference on Artificial Intelligence and Statistics}, pp.\  1567--1575. PMLR, 2021.

\bibitem[Zhan et~al.(2018)Zhan, Zheng, Yue, Sha, and Lucey]{zhan2018generative}
Eric Zhan, Stephan Zheng, Yisong Yue, Long Sha, and Patrick Lucey.
\newblock Generative multi-agent behavioral cloning.
\newblock \emph{CoRR}, 2018.

\end{thebibliography}
\bibliographystyle{iclr2026_conference}

\appendix

\clearpage
\section{Additional background}\label{sec:additional-background}

\subsection{PPAD complexity class}

In computational complexity theory, problems are categorized into classes to formally reason about their inherent difficulty and whether they are likely to be tractable. In game theory, the NP and PPAD classes are of crucial importance, as it has been shown that computing Nash equilibria is PPAD-complete \citep{daskalakis2009complexity, chen2007settlingcomplexitycomputingtwoplayer} and many decision problems around Nash equilibria are NP-hard \citep{conitzer2002complexity}. These key results also reinforce the arguments under which finding expert policies without demonstrations can be computationally intractable.

We define below the PPAD class introduced by \citet{papadimitriou1994complexity} and related to some of our negative results.

\begin{definition}[PPAD class]
    \label{def:ppad}
    A search problem $\Pi$ belongs to the complexity class PPAD (Polynomial Parity Arguments on Directed graphs) if and only if it is polynomial-time reducible to the End-of-the-Line problem defined as follows:

    \begin{quote}
        \textbf{End-of-the-Line Problem}
        
        \medskip
        \noindent\textsc{Input:}
        \begin{itemize}
            \item A directed graph $G = (V, E)$ implicitly represented by two polynomial-time computation mutually inverse functions $P$ and $S$:
            \begin{itemize}
                \item $P: V \to V$ maps every vertex $v \in V$ to its unique predecessor, or itself.
                \item $S: V \to V$ maps every vertex $v \in V$ to its unique successor, or itself.
            \end{itemize}
            
            \item A source vertex $s \in V$ such that $P(s) = s$ and $S(s) \neq s$.
        \end{itemize}

        \medskip
        \noindent\textsc{Output:} Either a sink vertex or another source vertex.
    \end{quote}
\end{definition}

PPAD problems belong to the larger TFNP class (Total Function NP), containing search problems for which a solution is guaranteed to exist (the problems are said to be total). The fundamental property that makes PPAD problems total (guaranteed to have a solution) is based on the parity argument: in any directed graph where each vertex has at most one incoming and one outgoing edge, if there exists a source, there must exist either another source or a sink. Therefore, the End-of-the-Line problem always has a solution.

It is believed that PPAD is not part of P, and hence PPAD-hard problems are believed intractable.

\subsection{GAIL and occupancy-measure matching}

Generative Adversarial Imitation Learning introduced by \citet{ho2016generativeadversarialimitationlearning} equivalently solves the following Inverse Reinforcement Learning (IRL) problem
$$
\operatorname{IRL}_{\psi}(\pi^E) = \arg\max_{c \in \mathcal{C}} -\psi(c) + \left(\min_{\pi \in \Pi} -H(\pi) + \mathbb{E}_{\pi}[c(s, a)]\right) - \mathbb{E}_{\pi^E}[c(s, a)]
$$
for a cost $c \in \mathcal{C}$ followed by standard Reinforcement Learning (RL) for policy extraction
$$
\operatorname{RL}(c) = \arg\min_{\pi \in \Pi} -H(\pi) + \mathbb{E}_{\pi}[c(s, a)],
$$
with sets $\mathcal{C}, \Pi$ constrained by modelization expressivity, $\psi: \mathbb{R}^{\mathcal{S} \times \mathcal{A}} \to \mathbb{R} \cup \{\infty\}$ a convex cost function regularizer, and $H(\pi) = \mathbb{E}_{\pi}[-\log \pi(a | s)]$ the causal entropy of policy $\pi$. Their key innovation is to show that for a particular instance of $\psi$, both problems can be solved simultaneously by training a discriminative classifier $D: \mathcal{S} \times \mathcal{A} \to (0, 1)$ and a generator policy $\pi \in \Pi$ in a GAN-like \citep{goodfellow2020generative} manner. For a non-restricted $\mathcal{C} = \mathbb{R}^{\mathcal{S} \times \mathcal{A}}$, we can exchange the max-min for a min-max \citep{ho2016generativeadversarialimitationlearning, garg2021iq} and end up with the following practical optimization formulation:
$$
\min_{\pi \in \Pi} \max_{D \in (0, 1)^{\mathcal{S} \times \mathcal{A}}} \mathbb{E}_{\pi}[\log(D(s, a))] + \mathbb{E}_{\pi^E}[\log(1 - D(s, a))] - \lambda H(\pi),
$$
where $\lambda$ is a regularization factor.

This problem in particular is shown to be equivalent to the following regularized state-action occupancy matching objective: $\min_{\pi} \operatorname{D_{JS}}(\rho_{\pi}, \rho_{\pi^E}) - \lambda H(\pi)$, with $\operatorname{D_{JS}}$ denoting the Jensen-Shannon divergence.

In the more general case, \citet{ho2016generativeadversarialimitationlearning} propose that imitation learning can be done by state-action occupancy measure matching problems of the form:
$$
\min_{\pi} \psi^*(\rho_{\pi} - \rho_{\pi^E}) - H(\pi)
$$
where the entropy regularization makes the optimal BC policy unique and $\psi^*$ denotes the convex conjugate of $\psi$.

\section{Proofs of hardness results}

\subsection{Proof of Theorem~\ref{thm:rho-matching-support}}\label{app:proof-rho-matching-support}

\thmRhoMatchingSupport*

\begin{proof}
    Let $\pi, \pi'$ be two policies such that $\rho_{\pi} = \rho_{\pi'}$.
    
    We will first prove that $\mathcal{S}^+_{\pi} = \mathcal{S}^+_{\pi'}$ then prove that the policies are equal on the visited region $\mathcal{S}^+_{\pi}$.\\

    1) The policies $\pi, \pi'$ visit the same region of the space state.

    Suppose towards contradiction that there exists some $s \in \mathcal{S}$ such that $s \in \mathcal{S}^+_{\pi}$ and $s \notin \mathcal{S}^+_{\pi'}$.
    
    Since $\pi(\cdot | s)$ is a probability distribution, there must be an action $a \in \mathcal{A}$ such that $\pi(a | s) > 0$.
    
    Hence $\rho_{\pi}(a, s) > 0$ but $\rho_{\pi'}(a, s) = \mu_{\pi'}(s)\pi'(a | s) = 0$ by assumption.
    
    This is a contraction since $\rho_{\pi} = \rho_{\pi'}$, hence $\mathcal{S}^+_{\pi} = \mathcal{S}^+_{\pi'}$.\\

    2) The policies $\pi, \pi'$ are equal on their shared visited region.

    We again proceed with a proof by contradiction.
    
    Suppose there exists $a \in \mathcal{A}$ and $s \in \mathcal{S}^+_{\pi}$ such that $\pi(a | s) \neq \pi'(a | s)$. Since $\rho_{\pi}(a, s) = \rho_{\pi'}(a, s)$, $\mu_{\pi}(s) > 0, \mu_{\pi'}(s) > 0$ and $\pi(a | s) \neq \pi'(a | s)$, we must therefore have $\mu_{\pi}(s) \neq \mu_{\pi'}(s)$.

     Without loss of generality, assume $\mu_{\pi}(s) > \mu_{\pi'}(s)$; thus, for all $a' \in \mathcal{A}$, $\pi(a' | s) < \pi'(a' | s)$ by the equality of the state-action occupancy measures. 

    This is a contradiction since that would mean
    $$
    \sum_{a' \in \mathcal{A}} \pi(a' | s) < \sum_{a' \in \mathcal{A}} \pi'(a' | s) = 1
    $$
    but $\pi(\cdot | s)$ is a probability distribution.
\end{proof}

\subsection{Proof of Nash equilibria for the game in Figure~\ref{fig:example-state-measure}}\label{proof-lem-example-state-coverage}

We want to show that $\pi^{E}((a_1, a_1) | s) = 1$ for all $s \in \mathcal{S}$ is indeed a Nash equilibrium of the two-player game described in figure Figure~\ref{fig:example-state-measure}.

Suppose towards contradiction that $\pi^{E}$ is not a Nash equilibrium, as some player $i \in \{1, 2\}$ is better off from unilaterally deviating. By the performance difference lemma (Lemma~\ref{lem:performance-difference-lemma}), there must exists a state $s \in \mathcal{S}$ with positive advantage
$$
A^{\pi^{E}}_{i}(a^i, s) = Q^{\pi^{E}}_i(s, a^i, a^{-i}) - V^{\pi^{E}}_i(s), \quad \text{with } a^{-i} = a_1
$$
when the player $i$ chooses $a^i = a_2$ and $Q^{\pi^{E}}_i(s, a^i, a^{-i}) \coloneqq r_i(s, a^i, a^{-i}) + \gamma \sum_{s' \in \mathcal{S}} P(s'|s, a)V^{\pi^E}_i(s')$.

This is a contradiction since, when fixing $a^i = a_2$, for all $s \in \mathcal{S}$ we have
\begin{align*}
    A^{\pi^{E}}_i(a^i, s)
    &= r_i(a^i \neq a^{-i}, s) - V^{\pi^{E}}_i(s) + \gamma Q^{\pi^{E}}_i(s, a^i, a^{-i})\\
    &= -1 - (1-\gamma) V^{\pi^{E}}_i(s) + \gamma A^{\pi^{E}}_i(a^i, s)\\
    &\leq \gamma A^{\pi^{E}}_i(a^i, s)
\end{align*}

Which is a contradiction as we assumed $A^{\pi^{E}}_i(a^i, s) > 0$. Hence $\pi^{E}$ is a Nash equilibrium.

\subsection{Proof of Theorem~\ref{thm:no-state-coverage}}\label{proof-thm-no-state-coverage}

\thmNoStateCoverage*

The proof below has been extracted from \citet{tang2024multiagentimitationlearningvalue} and only slightly adapted for infinite horizon games.

\begin{proof}
    We explicitly construct a two-player common payoff Markov Game with infinite horizon and a common action space $\mathcal{A}_i = \{a_1, a_2, a_3\}$ for each agent $i = 1, 2$. The state space $\mathcal{S}$ is countably infinite and ordered.

    The action-independent (shared) reward function is defined as $r(s_i) = 1$ if $i$ is odd, and $r(s_i) = 0$ if $i$ is even. The transition dynamics are described in Figure~\ref{fig:proof-thm-no-state-coverage}.

    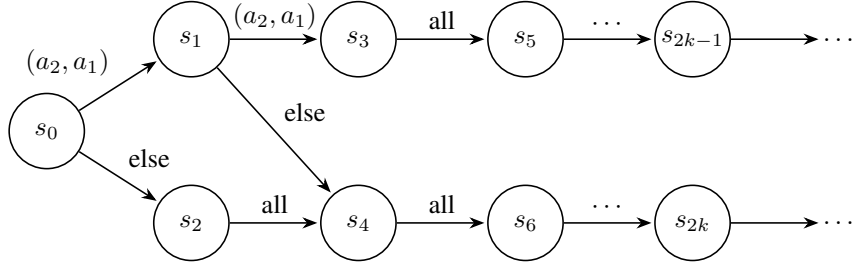
\begin{figure}[H]
        \begin{center}
            \begin{tikzpicture}[
                > = Stealth,
                shorten > = 1pt,
                auto,
                node distance=1.2cm,
                semithick,
                initial text={start},
                state/.style={circle, draw, minimum size=1cm, inner sep=0pt},
                dots/.style={inner sep=0pt, minimum size=0.5cm}
            ]
            
            \node[state] (s0) {$s_0$};
            \node[state, above right=0.5cm and 1.2cm of s0] (s1) {$s_1$};
            \node[state, below right=0.5cm and 1.2cm of s0] (s2) {$s_2$};
            \node[state, right=of s1] (s3) {$s_3$};
            \node[state, right=of s2] (s4) {$s_4$};
            \node[state, right=of s3] (s5) {$s_5$};
            \node[state, right=of s4] (s6) {$s_6$};
            \node[state, right=of s5] (s2h1) {$s_{2k-1}$};
            \node[state, right=of s6] (s2h) {$s_{2k}$};
            \node[dots, right=of s2h1] (odd) {$\dots$};
            \node[dots, right=of s2h] (even) {$\dots$};
            
            \draw[->] (s0) -- node {$(a_2, a_1)$} (s1);
            \draw[->] (s0) -- node {else} (s2);
            \draw[->] (s1) -- node[above] {$(a_2, a_1)$} (s3);
            \draw[->] (s1) -- node {else} (s4);
            \draw[->] (s2) -- node {all} (s4);
            \draw[->] (s3) -- node[above] {all} (s5);
            \draw[->] (s4) -- node[above] {all} (s6);
            \draw[->] (s5) -- node[above] {$\cdots$} (s2h1);
            \draw[->] (s6) -- node[above] {$\cdots$} (s2h);
            \draw[->] (s2h1) -- (odd);
            \draw[->] (s2h) -- (even);
            
            \end{tikzpicture}
        \end{center}
        \caption{Transition dynamics for the two-player game. $s_0$ is the initial state. The top branch contains all odd states while the bottom branch together with $s_0$ comprises all even states.}
        \label{fig:proof-thm-no-state-coverage}
    \end{figure}

    We define the expert as the Nash policy $\pi^E$ such that $\pi^E((a_1, a_1) | s_0) = 1$, $\pi^E((a_3, a_3) | s_1) = 1$ and the actions for the other states are arbitrary.

    Similarly, we define the trained policy $\pi$ such that $\pi((a_1, a_1) | s_0) = 1$ and $\pi((a_1, a_1) | s_1) = 1$, and plays the same as the expert on the other states.

    In this case $\rho_{\pi} = \rho_{\pi^E}$ but 
    $$
    \operatorname{NashGap}(\pi) \geq V_1^{\pi'_1, \pi_2}(s_0) - V_1^{\pi}(s_0) = \frac{1}{1-\gamma}-1 \geq \Omega\left(\frac{1}{1-\gamma}\right)
    $$
    where
    $$
    \pi'_1(a^1 | s) = \begin{cases}
        1 & \text{if } a^1 = a_2 \text{ and } s \in \{s_0, s_1\} \\
        0 & \text{if } a^1 \neq a_2 \text{ and } s \in \{s_0, s_1\} \\
        \pi_1(a^1|s) & \text{otherwise}
    \end{cases}
    $$
\end{proof}

\subsection{Proof of Theorem~\ref{thm:PPAD-tight-bound}}\label{proof-ppad-tight-bound}

\thmPPADTightBound*

Before proving Theorem~\ref{thm:PPAD-tight-bound}, let us prove the following intermediary lemma. This will allow us to conclude by doing a reduction to the problem of finding an $\epsilon$-Nash equilibrium.

\begin{restatable}{lemma}{lemPPADEpsSupport}
    \label{lem:PPAD-eps-support}
    Finding an $\epsilon$-Nash equilibrium support in a general bimatrix game is PPAD-complete.
\end{restatable}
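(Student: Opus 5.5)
To prove Lemma~\ref{lem:PPAD-eps-support}, I will establish membership in PPAD and PPAD-hardness separately. Throughout, ``finding an $\epsilon$-Nash equilibrium support'' means outputting a pair of supports $(S_1, S_2)$ with $S_1 \subseteq \mathcal{A}_1$ and $S_2 \subseteq \mathcal{A}_2$ for which some $\epsilon$-Nash equilibrium $(x,y)$ satisfies $\operatorname{supp}(x)=S_1$ and $\operatorname{supp}(y)=S_2$. The accuracy $\epsilon$ is taken inverse-polynomial in the game size, as in \citet{chen2007settlingcomplexitycomputingtwoplayer}.

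\textbf{Membership.} Given an $\epsilon$-Nash equilibrium $(x,y)$, its support can be read off in linear time. Since computing an $\epsilon$-Nash equilibrium of a bimatrix game is in PPAD, composing that procedure with the support-extraction step gives a polynomial reduction to End-of-the-Line.

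\textbf{Hardness.} I reduce the computation of an $\epsilon'$-Nash equilibrium to the support problem. Suppose we are given a support pair $(S_1,S_2)$ of some $\epsilon$-Nash equilibrium of the bimatrix game $(A,B)$. We then solve the linear feasibility program in $(x,y)$:
\begin{align*}
& x \in \Delta_{S_1},\quad y \in \Delta_{S_2},\\
& e_i^\top A y \le x^\top A y + \epsilon \quad \forall i \in \mathcal{A}_1,\\
& x^\top B e_j \le x^\top B y + \epsilon \quad \forall j \in \mathcal{A}_2.
\end{align*}
These constraints are bilinear because of the terms $x^\top A y$ and $x^\top B y$, so the program cannot be solved directly. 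The standard fix is to replace them with the stronger, linear, support-based conditions
\[
e_i^\top A y \ge e_k^\top A y - \epsilon \qquad \forall i \in S_1,\ \forall k,
\]
and the symmetric conditions for player 2. Any solution of this linear system is itself an $\epsilon$-Nash equilibrium, so it can be found in polynomial time by linear programming.

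\textbf{Main obstacle.} The hard step is showing that this linear system is feasible whenever $(S_1,S_2)$ is the support of an $\epsilon$-Nash equilibrium. The difficulty is that an $\epsilon$-Nash equilibrium only bounds the \emph{average} regret by $\epsilon$; it need not be \emph{well-supported}, meaning individual pure strategies in the support can be far worse than a best response.

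I would handle this in one of two ways.
\begin{enumerate}
\item Interpret the support problem for $\epsilon$-\emph{well-supported} equilibria. Computing these is also PPAD-complete \citep{chen2007settlingcomplexitycomputingtwoplayer, daskalakis2009complexity}, and with that reading the LP above is feasible by definition, which closes the argument.
\item Alternatively, use the relaxed-support trick: keep the true support $(S_1,S_2)$ but allow slack $\epsilon/p$ for actions of mass at least $p$. Then recover an $O(\sqrt{\epsilon})$-well-supported solution and apply the polynomial equivalence between approximate and well-supported equilibria at polynomially related precision.
\end{enumerate}
Either way, the step that needs care is tracking the precision loss so that $\epsilon'$ remains inverse-polynomial.
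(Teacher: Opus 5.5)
Your option 1 is a correct proof, and it has the same skeleton as the paper's: membership by reading off supports, and hardness by solving a linear feasibility program once the supports are known, then invoking \citet{chen2007settlingcomplexitycomputingtwoplayer}. It is more careful than the paper's version in two places.
\begin{enumerate}
\item Your constraints compare each support action against \emph{every} action $k$. The paper's LP only imposes $\epsilon$-indifference between pairs inside $\nu_1,\nu_2$, so its solutions need not be approximate equilibria: if the support rows are identical those constraints are vacuous, and an off-support row can beat them by a constant.
\item You correctly see that feasibility of the LP requires the supports to come from a well-supported equilibrium. The paper's proof assumes this silently.
\end{enumerate}
Under option 1 there is also no precision loss to track. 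Every $\epsilon$-well-supported equilibrium is an $\epsilon$-Nash equilibrium, so hardness at inverse-polynomial $\epsilon$ transfers directly with $\epsilon'=\epsilon$.

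What needs fixing is that option 1 is not one of two alternatives. It is forced, and option 2 cannot work, because under the paper's average-regret definition the support problem is trivial. Take any exact equilibrium $(x^*,y^*)$ and set $x=(1-\eta)x^*+\eta u_1$ and $y=(1-\eta)y^*+\eta u_2$, with $u_1,u_2$ uniform. With payoffs in $[-1,1]$, each player's regret is at most $4\eta$, so $\eta=\epsilon/4$ gives an $\epsilon$-Nash equilibrium with full support. Outputting all actions is therefore always a correct answer. No procedure, relaxed-slack LP or otherwise, can extract an approximate equilibrium from such a support. The literal statement is then in polynomial time, so it could be PPAD-complete only if every PPAD problem were solvable in polynomial time.

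So state the lemma explicitly for exact supports of $\epsilon$-well-supported equilibria, and adjust membership to match. Read off the support of an exact Nash equilibrium (two-player Nash is in PPAD). The support of an ordinary $\epsilon$-Nash equilibrium, which your current membership sentence uses, need not be a valid answer under this reading.

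Be aware that this changes what the lemma supplies downstream. The reduction for Theorem~\ref{thm:PPAD-tight-bound} extracts supports of ordinary approximate equilibria, so it would need the same well-supported strengthening to use the corrected lemma.
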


\subsubsection{Proof of Lemma~\ref{lem:PPAD-eps-support}}\label{proof-ppad-eps-support}

Before proving Theorem~\ref{lem:PPAD-eps-support}, we introduce below a simpler version for exact Nash equilibria.

\begin{theorem}\label{PPAD-support}
    Finding the support of any Nash equilibrium in a bimatrix game is PPAD-complete.
\end{theorem}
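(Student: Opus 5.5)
To prove Theorem~\ref{PPAD-support}, I will establish membership in PPAD and PPAD-hardness separately. For hardness, the plan is to reduce the problem of computing a Nash equilibrium of a bimatrix game to the problem of computing the support of some Nash equilibrium. The bimatrix Nash problem is PPAD-complete \citep{chen2007settlingcomplexitycomputingtwoplayer, daskalakis2009complexity}. Membership then follows in the other direction: an oracle that computes a Nash equilibrium in PPAD immediately gives its support by reading off the nonzero entries, which is a polynomial-time post-processing step. So finding a support is in PPAD.

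For the hardness direction, suppose an oracle returns supports $S_1 \subseteq \mathcal{A}_1$ and $S_2 \subseteq \mathcal{A}_2$ of some Nash equilibrium of the game with payoff matrices $A, B$. I recover an actual equilibrium by solving a linear feasibility program in the standard support-enumeration fashion. The variables are $x \in \Delta_{\mathcal{A}_1}$ with $x_j = 0$ for $j \notin S_1$, $y \in \Delta_{\mathcal{A}_2}$ with $y_k = 0$ for $k \notin S_2$, and scalars $u, v$. The constraints are:
\begin{itemize}
\item $(Ay)_j = u$ for $j \in S_1$ and $(Ay)_j \le u$ for $j \notin S_1$;
\item $(x^\top B)_k = v$ for $k \in S_2$ and $(x^\top B)_k \le v$ for $k \notin S_2$.
\end{itemize}
These constraints are linear because $x$ and $y$ decouple: the first group involves only $y$ and $u$, the second only $x$ and $v$. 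Any feasible solution is a Nash equilibrium, since every action played with positive probability is a best response. I also need not insist that $x_j > 0$ on $S_1$: a solution supported on a subset of $S_1$ still satisfies the best-response conditions. The program is feasible because the true equilibrium with these supports satisfies it. Linear programming is solvable in polynomial time, and feasible points can be chosen with polynomial bit-length, so this is a polynomial-time reduction.

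The main subtlety is the precise notion of "solution" in the PPAD setting. PPAD-completeness of 2-player Nash is usually stated for approximate equilibria, or for exact equilibria in rational bimatrix games, where equilibria with rational entries always exist. I will work with exact rational bimatrix games, where the LP above returns an exact rational equilibrium, and cite that exact 2-Nash is PPAD-complete. I expect this step to need the most care. If needed, I would also remark that the same LP argument with slack $\epsilon$ in the inequalities adapts to the approximate version, which sets up Lemma~\ref{lem:PPAD-eps-support}.
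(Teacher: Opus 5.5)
Your proposal is correct and follows the same route as the paper: the support oracle, followed by a polynomial-time linear feasibility program on the returned supports, recovers an equilibrium and gives hardness from 2-Nash, while membership follows by reading off the support of an equilibrium. Your program is actually more careful than the paper's, which keeps only the indifference equalities inside the supports and drops your off-support constraints $(Ay)_j \le u$ for $j \notin S_1$ and $(x^\top B)_k \le v$ for $k \notin S_2$. Without these a feasible point need not be an equilibrium (e.g.\ if $S_1$ is a singleton and player 2's payoffs are constant, $y$ is left unconstrained on $S_2$), so your version is the one that makes the reduction sound.
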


\begin{proof}
    We prove this by a polynomial reduction from the problem of finding the equilibrium in a bimatrix game to the problem of finding its support. The other direction is trivial.

    Let $\mathcal{G} = (A_1, A_2)$ be a bimatrix game with payoff matrices $A_i$ for each player $i \in \{1, 2\}$. For simplicity we further assume that $A_1, A_2 \in \mathbb{R}^{n\times n}$ (i.e. both players have the same number of actions $n$).

    Let $\pi_1, \pi_2$ the unknown policies at equilibrium of our game, with respective supports $\nu_1, \nu_2 \subseteq \{1, \dots, n\}$. Using our support finding oracle, we compute $\nu_1, \nu_2$ from the payoff matrices.

    We note that from the definition of a Nash equilibrium, $\pi_1$ is a best-response of player 1 to player 2 having policy $\pi_2$. This means:
    $$
    \pi_1^{\top}A_1\pi_2 = \max_i {A_{1, i}}^{\top}\pi_2 \implies {A_{1, j}}^{\top}\pi_2 = {A_{1, k}}^{\top}\pi_2 \ \forall j, k \in \nu_1,
    $$

    and a similar argument holds for $\pi_2$, this is known as the indifference principle.

    Using this, we can rewrite the problem of finding a Nash equilibrium as follows:
    $$
    \begin{aligned}
        \textrm{Find} \quad & x \in \mathbb{R}^{|\nu_1|}, \ y \in \mathbb{R}^{|\nu_2|}\\
        \textrm{s.t.} \quad & \sum_{i=1}^{|\nu_1|} x_i = \sum_{i=1}^{|\nu_2|} y_i = 1\\
                            & (A_{2, j} - A_{2, k})^{\top}x = 0 \quad \forall j, k \in \nu_2\\
                            & (A_{1, j} - A_{1, k})^{\top}y = 0 \quad \forall j, k \in \nu_1\\
                            & x_i \geq 0 \quad \forall i \in \nu_1\\
                            & y_i \geq 0 \quad \forall i \in \nu_2\\
    \end{aligned}
    $$

    This is a linear program that can be solved in polynomial time. By solving this optimization problem we recover $\pi_1 = x$ and $\pi_2 = y$ (see also Algorithm~3.4 in \citet{StengelAGT}), which is valid with our assumptions of equilibrium supports $\nu_1, \nu_2$.

    However, as shown in \citet{chen2007settlingcomplexitycomputingtwoplayer}, finding an equilibrium of a bimatrix game is PPAD-complete. This concludes our proof by showing that finding the support of a Nash equilibrium in general bimatrix games is PPAD-complete.
\end{proof}

Using similar arguments, we argue that the theorem also holds for epsilon Nash equilibria with the following proof.

\begin{proof}[Proof of Lemma~\ref{lem:PPAD-eps-support}]
    The proof follows by adapting the linear program used in the proof of Theorem~\ref{PPAD-support}, replacing the equality constraints due to the indifference principle by two inequality constraints allowing some slackness of magnitude less $\epsilon$ as follows
    
    $$
    \begin{aligned}
        \textrm{Find} \quad & x \in \mathbb{R}^{|\nu_1|}, \ y \in \mathbb{R}^{|\nu_2|}\\
        \textrm{s.t.} \quad & \sum_{i=1}^{|\nu_1|} x_i = \sum_{i=1}^{|\nu_2|} y_i = 1\\
                            & (A_{2, j} - A_{2, k})^{\top}x \leq \epsilon \quad \forall j, k \in \nu_2\\
                            & (A_{1, j} - A_{1, k})^{\top}y \leq \epsilon \quad \forall j, k \in \nu_1\\
                            & (A_{2, k} - A_{2, j})^{\top}x \leq \epsilon \quad \forall j, k \in \nu_2\\
                            & (A_{1, k} - A_{1, j})^{\top}y \leq \epsilon \quad \forall j, k \in \nu_1\\
                            & x_i \geq 0 \quad \forall i \in \nu_1\\
                            & y_i \geq 0 \quad \forall i \in \nu_2\\
    \end{aligned}
    $$
    
    This allows finding an $\epsilon$-Nash equilibrium which is also known to be PPAD-complete \citep{chen2007settlingcomplexitycomputingtwoplayer}. The other direction is trivial.
\end{proof}

\subsubsection{Reduction for the proof of Theorem~\ref{thm:PPAD-tight-bound}}

Recall the statement to prove
\thmPPADTightBound*

\begin{proof}
    We prove the result by a polynomial reduction from the problem of finding the support of an $\epsilon$-Nash equilibrium in a general bimatrix game to the problem of computing $m_{\rho}(G_{\text{bi}}, \epsilon_{\rho})$ for any $G_{\text{bi}}$, $\epsilon_{\rho}$. Note that in this one-state game, the state-action occupancy measure is equal to the policy distribution. We assume $\max_{i, j} \max \{|{A_1}_{i, j}|, |{A_2}_{i, j}|\} < 1$. If it's not the case, it suffices to divide the payoff matrices by $2$ and apply the same argument for $\epsilon_{\rho}/2$.

    Assume access to an oracle $L(G_{\text{bi}}, \epsilon)$ for $m_{\rho}$ for any $G_{\text{bi}} = (A_1, A_2)$, $\epsilon_{\rho} \in \mathbb{R}_+$. We will show that polynomially many calls to this oracle are sufficient to find the support of any $\epsilon$-Nash equilibrium of $G_{\text{bi}}$. Specifically, Algorithm~\ref{alg:lower-bound-reduction} described below is sufficient for this task.

    \begin{algorithm}[h]\caption{Lower bound reduction}\label{alg:lower-bound-reduction}
        Given a game $A_1, A_2$ and a target Nash precision $\epsilon$.\\
        Define $A_1^\nu = A_1$, $A_2^\nu = A_2$.\\
        Define $K$ such that $-1 < K < -\max_{i, j} \max \{|{A_1}_{i, j}|, |{A_2}_{i, j}|\}$.\\
        Define $\nu_1 = \nu_2 = \emptyset$.\\
        Define $\delta = \lVert \epsilon/K \cdot e_1 \rVert_1$ such that $s = L((A_1, A_2), \delta) \leq \epsilon$, for a canonical vector $e_1$.\\
        \For{$i \in \{1, \dots, n\}$}{
            Define $A_1^{i}$ such that
            \begin{itemize}
                \item $\{A_1^{i}\}_{a,b} = \{A_1^\nu\}_{a,b} \ \forall a,b \in [n] \setminus \{i\} \times [n]$
                \item $\{A_1^{i}\}_{i} = K \cdot 1$
            \end{itemize}
            Use the oracle to compute $l_1^i = L((A_1^i, A_2), \delta)$.\\
            \uIf{$l_1^i > s$}{
                $\nu_1 \leftarrow \nu_1 \cup \{i\}$.\;
            }
            \Else{
                $A_1^\nu \leftarrow A_1^i$.\;
            }
        }
        \For{$i \in \{1, \dots, n\}$}{
            Define $A_2^{i}$ such that
            \begin{itemize}
                \item $\{A_2^{i}\}_{a,b} = \{A_2^\nu\}_{a,b} \ \forall a,b \in [n] \setminus \{i\} \times [n]$
                \item $\{A_2^{i}\}_{i} = K \cdot 1$
            \end{itemize}
            Use the oracle to compute $l_2^i = L((A_1, A_2^i), \delta)$.\\
            \uIf{$l_2^i > s$}{
                $\nu_2 \leftarrow \nu_2 \cup \{i\}$.\;
            }
            \Else{
                $A_2^\nu \leftarrow A_2^i$.\;
            }
        }

        Return $\nu_1, \nu_2$
    \end{algorithm}

    To prove this fact, it suffices to note that replacing $A_k^\nu$ by $A_k^i$ for any $k \in \{1, 2\}, i \in [n]$ changes the value of the lower bound only if $\pi_{k, i} > 0$ for every $s$-Nash equilibrium $\pi$ with supports supersets of $\nu_1, \nu_2$.

    Formally, let $\Pi^s_{\min}$ be the set of policies minimizing the Nash gap with the imitation error $\epsilon$:
    $$
    \Pi^s_{\min} \coloneqq {\arg\min}_{\pi_1, \pi_2: \lVert \pi_1\pi_2^{\top} - \pi^E_1{\pi^E_2}^{\top} \rVert_1 = s} \operatorname{NashGap}(\pi_1, \pi_2),
    $$
    and arrange the set in lexicographic order of the concatenated indicator vector encodings of the supports of the two players. We show that $(\nu_1, \nu_2)$ is equal to the first element of $\Pi^s_{\min}$ which we denote $(\nu^0_1, \nu^0_2)$.
    
    Let $j \in \{1, 2\}$ be a fixed player, then for every $i \in [n]$:
    \begin{itemize}
        \item Case $i \in \nu_j$: then there doesn't exist any element with a smaller lexicographic index in $\Pi^s_{\min}$. Hence, $\nu_j \subseteq \nu^0_j$.
        \item Case $i \notin \nu_j$: then there must be an equilibrium such that player $j$ takes action $i$ with null probability. Hence, $\nu^0_j \subseteq \nu_j$
    \end{itemize}

    Thus, $\nu_j = \nu^0_j$ and $(\nu_1, \nu_2)$ are valid supports for an $\epsilon$-Nash equilibrium.

    Applying Lemma~\ref{lem:PPAD-eps-support} allows us to conclude that finding $L$ is a PPAD-hard problem.
\end{proof}

\section{Proofs of upper bounds}

\subsection{Proof of Lemma~\ref{lem:dse-nash-gap}}\label{proof:dse-nash-gap}

\lemDSENashGap*

\begin{proof}
    Following the proof outline from the main text, we use the Dominant Strategy Equilibrium assumption to simplify the Nash gap as follows:
    $$
    \operatorname{NashGap}(\pi) = \max_{i, \pi^*_i} \left[V_i^{\pi^*_i, \pi_{-i}}(\nu_0) - V_i^{\pi_i, \pi_{-i}}(\nu_0)\right] = \max_{i} \left[V_i^{\pi^E_i, \pi_{-i}}(\nu_0) - V_i^{\pi_i, \pi_{-i}}(\nu_0)\right]
    $$

    We can then add and subtract the expert value for every player, and apply the performance difference lemma (Lemma~\ref{lem:performance-difference-lemma}) twice:
    \begin{align*}
        \operatorname{NashGap}(\pi)
        &= \max_{i} \left[V_i^{\pi^E_i, \pi_{-i}}(\nu_0) - V_i^{\pi_i, \pi_{-i}}(\nu_0)\right]\\
        &= \max_{i} \left[V_i^{\pi^E_i, \pi_{-i}}(\nu_0) - V_i^{\pi^E}(\nu_0) + V_i^{\pi^E}(\nu_0) - V_i^{\pi_i, \pi_{-i}}(\nu_0)\right]\\
        &\leq \frac{1}{1-\gamma} \cdot \max_{i} \mathbb{E}_{s \sim \mu_{\pi^E}}\left[\sum_a Q^{\pi}(s, a)\left(\pi^E(a | s) - \pi(a | s)\right)\right]\\
        &\quad - \frac{1}{1-\gamma} \cdot \max_{i} \mathbb{E}_{s \sim \mu_{\pi^E}}\left[\sum_a Q^{\pi^E_i, \pi_{-i}}(s, a)\left(\pi^E_{-i}(a_{-i} | s) - \pi_{-i}(a_{-i} | s)\right)\pi^E_i(a_i | s)\right]
    \end{align*}

    where the Q-functions are defined as in Lemma~\ref{lem:performance-difference-lemma}.

    Now upper bounding the Q-functions using $\lVert Q^{\pi'}\rVert_{\infty} \leq 1/(1-\gamma)$ for any policy $\pi' \in \Pi$:
    \begin{align*}
        \operatorname{NashGap}(\pi)
        &\leq \frac{1}{(1-\gamma)^2} \cdot \max_{i} \mathbb{E}_{s \sim \mu_{\pi^E}}\left[\left\lVert \pi(\cdot | s) - \pi^E(\cdot | s)\right\rVert_1 + \left\lVert \pi_{-i}(\cdot | s) - \pi^E_{-i}(\cdot | s)\right\rVert_1\right]
    \end{align*}

    Using Lemma~\ref{lem:l1-norm-factored-bound} we conclude,
    $$
    \operatorname{NashGap}(\pi) \leq \frac{(2n-1)\epsilon_{\text{BC}}}{(1-\gamma)^2} \leq \frac{2n\epsilon_{\text{BC}}}{(1-\gamma)^2}
    $$
\end{proof}

\subsection{Proof of Lemma~\ref{lem:gen-dse-nash-gap}}\label{proof:gen-dse-nash-gap}

\lemGenDSENashGap*

The key is to use a very similar construction as for the proof of Lemma~\ref{lem:dse-nash-gap}, leveraging the triangle inequality to introduce the slackness $\delta(\epsilon_{\text{BC}})$.

\begin{proof}
    Using similar arguments, we apply Lemma~\ref{lem:performance-difference-lemma} and get:
    \begin{align*}
        \operatorname{NashGap}(\pi)
        &\leq \frac{1}{(1-\gamma)^2} \cdot \max_{i} \mathbb{E}_{s \sim \mu_{\pi^E}}\left[\left\lVert \pi(\cdot | s) - \pi^E(\cdot | s)\right\rVert_1 + \left\lVert (\pi^*_i, \pi_{-i})(\cdot | s) - \pi^E(\cdot | s)\right\rVert_1\right]
    \end{align*}

    Them we conclude using Lemma~\ref{lem:l1-norm-factored-bound} and our assumption:
    $$
    \operatorname{NashGap}(\pi) \leq \frac{(2n-1)\epsilon_{\text{BC}} + \delta(\epsilon_{\text{BC}})}{(1-\gamma)^2} \leq \frac{2n\epsilon_{\text{BC}} + \delta(\epsilon_{\text{BC}})}{(1-\gamma)^2}
    $$
\end{proof}

\section{Additional lemmas}

\renewcommand{\thetheorem}{\thesection.\arabic{theorem}}
\renewcommand{\thelemma}{\thesection.\arabic{lemma}}

\setcounter{theorem}{0}
\setcounter{lemma}{0}

\begin{lemma}[Performance Difference Lemma, see e.g. Theorem~IX.5 in \citet{alatur2024independent}]\label{lem:performance-difference-lemma}
    For any $\pi, \pi' \in \Pi, i \in [n]$,
    $$
    V_i^{\pi'_i, \pi_{-i}}(\nu_0) - V_i^{\pi_i, \pi_{-i}}(\nu_0) = \frac{1}{1-\gamma} \mathbb{E}_{s,a \sim \rho_{\pi'_i, \pi_{-i}}}\left[Q^{\pi}_i(s, a) - V^{\pi}_i(s)\right],
    $$
    where $Q^{\pi}_i(s, a) \coloneqq r_i(s, a) + \gamma \sum_{s' \in \mathcal{S}} P(s'|s, a)V^{\pi}_i(s')$.

    And more generally,
    $$
    V_i^{\pi'}(\nu_0) - V_i^{\pi}(\nu_0) = \frac{1}{1-\gamma} \mathbb{E}_{s,a \sim \rho_{\pi'}}\left[Q^{\pi}_i(s, a) - V^{\pi}_i(s)\right],
    $$
\end{lemma}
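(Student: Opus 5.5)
To prove the Performance Difference Lemma, I will reduce it to the standard single-agent telescoping argument. The first identity is the special case of the second in which $\pi'$ is replaced by the product policy $(\pi'_i, \pi_{-i})$, so it suffices to prove the general statement for arbitrary $\pi, \pi' \in \Pi$ and a fixed player $i$. Throughout, I read the expression $V^{\pi}_i(\nu_0)$ with the normalization $V_i^{\pi}(s) = \mathbb{E}^{\pi}\left[\sum_{t\ge 0}\gamma^t r_i(s_t,a_t) \mid s_0 = s\right]$. This is consistent with the occupancy-measure definition in Section~\ref{sec:preliminaries}, where the factor $\frac{1}{1-\gamma}$ cancels the $(1-\gamma)$ in $\mu_\pi$. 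The Bellman relation then reads $V^{\pi}_i(s) = \sum_a \pi(a|s) Q^{\pi}_i(s,a)$, with $Q^{\pi}_i$ as defined in the lemma.

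The key step is a telescoping sum along trajectories generated by $\pi'$. I take a trajectory $s_0 \sim \nu_0$, $a_t \sim \pi'(\cdot|s_t)$, $s_{t+1} \sim P(\cdot|s_t,a_t)$, and write
\begin{align*}
V_i^{\pi'}(\nu_0) - V_i^{\pi}(\nu_0) &= \mathbb{E}^{\pi'}\Big[\sum_{t\ge0}\gamma^t r_i(s_t,a_t)\Big] - \mathbb{E}^{\pi'}\left[V^\pi_i(s_0)\right]\\
&= \mathbb{E}^{\pi'}\Big[\sum_{t\ge0}\gamma^t\big(r_i(s_t,a_t) + \gamma V^\pi_i(s_{t+1}) - V^\pi_i(s_t)\big)\Big].
\end{align*}
This holds because the added terms telescope. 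The interchange of sum and expectation, as well as the vanishing of the tail $\gamma^{T}V^\pi_i(s_T)$, are justified by boundedness: $|r_i|\le 1$ gives $\|V^\pi_i\|_\infty \le 1/(1-\gamma)$.

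Next I condition on $(s_t,a_t)$ and use the Markov property, which gives $\mathbb{E}[\gamma V^\pi_i(s_{t+1})\mid s_t,a_t] = \gamma\sum_{s'}P(s'|s_t,a_t)V^\pi_i(s')$. Each summand therefore becomes $\gamma^t\,\mathbb{E}^{\pi'}[Q^\pi_i(s_t,a_t)-V^\pi_i(s_t)]$. Finally, I rewrite $\sum_t \gamma^t \mathbb{P}^{\pi'}(s_t=s,a_t=a) = \frac{1}{1-\gamma}\rho_{\pi'}(s,a)$ using the definitions of $\mu_{\pi'}$ and $\rho_{\pi'}$. This yields exactly $\frac{1}{1-\gamma}\mathbb{E}_{(s,a)\sim\rho_{\pi'}}[Q^\pi_i(s,a)-V^\pi_i(s)]$.

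The main obstacle is not conceptual but a matter of bookkeeping. I must make sure the normalization of $V$ in the paper (which states $V$ via $\rho_\pi$ with a $\frac{1}{1-\gamma}$ prefactor) matches the Bellman form used for $Q$. I also need to check that nothing requires $\pi'$ to be a product policy, since the argument only uses that $\pi'$ is a stationary joint policy. If the state space is countably infinite, as in the example of Theorem~\ref{thm:no-state-coverage}, the series manipulations are still justified by dominated convergence, using the uniform bound on $V^\pi_i$ and $Q^\pi_i$.
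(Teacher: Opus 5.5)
Your proof is correct and follows essentially the paper's approach: the paper only cites Theorem~IX.5 of \citet{alatur2024independent} for the discounted case, and its own finite-horizon proof (Lemma~\ref{lem:finite-performance-difference-lemma}) rests on the same telescoping idea, written as a one-step Bellman recursion rather than along trajectories, while you handle the tail $\gamma^T V^\pi_i(s_T)$ and the rewriting through $\rho_{\pi'}$ correctly. Your explicit normalization $V_i^{\pi}(s)=\mathbb{E}^{\pi}[\sum_{t}\gamma^t r_i(s_t,a_t)\mid s_0=s]$ is also the right reading of the paper's definition, which as written does not actually depend on $s$.
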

\begin{proof}
    See proof of Theorem~IX.5 in \citet{alatur2024independent}.
\end{proof}

\begin{lemma}\label{lem:l1-norm-factored-bound}
    Let $n \in \mathbb{N}$, and let $p_i, q_i \in \Delta_{m_i-1}$ be probability distributions over a discrete set of size $m_i$ isomorphic to $[m_i]$. Further, note $p = \times_{i=1}^n p_i, q = \times_{i=1}^n q_i$. Then,
    $$
        \sum_{j \in [m_1] \times \dots \times [m_n]} \lvert p(j) - q(j) \rvert \leq \sum_{i=1}^n \sum_{j = 1}^{m_i} \lvert p_i(j) - q_i(j) \rvert
    $$
\end{lemma}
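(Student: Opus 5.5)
We prove Lemma~\ref{lem:l1-norm-factored-bound} by induction on $n$, using a telescoping (hybrid) decomposition of the product distributions. The case $n=1$ is an equality. For the inductive step, write $p = p_{<n} \times p_n$ and $q = q_{<n} \times q_n$, where $p_{<n} = \times_{i=1}^{n-1} p_i$ and similarly for $q_{<n}$. Both are again product probability distributions over $[m_1]\times\dots\times[m_{n-1}]$.

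The key step inserts the hybrid distribution $p_{<n}\times q_n$. Writing $j=(j',j_n)$, the triangle inequality gives
\begin{align*}
\lvert p(j)-q(j)\rvert &\le p_{<n}(j')\,\lvert p_n(j_n)-q_n(j_n)\rvert + q_n(j_n)\,\lvert p_{<n}(j')-q_{<n}(j')\rvert .
\end{align*}
Summing over all $j$ and using $\sum_{j'} p_{<n}(j') = 1$ and $\sum_{j_n} q_n(j_n)=1$ (both are probability distributions), we obtain
\begin{align*}
\lVert p-q\rVert_1 \le \lVert p_n-q_n\rVert_1 + \lVert p_{<n}-q_{<n}\rVert_1 .
\end{align*}
Applying the induction hypothesis to the second term completes the proof.

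No step is a real obstacle. The only point needing care is that the marginal sums equal $1$, which relies on the $p_i,q_i$ lying in the simplex and on nonnegativity when absolute values are pulled out of the factors. An equivalent non-inductive write-up telescopes directly through the hybrids $h_k = q_1\times\dots\times q_k\times p_{k+1}\times\dots\times p_n$. Each difference $\lVert h_{k-1}-h_k\rVert_1$ factors exactly as $\lVert p_k - q_k\rVert_1$, because the remaining coordinates are shared probability factors that sum to one.
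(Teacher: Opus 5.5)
Your proof is correct and follows essentially the same route as the paper: induction on $n$, a triangle inequality through a hybrid product distribution, and the fact that the shared factors sum to one. The only cosmetic difference is the choice of hybrid. The paper inserts $q_{<n}\times p_n$ where you use $p_{<n}\times q_n$, and the two choices are symmetric.
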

\begin{proof}
    We proceed with a proof by induction.

    The statement trivially holds for $n = 1$. Assume it holds for $n$, we show that it also holds for $n+1$. For simplicity, note $S_n = \times_{i=1}^n [m_i]$ and $p^n = \times_{i=1}^n p_i, q^n = \times_{i=1}^n q_i$.

    \begin{align*}
        \sum_{j \in S_n \times [m_{n+1}]} \lvert p(j) - q(j) \rvert
        &= \sum_{j_1 \in S_n} \sum_{j_2 \in [m_{n+1}]} \lvert p^n(j_1)p_{n+1}(j_2) - q^n(j_1)q_{n+1}(j_2) \rvert\\
        &= \sum_{j_1 \in S_n} \sum_{j_2 \in [m_{n+1}]} \lvert p^n(j_1)p_{n+1}(j_2) - q^n(j_1)p_{n+1}(j_2)\\
        &\qquad \qquad \qquad + q^n(j_1)p_{n+1}(j_2) - q^n(j_1)q_{n+1}(j_2) \rvert\\
        &\leq \sum_{j_1 \in S_n} \sum_{j_2 \in [m_{n+1}]} \lvert p^n(j_1)p_{n+1}(j_2) - q^n(j_1)p_{n+1}(j_2) \rvert \\
        &+ \sum_{j_1 \in S_n} \sum_{j_2 \in [m_{n+1}]} \lvert q^n(j_1)p_{n+1}(j_2) - q^n(j_1)q_{n+1}(j_2) \rvert\\
        &= \sum_{j_1 \in S_n} \lvert p^n(j_1) - q^n(j_1)\rvert + \sum_{j_2 \in [m_{n+1}]} \lvert p_{n+1}(j_2) - q_{n+1}(j_2) \rvert\\
        &\leq \sum_{i=1}^n \sum_{j = 1}^{m_i} \lvert p_i(j) - q_i(j) \rvert + \sum_{j_2 \in [m_{n+1}]} \lvert p_{n+1}(j_2) - q_{n+1}(j_2) \rvert\\
        &= \sum_{i=1}^{n+1} \sum_{j = 1}^{m_i} \lvert p_i(j) - q_i(j) \rvert
    \end{align*}
\end{proof}

    $$
    $$

\section{Numerical Validation}\label{app:numerical-validation}

To better illustrate how our theoretical insights are related to practical empirical settings, in the following, we provide the example of a game in which a tight $\delta$ function can be estimated from data.

Below, we show two main properties empirically: (1) we show that our theoretically-derived upper bound indeed holds, (2) we show the impact of entropy-regularization on the tightest delta function estimate. We would like to emphasize though that the tightness of our upper bound will ultimately depend on the precise environment and imitation setting that is considered.

For both experiments, we consider the following Tag-Game environment:

\textbf{Tag-Game}. An infinite horizon two-player zero-sum game played on a 2x3 grid. For this game, we set $\gamma = 0.8$. For each time step $t$, one of the player is the tagger, whose goal is to hit/collide with the other player who is then being chased. When the collision happens, the roles are inverted. The action spaces $\mathcal{A}_1 = \mathcal{A}_2 = \{0, 1, 2, 3\}$ represent actions corresponding to moving one step in one of the four cardinal directions. For each of the experiment, we consider results from all the initial states where players start at different corners of the grid. However, we reduce the number of initial states to two non-symmetrical states, while all the others can be recovered by symmetry of the game. The rewards are zero-sum, penalize the tagger with a constant reward, while also penalizing players from trying to escape the grid.

In this environment, we first compute the unique regularized Nash equilibrium $\pi^E$ using a Nash Value Iteration algorithm (see e.g. \citet{perolat2015approximate}). Then, we perturb the equilibrium at every state with different noise levels to randomly generate a set of BC policies. For each imitation policy $\pi^i$ we compute its empirical behavioral cloning error $\hat{\epsilon}_{ij}$ using $2\cdot10^3$ episodes to estimate the expert state distribution. We can then again apply value iteration in the induced MDPs to compute best-responses and estimate a tight delta function.

\subsection{On the validity of the upper bound (Lemma~\ref{lem:gen-dse-nash-gap})}

Following Definition~\ref{def:delta-continuity}, we compute a tight $\delta$ by taking the cumulative maximum of the expected L1 norm between the expert and the best response, maximizing over the two players. Similarly, we compute the tight Nash gap upper bound as a cumulative maximum over the empirical Nash gaps.

For this experiment, we use $400$ noise levels evenly sampled from $[0, 0.4]$, and a fixed temperature of $0.1$ for computational efficiency. In Figure~\ref{fig:exp-nash-delta-comparison} we show how in this simple setting $\delta$ increases when $\epsilon_{\text{BC}}$ increases, as well as the $\operatorname{NashGap}$.

\begin{figure}[h]
    \centering
    \includegraphics[width=0.8\linewidth]{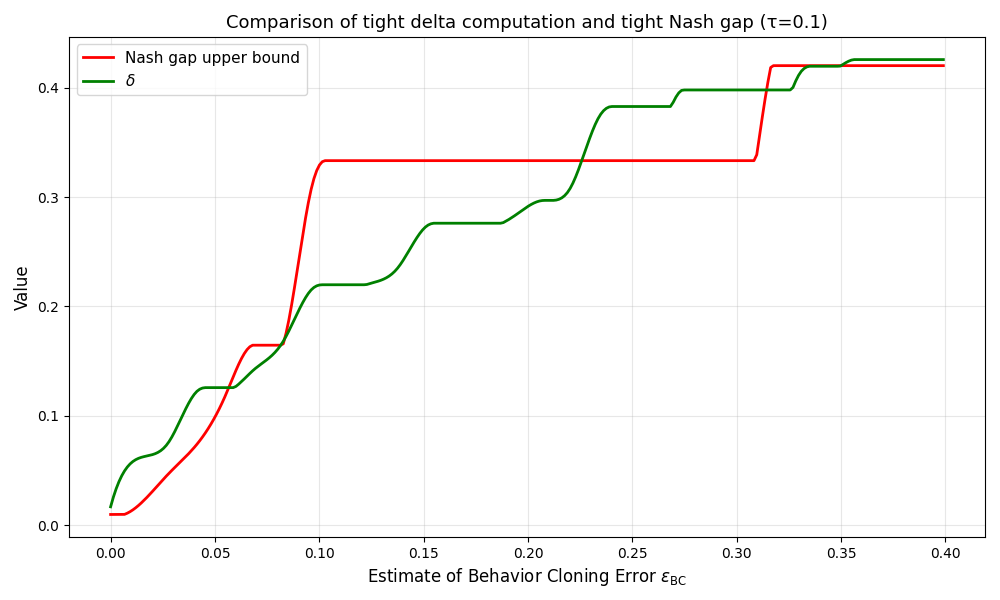}
    \caption{Evolution of the tight Nash gap upper bound and the tight $\delta$ function with the behavior cloning error. This shows how the delta function and Nash gap tend to increase together in the simple Tag-Game environment.}
    \label{fig:exp-nash-delta-comparison}
\end{figure}

This plot shows that the Nash gap tends to increase together with $\delta$. Our upper bound, being greater than $\delta(\epsilon_{\text{BC}})\tilde{H}^2$ with $\tilde{H} := 1/(1-\gamma)$ the effective horizon, is therefore valid.

Note that in an arbitrary toy environment, the bound cannot be required to be tight. This is expected behavior as the bound is uniform on the class of $\delta$-continuous games, thus corresponding to a worst-case scenario. In particular, the $\tilde{H}^2$ factor corresponds to the worst-case expected advantage of the best-response with respect to the expert in the newly induced MDP. A similar behavior will appear with concentrability-based bounds.

\subsection{On the impact of entropy regularization}

We now again use Definition~\ref{def:delta-continuity} to compute a tight estimate of $\delta(\epsilon_{\text{BC}})$ for various BC errors. Related to the ideas mentioned in Section~\ref{sec:delta-continuity}, we vary the entropy regularization factor and show its impact on $\delta$. 

In Figure~\ref{fig:exp-entrop-full-range} we run the experiment by generating $250$ BC policies on a large range of $[0, 1.0]$ for the noise levels. This reveals that increasing the temperature tends to lower the $\delta(\epsilon_{\text{BC}})$ when $\epsilon_{\text{BC}} \in [0, 1]$ for the simple Tag-Game environment. However, we note that fixing a game $G$, this does not necessarily mean that the $\operatorname{NashGap}$ is reduced by increasing the temperature, since our upper bound Lemma~\ref{lem:gen-dse-nash-gap} is not tight for $G$ in general.

\begin{figure}[h]
    \centering
    \begin{subfigure}[b]{0.49\textwidth}
        \centering
        \includegraphics[width=\linewidth]{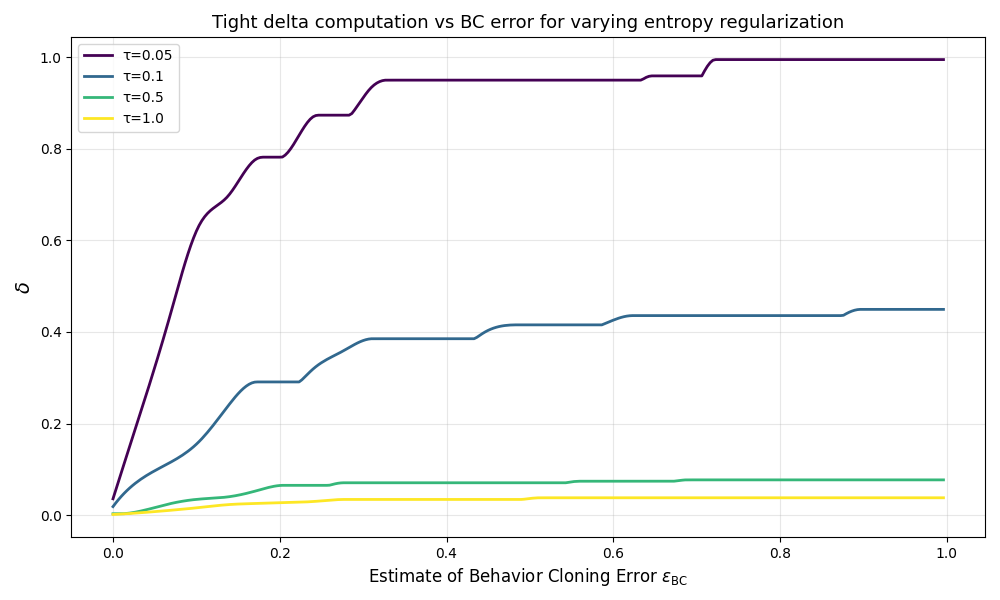}
        \caption{Large range of $\epsilon_{\text{BC}}$}
        \label{fig:exp-entrop-full-range}
    \end{subfigure}
    \hfill %
    \begin{subfigure}[b]{0.49\textwidth}
        \centering
        \includegraphics[width=\linewidth]{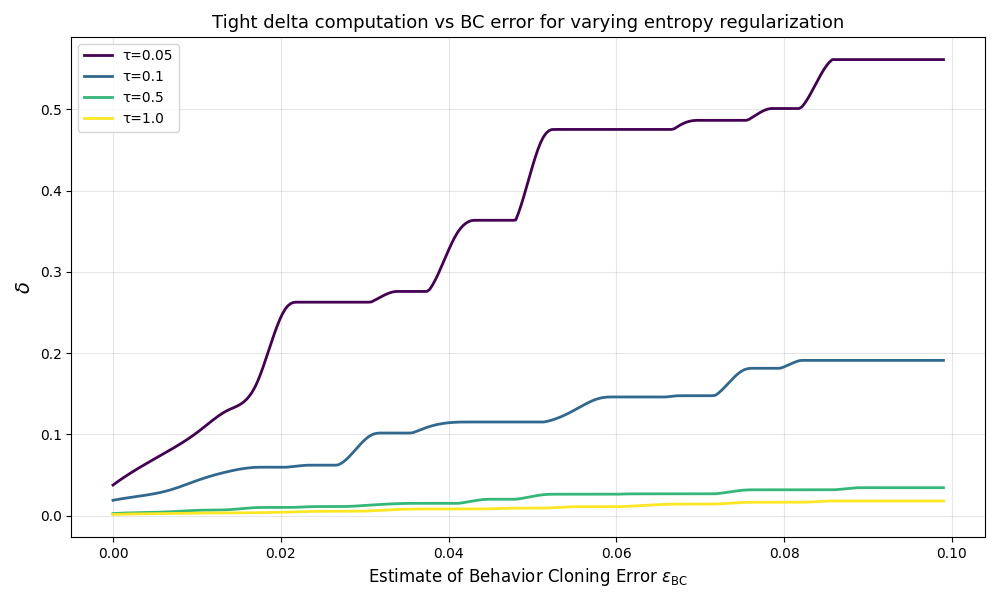}
        \caption{$\epsilon_{\text{BC}}$ at scale $10^{-2}$}
        \label{fig:exp-entrop-small-range}
    \end{subfigure}
    
    \caption{Evolution of the tight $\delta$ for different entropy temperature parameters. This shows that a greater temperature leads to a better control on $\delta$. This behavior is consistent on the full range of $\epsilon_{\text{BC}}$ values.}
    \label{fig:combined_figures}
\end{figure}

The appropriate $\epsilon_{\text{BC}}$ scale of importance ultimately depends on the application, the trajectory budget, expressiveness of the imitation model, as well as other factors. In the tabular setting, where $\epsilon_{\text{BC}} \approx 10^{-2}$ is achievable with tens of thousands of examples (see e.g. \cite{freihaut2025learningequilibriadataprovably}), the behavior of regularization coefficients remains consistent, as shown in Figure~\ref{fig:exp-entrop-small-range}.

For practical work, this matters because when humans are used as experts, we often assume entropy regularization to model humans' irrationality. Having a sense of how this can also help control the exploitability of the imitation policies is therefore important.

For theoretical work, this is useful as it tells us that beyond simplifying the analysis of optimal policies, entropy regularization might also lead to better guarantees on the worst-case Nash gap.

\section{Finite horizon case}\label{app:finite-horizon}

The main content of the paper focuses on the infinite horizon case for notation simplicity. We show in this section how the results also translate to the finite horizon case. The statements are usually the same, but replacing the effective horizon $\frac{1}{1-\gamma}$ by a finite horizon $H$. We formalize this intuition below by first providing alternative definitions for the finite agent case, and then reproving our main results.

\subsection{Definitions}

\subsubsection{Finite horizon Markov Games}

A finite horizon $n$-player Markov Game is defined similarly to its infinite horizon equivalent with a tuple $(\mathcal{S}, \mathcal{A}, P, \{r_i\}_{i=1}^n, \nu_0, H)$. The discount factor $\gamma$ has been replaced by a finite horizon $H \in \mathbb{N}$. Further, rewards and policies of this game are now time-dependent: for all $t$, rewards are now denoted $r^t_i: \mathcal{S} \times \mathcal{A} \to [-1, 1]$ and policies become non-stationary $\pi^t_i: \mathcal{S} \to \Delta_{\mathcal{A}}$. The transition dynamics remain Markovian and $P$ is unchanged.

Because of introduced time dependence, occupancy measures are usually not generalized but denote the visitation frequencies at time t of a state and state-action pair, respectively.
$$
\mu^t_{\pi}(s) \coloneqq \mathbb{P}(s_t = s) \hspace{5em} \rho^t_{\pi}(s, a) \coloneqq \mu^t_{\pi}(s)\pi^t(a | s)
$$

This allows the definition of time-dependent value functions as follows
$$
V_{i, t}^{\pi}(s) \coloneqq \sum_{h=t}^{H-1} \mathbb{E}_{(s, a) \sim \rho^t_{\pi}}[r^t_i(s, a)] \quad \forall i \in [n],
$$
For ease of notation we define $V_{i}^{\pi}(s) = V_{i, 0}^{\pi}(s)$ for all $s \in \mathcal{S}$ and policy $\pi$. Again, the definition of value functions is extended to $V_{i, t}^{\pi}(\nu)$ for any distribution $\nu \in \Delta_{\mathcal{S}}$.

\subsubsection{Assumptions on matching errors}

The assumptions on errors at convergence are adapted as follows.

\textbf{BC Error:} Error from directly matching the empirical distribution of the independent individual players $\epsilon_{\text{BC}} \coloneqq \max_{i, t} \mathbb{E}_{s \sim \mu^t_{\pi^E}}\left[\left\lVert\pi_{i, t}(\cdot | s) - \pi^E_{i, t}(\cdot | s)\right\rVert_{1}\right]$

\textbf{Measure Matching Error:} Error on matching occupancy measures.\\
- State-only occupancy measure: $\epsilon_{\mu} \coloneqq \max_t \left\lVert \mu^t_{\pi} - \mu^t_{\pi^E} \right\rVert_1$\\
- State-action occupancy measure: $\epsilon_{\rho} \coloneqq \max_t \left\lVert \rho^t_{\pi} - \rho^t_{\pi^E} \right\rVert_1$

\subsection{Performance difference lemma for finite horizon}

\renewcommand{\thetheorem}{\thesection.\arabic{theorem}}
\renewcommand{\thelemma}{\thesection.\arabic{lemma}}

\setcounter{theorem}{0}
\setcounter{lemma}{0}

In this section we adapt the previously stated Lemma~\ref{lem:performance-difference-lemma} to finite horizon Markov games. This will allow us to generalize the results of the paper in Appendix~\ref{app:results-finite-horizon}.

\begin{lemma}[Finite horizon version of Lemma~\ref{lem:performance-difference-lemma}]\label{lem:finite-performance-difference-lemma}
    For any $\pi, \pi' \in \Pi, i \in [n]$,
    $$
    V_i^{\pi'_i, \pi_{-i}}(\nu_0) - V_i^{\pi_i, \pi_{-i}}(\nu_0) = \sum_{t=0}^{H-1} \mathbb{E}_{s,a \sim \rho^t_{(\pi'_i, \pi_{-i})}}\left[Q^{\pi}_{i, t}(s, a) - V^{\pi}_{i, t}(s)\right],
    $$
    where $Q^{\pi}_{i, t}(s, a) \coloneqq r^t_i(s, a) + \sum_{s' \in \mathcal{S}} P(s'|s, a)V^{\pi}_{i, t+1}(s')$.

    And more generally,
    $$
    V_i^{\pi'}(\nu_0) - V_i^{\pi}(\nu_0) = \sum_{t=0}^{H-1} \mathbb{E}_{s,a \sim \rho^t_{(\pi'}}\left[Q^{\pi}_{i, t}(s, a) - V^{\pi}_{i, t}(s)\right],
    $$
\end{lemma}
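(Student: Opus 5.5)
The plan is to prove the finite-horizon identity by the usual telescoping argument along trajectories of the deviating policy. It suffices to prove the general form, since the unilateral form is the special case where the second policy is $(\pi'_i,\pi_{-i})$.

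\textbf{Step 1 (pathwise telescoping).} Fix a trajectory $s_0,a_0,\dots,s_{H-1},a_{H-1}$ generated by $\pi'$ from $s_0\sim\nu_0$, and use the convention $V^{\pi}_{i,H}\equiv 0$. Write
$$\sum_{t=0}^{H-1} r^t_i(s_t,a_t) - V^{\pi}_{i,0}(s_0) = \sum_{t=0}^{H-1}\Bigl(r^t_i(s_t,a_t) + V^{\pi}_{i,t+1}(s_{t+1}) - V^{\pi}_{i,t}(s_t)\Bigr),$$
which holds because the value terms telescope to $V^{\pi}_{i,H}(s_H) - V^{\pi}_{i,0}(s_0) = -V^{\pi}_{i,0}(s_0)$.

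\textbf{Step 2 (take expectations).} Take expectations under $\pi'$. The left-hand side becomes $V^{\pi'}_i(\nu_0) - V^{\pi}_i(\nu_0)$, since $s_0\sim\nu_0$ in both cases. On the right-hand side, condition on $(s_t,a_t)$ and apply the tower property: by the Markov property, $\mathbb{E}[V^{\pi}_{i,t+1}(s_{t+1})\mid s_t,a_t] = \sum_{s'}P(s'\mid s_t,a_t)V^{\pi}_{i,t+1}(s')$. Each summand therefore equals $\mathbb{E}_{(s,a)\sim\rho^t_{\pi'}}[Q^{\pi}_{i,t}(s,a) - V^{\pi}_{i,t}(s)]$, where $\rho^t_{\pi'}$ is the marginal law of $(s_t,a_t)$, i.e.\ $\mu^t_{\pi'}(s)\pi'^t(a\mid s)$. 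Summing over $t$ gives the identity.

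\textbf{Step 3 (consistency of definitions).} The main obstacle is not the computation but making the paper's definitions consistent, and I would fix them explicitly before Step 1. The paper writes $V^{\pi}_{i,t}$ using $\rho^t$ inside a sum over $h$; the intended reading is $V^{\pi}_{i,t}(s) = \mathbb{E}^{\pi}\bigl[\sum_{h=t}^{H-1} r^h_i(s_h,a_h)\mid s_t=s\bigr]$. Under that reading the Bellman relation $V^{\pi}_{i,t}(s) = \sum_a \pi^t(a\mid s)Q^{\pi}_{i,t}(s,a)$ holds, with $Q^{\pi}_{i,t}$ defined as in the statement, and this is what justifies identifying $V^{\pi}_{i,0}(\nu_0)$ with the value of $\pi$.

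Because the horizon is finite, no convergence issues arise: every sum is finite and every quantity is bounded by $H$, so interchanging expectation and summation is immediate.
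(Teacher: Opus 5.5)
Your proof is correct and is essentially the paper's argument: the paper unrolls the one-step recursion
\[
V^{\pi'_i,\pi_{-i}}_{i,t}(\nu_t)-V^{\pi}_{i,t}(\nu_t)=\bigl[V^{\pi'_i,\pi_{-i}}_{i,t+1}(\nu_{t+1})-V^{\pi}_{i,t+1}(\nu_{t+1})\bigr]+\mathbb{E}_{s\sim\nu_t,\,a_i\sim\pi'_{i,t}(\cdot\mid s)}\bigl[Q^{\pi}_{i,t}(s,a_i)\bigr]-V^{\pi}_{i,t}(\nu_t),
\]
which is your pathwise telescoping taken in expectation one step at a time, and your direct proof of the general form with joint-action $Q$-functions is, if anything, cleaner than the paper's route (it proves the unilateral form with player-$i$ $Q$-functions averaged over $\pi_{-i}$ and only asserts the general form). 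One refinement: take the backward recursion from $V^{\pi}_{i,H}\equiv 0$ as the \emph{definition} of $V^{\pi}_{i,t}$, because conditioning on $s_t=s$ is ill-defined at states that $\pi$ does not reach at time $t$ but $\pi'$ may; your Steps 1--2 only need $V^{\pi}_{i,t}$ defined everywhere with $\mathbb{E}_{s\sim\nu_0}[V^{\pi}_{i,0}(s)]$ equal to the return of $\pi$, and do not otherwise use the Bellman relation.
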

\begin{proof}
    We only prove the first version. The generalization can be proven by the exact same construction.
    
    To simplify, given a policy $\tilde{\pi}$ we overload the notations of the Q functions as follows. For every player $i \in [N]$ we denote the state-actions values of $i$ in the MDP induced by $\tilde{\pi}_{-i}$ as:
    $$
    Q^{\tilde{\pi}}_{i, t}(s, a_i) = \mathbb{E}_{a_{-i} \sim \tilde{\pi}_{-i, t}(\cdot | s)}\left[r^t_i(s, a) + \mathbb{E}_{s'}\left[V_{i, t+1}^{\tilde{\pi}}(s')\right] \right], \quad 0 \leq t \leq H-1
    $$

    Now, let $0 < t \leq H-1$. Developing $V_{i, t}^{\pi'_i, \pi_{-i}}(\nu_t)$ for any distribution $\nu_t \in \Delta_{\mathcal{S}}$ we have
    \begin{align*}
        V_{i, t}^{\pi'_i, \pi_{-i}}(\nu_t)
        &= V_{i, t}^{\pi'_i, \pi_{-i}}(\nu_t) - \mathbb{E}_{s \sim \nu_t, a_i \sim \pi'_{i, t}(\cdot | s)}\left[Q^{\pi}_{i, t}(s, a_i)\right] + \mathbb{E}_{s \sim \nu_t, a_i \sim \pi'_{i, t}(\cdot | s)}\left[Q^{\pi}_{i, t}(s, a_i)\right]\\
        &= \mathbb{E}_{s \sim \nu_t, a_i \sim \pi'_{i, t}(\cdot | s)}\left[Q^{\pi'_i, \pi_{-i}}_{i, t}(s, a_i)\right]\\
        &\qquad - \mathbb{E}_{s \sim \nu_t, a_i \sim \pi'_{i, t}(\cdot | s)}\left[Q^{\pi}_{i, t}(s, a_i)\right] + \mathbb{E}_{s \sim \nu_t, a_i \sim \pi'_{i, t}(\cdot | s)}\left[Q^{\pi}_{i, t}(s, a_i)\right]\\
        &= V_{i, t+1}^{\pi'_i, \pi_{-i}}(\nu_{t+1}) - V_{i, t+1}^{\pi_i, \pi_{-i}}(\nu_{t+1}) + \mathbb{E}_{s \sim \nu_t, a_i \sim \pi'_{i, t}(\cdot | s)}\left[Q^{\pi}_{i, t}(s, a_i)\right]
    \end{align*}
    where $\nu_{t+1} \in \Delta_{\mathcal{S}}$ is the distribution over the next state after following policy $(\pi'_i, \pi_{-i})$.

    Subtracting $V_{i}^{\pi}(\nu_t)$ on both sides and applying the recursion we get for the initial distribution $\nu_0$,
    $$
    V_i^{\pi'_i, \pi_{-i}}(\nu_0) - V_i^{\pi}(\nu_0) = \sum_{t=0}^{H-1} \mathbb{E}_{s \sim \mu^t_{\pi'_i, \pi_{-i}}}\left[\mathbb{E}_{a_i \sim \pi'_{i, t}(\cdot | s)}\left[Q^{\pi}_{i, t}(s, a_i)\right] -  V_{i, t}^{\pi}(s)\right]
    $$
    
    where we recognized $\nu_t \sim \mu^t_{\pi'_i, \pi_{-i}}$.

    Rearranging the terms gives the final result
    \begin{align*}
         V_i^{\pi'_i, \pi_{-i}}(\nu_0) - V_i^{\pi_i, \pi_{-i}}(\nu_0)
         &= \sum_{t=0}^{H-1} \mathbb{E}_{s,a \sim \rho^t_{(\pi'_i, \pi_{-i})}}\left[Q^{\pi}_{i, t}(s, a) - V^{\pi}_{i, t}(s)\right]
    \end{align*}
\end{proof}

\subsection{Results}\label{app:results-finite-horizon}

For completeness, we reprove below our results now considering finite horizon games. The counter-examples remain largely the same, as well as the proof structures.

\subsubsection{Section~\ref{sec:perfect-learners}}

\begin{lemma}[Finite version of Lemma~\ref{state-coverage-not-enough-lemma}]
    There exists a game and a corresponding expert policy $\pi^E$ such that $\mathcal{S}^+_{\pi^E} = \mathcal{S}$. Moreover, there exists a policy $\pi$ such that $\mu_{\pi^E} = \mu_{\pi}$ and $\operatorname{NashGap}(\pi) \geq \Omega\left(H\right)$.
\end{lemma}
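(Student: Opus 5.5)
I will reuse the three-state cooperative game of Figure~\ref{fig:example-state-measure}, now played over a finite horizon $H$ with time-independent rewards $r^t_i = r_i$ and the same deterministic transitions. The one change is the initial distribution: I take $\nu_0$ uniform over $\{s_0,s_1,s_2\}$. The key observation is that under any constant joint action the uniform distribution is invariant. Under $(a_1,a_1)$ the dynamics act as the cyclic permutation $s_0\to s_1\to s_2\to s_0$, which preserves the uniform distribution. Under any other joint action every state self-loops, which preserves it trivially. Hence $\mu^t_{\pi}(s)=1/3$ for all $t$ and all $s$ under both constant policies. So $\mu^t_{\pi^E}=\mu^t_{\pi}$ for every $t$, which is the finite-horizon notion of state matching, and full-state support holds at every time step.

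First I would check that $\pi^E\equiv(a_1,a_1)$ is a Nash equilibrium. I would use a backward induction argument in place of the discounted advantage argument of Appendix~\ref{proof-lem-example-state-coverage}. Fix player $i$ and suppose the other player always plays $a_1$. At any step, deviating to $a_2$ earns reward $-1$, which is the minimum possible, and keeps the state unchanged. Conforming earns a reward in $\{1/3,2/3,1\}$ and moves along the cycle. By induction on the remaining horizon $H-t$, the optimal continuation value from any state is attained by conforming. This needs a short comparison of value-to-go across states: any single-step difference in continuation values is at most $2(H-t-1)$, so the immediate-reward comparison alone does not suffice. A cleaner route is to note that the conforming path earns at least $1/3$ per step for every remaining step, while any policy that deviates at some step earns $-1$ at that step, and every other step earns at most $1$.

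Here lies the main obstacle. The cycle rewards are $1/3,2/3,1$, so a deviation that self-loops at $s_2$ (value $-1$) could in principle reposition the agent on the cycle to harvest more high rewards. I would handle this by bounding per-step rewards directly. Over a window, each deviation step costs $-1$, and conforming steps earn at most $1$. Replacing a deviation step by conforming changes the reward at that step from $-1$ to at least $1/3$. It does shift the subsequent cycle phase, but the average cycle reward stays $2/3$ and any phase shift costs at most $1-1/3$ once. To be safe, a simpler fix is to change the cycle rewards to a constant $1$ on all $(a_1,a_1)$ arrows while keeping $-1$ elsewhere. Then conforming earns the maximum reward at every step, so $\pi^E$ is trivially a (dominant response) Nash equilibrium, and the statement only requires existence of some game.

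Finally, take $\pi\equiv(a_1,a_2)$ at all times. Its state marginals match by invariance, and every step gives reward $-1$, so $V_2^{\pi}(\nu_0)=-H$. Player 2 can deviate to $a_1$ at all times, which recovers $\pi^E$ and gives value at least $H/3$ (or $H$ in the modified game). Hence $\operatorname{NashGap}(\pi)\ge \tfrac{4}{3}H=\Omega(H)$.
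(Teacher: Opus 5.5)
Your proposal is correct and is essentially the paper's proof, which reuses the three-state example of Lemma~\ref{state-coverage-not-enough-lemma} with horizon $H$; your invariance argument, showing $\mu^t_{\pi^E}=\mu^t_{\pi}$ is uniform at every $t$, makes explicit what the paper leaves implicit. The reward modification is unnecessary, though: in the original game a one-step deviation at time $t$ from state $s$ just inserts a $-1$ and delays the cycle by one step, so its advantage under $\pi^E$ is $-1-c$, where $c\geq 1/3$ is the cycle reward pushed past the horizon, and Lemma~\ref{lem:finite-performance-difference-lemma} then certifies $\pi^E$ as a Nash equilibrium with the original rewards $1/3,2/3,1$, giving $\operatorname{NashGap}(\pi)\geq \tfrac{5}{3}H$.
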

\begin{proof}
    The example provided in the main text is also valid to prove this lemma, by assuming an arbitrary horizon H.
\end{proof}

\begin{theorem}[Finite horizon version of Theorem~\ref{thm:no-state-coverage}]
    There exists a Markov Game with expert policy $\pi^E$ and a learner policy $\pi$ such that even if $\rho^t_{\pi^E} = \rho^t_{\pi}$ for all $0 \leq t \leq H-1$, the Nash gap scales linearly with the horizon; i.e., $\operatorname{NashGap}(\pi) \geq \Omega\left(H\right)$.
\end{theorem}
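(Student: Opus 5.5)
I will adapt the infinite-horizon construction used for Theorem~\ref{thm:no-state-coverage} (the two-branch chain game of \citet{tang2024multiagentimitationlearningvalue}) to a horizon-$H$ game. The states are the same ordered chain: from $s_0$, the joint action $(a_2,a_1)$ leads to the odd branch via $s_1$, and every other joint action leads to the even branch via $s_2$. From $s_1$, playing $(a_2,a_1)$ stays on the odd branch (to $s_3$), and anything else drops to $s_4$. After that every state moves deterministically along its branch. Only the first $H$ steps matter, so a chain of length $2H+1$ suffices and the state space is finite. The shared rewards are action-independent and time-homogeneous: $r^t(s_i)=1$ if $i$ is odd and $0$ if $i$ is even. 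Since $r^t$ depends only on the state, the time dependence of the finite-horizon definitions causes no trouble.

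I take the same two policies, now written as non-stationary ones that are constant in $t$. The expert $\pi^E$ plays $(a_1,a_1)$ at $s_0$, $(a_3,a_3)$ at $s_1$, and arbitrary actions elsewhere. The learner $\pi$ plays $(a_1,a_1)$ at $s_0$ and at $s_1$, and agrees with $\pi^E$ everywhere else.

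The first step is to check that $\pi^E$ is a Nash equilibrium in the finite-horizon sense. Along the expert path the only nontrivial decision is at $s_0$. To reach the rewarding odd branch there, both players would have to change to $(a_2,a_1)$ simultaneously, so no unilateral deviation helps. I will verify this with Lemma~\ref{lem:finite-performance-difference-lemma}: every unilateral deviation at a reachable state $(s_0,$ then $s_2,s_4,\dots)$ has zero advantage. The actions of $\pi^E$ at $s_1$ and on the odd branch are never reached, so they are irrelevant to this check.

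The second step is to show $\rho^t_{\pi}=\rho^t_{\pi^E}$ for all $0\le t\le H-1$. Both policies play $(a_1,a_1)$ at $s_0$, so both move to $s_2$ at $t=1$ and then follow the even chain deterministically. Hence $\mu^t$ is the same point mass for both, and the two policies coincide on every state in that support, which gives equality of $\rho^t$. This is an induction on $t$.

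The third step is the lower bound on the Nash gap. Against $\pi_2$, player~1 deviates to $\pi'_1$, which plays $a_2$ at $s_0$ and at $s_1$ and matches $\pi_1$ elsewhere. The joint action is $(a_2,a_1)$ at both states, because $\pi_2$ plays $a_1$ at both $s_0$ and $s_1$. The trajectory is therefore $s_0,s_1,s_3,s_5,\dots$, which collects reward $1$ at every step $t\ge1$ for a total of $H-1$. Under $\pi$ the trajectory stays on the even branch and collects $0$. So $\operatorname{NashGap}(\pi)\ge H-1=\Omega(H)$.

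The main subtlety is the equilibrium check and the role of $\pi^E(\cdot\mid s_1)=(a_3,a_3)$. This choice is what makes the expert itself unexploitable at $s_1$ if it were ever reached: a unilateral deviation there cannot produce $(a_2,a_1)$. The learner, by contrast, has copied $a_1$ at $s_1$, which is exactly the action player~1 needs from player~2. I will make sure the equilibrium argument handles unreachable states correctly. Since the Nash definition only evaluates $V_i(\nu_0)$, their actions matter only through deviations, and I will check explicitly that no single-player deviation from $\pi^E$ reaches the odd branch beyond $s_1$.
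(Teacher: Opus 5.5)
\textbf{There is a gap in your first step.} Your construction, the occupancy-matching step and the exploitation step are identical to the paper's; its finite-horizon proof simply reuses the appendix construction with a finite chain. Those two later steps are fine. The problem is that $\pi^E$ is not a Nash equilibrium of the game you wrote down, and your justification for it is false. Since $\pi^E_2(\cdot\mid s_0)=a_1$, player~1 alone produces $(a_2,a_1)$ at $s_0$ by switching to $a_2$; no simultaneous change is needed. Play then reaches $s_1$, where $r(s_1)=1$ because $1$ is odd. Only after that does player~2's $a_3$ send play to $s_4$. So for $H\ge 2$ this unilateral deviation earns $1$, while the expert earns $0$. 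Equivalently, the advantage of $a_2$ at $s_0$ is $Q^{\pi^E}_{1,0}(s_0,(a_2,a_1))-V^{\pi^E}_{1,0}(s_0)=V^{\pi^E}_{1,1}(s_1)=1$, not $0$. Your closing check that no deviation goes beyond $s_1$ does not help, because the payoff is collected at $s_1$ itself. It also contradicts your earlier claim that play at $s_1$ is irrelevant. As written, $\pi^E$ is only a $1$-approximate equilibrium. That suffices for the original regret-gap example of Tang et al., but not for this statement, where $\pi^E$ must be Nash (otherwise the claim is trivial). The paper's proof has the same defect, since it only asserts that $\pi^E$ is Nash.

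\textbf{How to repair it.} You cannot fix this by changing the expert's action at $s_0$. The learner must copy $\pi^E$ at the visited state $s_0$, and your exploit of $\pi$ needs $\pi_2(\cdot\mid s_0)=a_1$. Change the rewards instead: set $r(s_1)=0$ and keep reward $1$ only on $s_3,s_5,\dots$. Then every unilateral deviation from $\pi^E$ earns $0$. Player~2 can never leave the even branch, and player~1 can only reach the zero-reward state $s_1$ before being sent to $s_4$. So $\pi^E$ becomes an exact Nash equilibrium. The matching $\rho^t_\pi=\rho^t_{\pi^E}$ is unaffected, since rewards play no role in it. Your deviation $\pi'_1$ now collects reward $1$ at times $2,\dots,H-1$, which gives $\operatorname{NashGap}(\pi)\ge H-2=\Omega(H)$.
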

\begin{proof}
    The proof is exactly the same as in Appendix~\ref{proof-thm-no-state-coverage}, except the state space which is finite (therefore still countable).
\end{proof}

\subsubsection{Section~\ref{sec:positive-results}}

\begin{lemma}[Finite horizon version of Lemma~\ref{lem:trivial-delta}]
    Let $\mathcal{C}$ be the class of games with \textit{consistent} bounds. This class is $\delta$-continuous only for trivial $\delta$ such that $\delta(\epsilon) = 2$ for all $\epsilon > 0$.
\end{lemma}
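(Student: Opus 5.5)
I will mirror the infinite-horizon proof of Lemma~\ref{lem:trivial-delta}, replacing the discount-based choice of chain length with a horizon-based one. Fix $\epsilon > 0$ and a horizon $H$. Suppose $\mathcal{C}$ is $\delta$-continuous. The goal is to exhibit a finite-horizon two-player game $G \in \mathcal{C}$ with a Nash equilibrium $\pi^E$ and a learned $\pi$ satisfying $\epsilon_{\text{BC}} \leq \epsilon$. For this pair, some best response $\pi^*_1 \in \operatorname{BR}_1(\pi_2)$ will have $\max_t \mathbb{E}_{s \sim \mu^t_{\pi^E}}[\lVert \pi^*_{1,t}(\cdot|s) - \pi^E_{1,t}(\cdot|s)\rVert_1] = 2$. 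This forces $\delta(\epsilon) \geq 2$, and since $\delta$ takes values in $[0,2]$, we get $\delta(\epsilon) = 2$.

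The construction reuses the structure of Figure~\ref{fig:br-disc-proof}: states $s_0$, a chain $M_k$, and $s_{\text{exp}}$, with the same action sets and the same rewards depending only on $a_2$ at $s_{\text{exp}}$. In the finite-horizon setting the BC error is a maximum over time steps, so I cannot make $s_{\text{exp}}$ rare in a discounted sense. Instead I take the chain length $k \geq H-1$, so that under $\pi^E$, which always plays $a^r_1$ at $s_0$, the state $s_{\text{exp}}$ is never reached within the horizon. Then $\mu^t_{\pi^E}(s_{\text{exp}}) = 0$ for all $t < H$. The expert is $\pi^E_t((a^r_1, a^c_2)|s) = 1$, and player 2's reward is zero everywhere it can act along the expert path. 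I will verify that this is a Nash equilibrium. Player 1 deviating to $a^r_2$ at $s_0$ reaches $s_{\text{exp}}$, where player 2 plays $a^c_2$, giving player 1 reward $-1 < 0$. Player 2 gets $0$ regardless.

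Next I define $\pi$ to equal $\pi^E$ everywhere except $\pi_{2,t}(a^c_1|s_{\text{exp}}) = 1$. Because $s_{\text{exp}}$ has zero expert occupancy at every $t$, $\epsilon_{\text{BC}} = 0 \leq \epsilon$. Against $\pi_2$, player 1 earns positive reward $H-1$ by playing $a^r_2$ at $s_0$ and $0$ otherwise. The best response therefore plays $a^r_2$ at $s_0$ at time $0$, and I take it constant $a^r_2$ everywhere. At $t=0$, $\mu^0_{\pi^E}$ is concentrated on $s_0$, so the expected $\ell_1$ distance at time $0$ is exactly $2$, which gives the claimed value through the $\max_t$.

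The main obstacle is checking that $G$ lies in $\mathcal{C}$, i.e., that it admits consistent bounds. In the infinite-horizon case this relied on full-state support, which now fails because $s_{\text{exp}}$ is unvisited. I would handle this in one of two ways. The first is to use a uniform (or full-support) initial distribution over $s_0$, the chain states, and $s_{\text{exp}}$, with small mass $\eta$ on non-$s_0$ states. This makes $\mathcal{S}^+_{\pi^E} = \mathcal{S}$, so $\epsilon_{\text{BC}} = 0 \Leftrightarrow \epsilon_\rho = 0$ gives $\pi = \pi^E$ on all visited states, and the finite-horizon analogue of the corollary yields consistency. With this choice, the perturbation at $s_{\text{exp}}$ costs BC error at most $2\cdot(\text{mass on } s_{\text{exp}})$ per time step, and I choose $\eta \leq \epsilon/2$. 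The $t=0$ expected distance is then $2(1-\eta')$ rather than exactly $2$, where $\eta'$ is the non-$s_0$ mass. This is why the infinite-horizon statement is phrased through a limit of games. I would let $\eta \to 0$ to conclude that no $\delta(\epsilon) < 2$ is valid over the class.
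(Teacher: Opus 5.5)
Your argument is correct and reuses the paper's game (Figure~\ref{fig:br-disc-proof}, with the same rewards, expert and perturbation). However, it departs from the paper's one-line proof on a point that matters.

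The paper only says to re-choose $k$ so that the expert's mass on $s_{\text{exp}}$ stays below $\epsilon/2$. With the deterministic start $s_0$ and the per-time error $\epsilon_{\text{BC}}=\max_{i,t}\mathbb{E}_{s\sim\mu^t_{\pi^E}}[\cdot]$, every $\mu^t_{\pi^E}$ is a point mass. So for $\epsilon<2$ this forces $k\ge H-1$, and $s_{\text{exp}}$ is never visited. As your first attempt shows, that game then admits $\epsilon_{\text{BC}}=\epsilon_\rho=0$ with Nash gap $H-1$, so it is not in $\mathcal{C}$. Your full-support initial distribution is exactly what repairs this, and the paper's sketch does not address it. Your route costs one extra parameter, but it actually produces a member of $\mathcal{C}$.

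Three small points would tighten the write-up.

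\emph{The limit $\eta\to0$ is unnecessary.} Player 1's action affects rewards nowhere and transitions only at $s_0$. So your constant-$a^r_2$ policy is a best response to $\pi_2$ and differs from $\pi^E_1$ at every state. The expected $\ell_1$ distance under $\mu^t_{\pi^E}$ is therefore exactly $2$ for every $t$ and every $\nu_0$, not $2(1-\eta')$. A single game already forces $\delta(\epsilon)\ge 2$.

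\emph{Bound the total off-$s_0$ mass, not a per-state mass.} Mass started on the chain drains into the absorbing $s_{\text{exp}}$, so $\mu^t_{\pi^E}(s_{\text{exp}})$ grows with $t$ up to the total off-$s_0$ mass. It is that total that must be at most $\epsilon/2$, not a per-state $\eta$. The simplest choice is to put mass $\eta\in(0,1)$ with $\eta\le\epsilon/2$ on $s_{\text{exp}}$ and the rest on $s_0$.

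\emph{Argue consistency per time step.} With non-stationary policies, you should not argue consistency through $\mathcal{S}^+_{\pi^E}=\mathcal{S}$. Pairs like $(s_0,t)$ with $t\ge1$ are never expert-visited (nor reachable by any policy), so a per-time full-support corollary does not literally apply. What you need, and what holds here, is that every $(s,t)$ reachable under any unilateral deviation lies in the support of $\mu^t_{\pi^E}$. Zero BC error then gives $\pi_t=\pi^E_t$ on all reachable pairs, and hence zero Nash gap.
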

\begin{proof}
    A similar proof to the one provided in the main text is valid for the finite horizon case. It suffices to adapt the $k$ for the finite horizon, ensuring we keep the property $\mu_{\pi^E}(s_{\text{exp}}) \leq \epsilon/2$.
\end{proof}

\begin{lemma}[Finite horizon version of Lemma~\ref{lem:dse-nash-gap}]
    Suppose $\pi^E$ is a (weak) Dominant Strategy Equilibrium. Then, any learned policy $\pi$ with BC error $\epsilon_{\text{BC}}$ satisfies $\operatorname{NashGap}(\pi) \leq 2n\epsilon_{\text{BC}}H^2$.
\end{lemma}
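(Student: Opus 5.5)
The proof mirrors the infinite-horizon argument of Lemma~\ref{lem:dse-nash-gap}, with Lemma~\ref{lem:finite-performance-difference-lemma} replacing Lemma~\ref{lem:performance-difference-lemma} and the per-timestep BC error replacing the discounted one.

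First, since $\pi^E$ is a weak dominant strategy equilibrium, for every player $i$ the policy $\pi^E_i$ is a best response to any $\pi_{-i}$. Here "policy" means a non-stationary policy, and the dominance inequality is taken in the finite-horizon game. Hence we may take $\pi^*_i = \pi^E_i$ in the definition of the Nash gap, which gives
\[
\operatorname{NashGap}(\pi) = \max_i \left[ V_i^{\pi^E_i,\pi_{-i}}(\nu_0) - V_i^{\pi_i,\pi_{-i}}(\nu_0) \right].
\]
We then add and subtract $V_i^{\pi^E}(\nu_0)$ and split the gap into two terms.

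\emph{First term.} For $V_i^{\pi^E}(\nu_0) - V_i^{\pi}(\nu_0)$, apply the general form of Lemma~\ref{lem:finite-performance-difference-lemma} with $\pi' = \pi^E$. The expectation is then taken under $\rho^t_{\pi^E}$. Writing $\mathbb{E}_{a\sim\pi^E_t}[Q^\pi_{i,t}] - V^\pi_{i,t} = \sum_a Q^\pi_{i,t}(s,a)\bigl(\pi^E_t(a|s) - \pi_t(a|s)\bigr)$ and using $\lVert Q^{\pi}_{i,t}\rVert_\infty \le H$, this term is at most
\[
H \sum_{t=0}^{H-1} \mathbb{E}_{s\sim \mu^t_{\pi^E}} \lVert \pi_t(\cdot|s) - \pi^E_t(\cdot|s)\rVert_1 .
\]

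\emph{Second term.} The term $V_i^{\pi^E_i,\pi_{-i}}(\nu_0) - V_i^{\pi^E}(\nu_0)$ must also be expressed under the expert's occupancy $\mu^t_{\pi^E}$. Apply the lemma with roles swapped, i.e.\ with baseline $(\pi^E_i,\pi_{-i})$ and comparator $\pi^E$, and negate. The expectation is then over $\rho^t_{\pi^E}$ and the advantage is taken with respect to $Q^{\pi^E_i,\pi_{-i}}_{i,t}$. Player $i$'s action distribution is identical in both policies, so only the $-i$ marginals differ. This yields the bound
\[
H \sum_t \mathbb{E}_{s\sim\mu^t_{\pi^E}} \lVert \pi_{-i,t}(\cdot|s) - \pi^E_{-i,t}(\cdot|s)\rVert_1 .
\]

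\emph{Combining.} Apply Lemma~\ref{lem:l1-norm-factored-bound} at each fixed $t$ and $s$ to factor the joint $\ell_1$ distances into sums of individual ones. Taking expectations, each individual term is bounded by the finite-horizon $\epsilon_{\text{BC}}$, which is defined as a max over $t$. The joint term contributes at most $n\epsilon_{\text{BC}}$ per step and the $-i$ term at most $(n-1)\epsilon_{\text{BC}}$. Summing over $H$ timesteps and multiplying by $H$ gives $(2n-1)\epsilon_{\text{BC}}H^2 \le 2n\epsilon_{\text{BC}}H^2$.

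The main obstacle is keeping every expectation under the expert's occupancy $\mu^t_{\pi^E}$ rather than under a deviating policy's occupancy. This determines the direction in which the performance difference lemma is applied in each term, since the lemma evaluates advantages under the comparator's occupancy and the baseline's $Q$. It also requires confirming that the finite-horizon statement of Lemma~\ref{lem:finite-performance-difference-lemma} allows both $\pi$ and $\pi'$ to vary jointly (the "more generally" clause). The remaining step, bounding $\lVert Q\rVert_\infty$ by $H$, is routine because rewards lie in $[-1,1]$.
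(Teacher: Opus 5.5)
Your proposal is correct and follows the paper's proof essentially step for step. You use dominance to set $\pi^*_i = \pi^E_i$ and split through $V_i^{\pi^E}(\nu_0)$. You then apply the finite-horizon performance difference lemma in the two directions so that both terms are evaluated under $\mu^t_{\pi^E}$ with $\lVert Q\rVert_\infty \le H$, and conclude with the factored $\ell_1$ bound to obtain $(2n-1)\epsilon_{\text{BC}}H^2 \le 2n\epsilon_{\text{BC}}H^2$.
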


\begin{proof}
    The Dominant Strategy Equilibrium assumption gives:
    \begin{align*}
        \operatorname{NashGap}(\pi) &= \max_{i, \pi^*_i} \left[V_i^{\pi^*_i, \pi_{-i}}(\nu_0) - V_i^{\pi_i, \pi_{-i}}(\nu_0)\right]\\
        &= \max_{i} \left[V_i^{\pi^E_i, \pi_{-i}}(\nu_0) - V_i^{\pi^E}(\nu_0) + V_i^{\pi^E}(\nu_0) - V_i^{\pi_i, \pi_{-i}}(\nu_0)\right]
    \end{align*}

    Fixing $i$, we apply the Performance Difference Lemma (Lemma~\ref{lem:finite-performance-difference-lemma}) to get:
    \begin{align*}
        V_i^{\pi^E}(\nu_0) - V_i^{\pi_i, \pi_{-i}}(\nu_0)
        &= \sum_{t=0}^{H-1} \mathbb{E}_{s,a \sim \rho^t_{\pi^E}}\left[Q^{\pi}_{i, t}(s, a) - V^{\pi}_{i, t}(s)\right]\\
        &= \sum_{t=0}^{H-1} \mathbb{E}_{s \sim \mu^t_{\pi^E}}\left[\sum_a Q^{\pi}_{i, t}(s, a)\left(\pi^E_{t}(a|s) - \pi_t(a | s)\right)\right]\\
        &\leq H \sum_{t=0}^{H-1} \mathbb{E}_{s \sim \mu^t_{\pi^E}}\left[\left\lVert \pi^E_t(\cdot | s) - \pi_t(\cdot | s) \right\rVert_1\right]\\
        &\leq H^2 \cdot \max_{0 \leq t \leq H-1} \mathbb{E}_{s \sim \mu^t_{\pi^E}}\left[\left\lVert \pi^E_{t}(\cdot | s) - \pi_t(\cdot | s) \right\rVert_1\right]
    \end{align*}
    where the Q-functions are defined as in Lemma~\ref{lem:finite-performance-difference-lemma}.
    
    Similarly, applying the PDL in the reverse order,
    \begin{align*}
        V_i^{\pi^E_i, \pi_{-i}}(\nu_0) - V_i^{\pi^E}(\nu_0)
        &= \sum_{t=0}^{H-1} \mathbb{E}_{s,a \sim \rho^t_{\pi^E}}\left[V^{\pi^E_i, \pi_{-i}}_{i, t}(s) - Q^{\pi^E_i, \pi_{-i}}_{i, t}(s, a)\right]\\
        &= \sum_{t=0}^{H-1} \mathbb{E}_{s \sim \mu^t_{\pi^E}}\left[\sum_a Q^{\pi^E_i, \pi_{-i}}_{i, t}(s, a)\left(\pi_{-i, t}(a|s) - \pi^E_{-i, t}(a | s)\right)\pi^E_{i, t}(a_i|s)\right]\\
        &\leq H \sum_{t=0}^{H-1} \mathbb{E}_{s \sim \mu^t_{\pi^E}}\left[\left\lVert \pi_{-i, t}(\cdot | s) - \pi^E_{-i, t}(\cdot | s) \right\rVert_1\right]\\
        &\leq H^2 \cdot \max_{0 \leq t \leq H-1} \mathbb{E}_{s \sim \mu^t_{\pi^E}}\left[\left\lVert \pi_{-i, t}(\cdot | s) - \pi^E_{-i, t}(\cdot | s) \right\rVert_1\right]
    \end{align*}

    We conclude using Lemma~\ref{lem:l1-norm-factored-bound}.
    $$
        \operatorname{NashGap}(\pi) \leq (2n-1)\epsilon_{\text{BC}}H^2 \leq 2n\epsilon_{\text{BC}}H^2
    $$
\end{proof}

\begin{lemma}[Finite horizon version of Lemma~\ref{lem:gen-dse-nash-gap}]
    Let $\pi$ be the learned policy, and assume for all $i \in [n]$ and $0 \leq t \leq H-1$ that $\mathbb{E}_{s \sim \mu^t_{\pi^E}}\left[\lVert \pi_{i, t}^{*}(\cdot | s) - \pi_{i, t}^E(\cdot | s) \rVert_1\right] \leq \mathcal{O}(\delta(\epsilon_{\text{BC}}))$ for some function $\delta$, where $\pi_i^* \in \operatorname{BR}_i(\pi)$. Then, $\operatorname{NashGap}(\pi) \leq \left(2n\epsilon_{\text{BC}} + \delta(\epsilon_{\text{BC}})\right)H^2$.
\end{lemma}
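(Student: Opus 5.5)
We will mirror the finite-horizon proof of Lemma~\ref{lem:dse-nash-gap} given just above, inserting one triangle-inequality step that brings in the slack $\delta(\epsilon_{\text{BC}})$, as in the infinite-horizon Lemma~\ref{lem:gen-dse-nash-gap}. Fix a player $i$ and a best response $\pi^*_i \in \operatorname{BR}_i(\pi_{-i})$. Adding and subtracting the expert value, we split
\begin{align*}
V_i^{\pi^*_i, \pi_{-i}}(\nu_0) - V_i^{\pi}(\nu_0) = \left[V_i^{\pi^*_i, \pi_{-i}}(\nu_0) - V_i^{\pi^E}(\nu_0)\right] + \left[V_i^{\pi^E}(\nu_0) - V_i^{\pi}(\nu_0)\right].
\end{align*}

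The second bracket is handled exactly as in the finite-horizon DSE proof. We apply Lemma~\ref{lem:finite-performance-difference-lemma} in its general form with $\rho^t_{\pi^E}$ and $Q^{\pi}_{i,t}$, and use $\lVert Q^{\pi}_{i,t}\rVert_\infty \leq H$. This yields at most $H^2 \max_t \mathbb{E}_{s\sim\mu^t_{\pi^E}}\lVert \pi^E_t(\cdot|s) - \pi_t(\cdot|s)\rVert_1$. Lemma~\ref{lem:l1-norm-factored-bound} then bounds this by $nH^2\epsilon_{\text{BC}}$.

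For the first bracket we again apply Lemma~\ref{lem:finite-performance-difference-lemma} in the reversed order, so that the expectation is taken under the expert occupancy $\rho^t_{\pi^E}$ and the $Q$-function is that of the joint policy $(\pi^*_i,\pi_{-i})$. Writing $\pi^{\dagger} = (\pi^*_i, \pi_{-i})$, the summand becomes $\sum_a Q^{\pi^{\dagger}}_{i,t}(s,a)\bigl(\pi^{\dagger}_t(a|s) - \pi^E_t(a|s)\bigr)$. Using the same sup-norm bound, this is at most $H^2 \max_t \mathbb{E}_{s\sim\mu^t_{\pi^E}}\lVert \pi^{\dagger}_t(\cdot|s) - \pi^E_t(\cdot|s)\rVert_1$.

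Lemma~\ref{lem:l1-norm-factored-bound} splits this product-policy distance into the sum of $\lVert \pi^*_{i,t} - \pi^E_{i,t}\rVert_1$ and $\sum_{j\neq i}\lVert \pi_{j,t} - \pi^E_{j,t}\rVert_1$. In expectation under $\mu^t_{\pi^E}$, the first term is at most $\delta(\epsilon_{\text{BC}})$ by the hypothesis and the second is at most $(n-1)\epsilon_{\text{BC}}$ by the definition of the BC error. Summing both brackets gives $\bigl((2n-1)\epsilon_{\text{BC}} + \delta(\epsilon_{\text{BC}})\bigr)H^2 \leq \bigl(2n\epsilon_{\text{BC}} + \delta(\epsilon_{\text{BC}})\bigr)H^2$. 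Taking the maximum over $i$ finishes the argument.

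The main subtlety is the first bracket, and it has two parts.

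\emph{Measure placement.} The performance difference lemma must be oriented so that the state distribution is the expert's $\mu^t_{\pi^E}$ rather than that of the deviating policy. This is essential because both the hypothesis and $\epsilon_{\text{BC}}$ are measured under the expert distribution. Writing $V^{\pi^{\dagger}} - V^{\pi^E} = -(V^{\pi^E} - V^{\pi^{\dagger}})$ and applying the general form with $\pi' = \pi^E$ and baseline $\pi^{\dagger}$ achieves this.

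\emph{The $\mathcal{O}(\cdot)$ in the hypothesis.} The hypothesis only gives the per-$t$ bound up to $\mathcal{O}(\delta(\epsilon_{\text{BC}}))$. I would read this as absorbing the constant into $\delta$, so that the bound holds with constant $1$; otherwise the conclusion holds only up to that constant.
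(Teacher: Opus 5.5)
Your proposal is correct and matches the paper's proof essentially line for line: split the deviation gap through $V_i^{\pi^E}(\nu_0)$, apply Lemma~\ref{lem:finite-performance-difference-lemma} to both pieces under $\rho^t_{\pi^E}$ with $\lVert Q\rVert_\infty \leq H$, and use Lemma~\ref{lem:l1-norm-factored-bound} to reach $\bigl((2n-1)\epsilon_{\text{BC}} + \delta(\epsilon_{\text{BC}})\bigr)H^2$. Your reading of the $\mathcal{O}(\cdot)$ hypothesis as holding with constant $1$ is exactly what the paper's proof tacitly assumes when it writes $\delta(\epsilon_{\text{BC}})$ directly.
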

\begin{proof}
    Using similar arguments, applying Lemma~\ref{lem:finite-performance-difference-lemma} and Lemma~\ref{lem:l1-norm-factored-bound}:
    \begin{align*}
        \operatorname{NashGap}(\pi)
        &\leq \max_i \left(\max_{0 \leq t \leq H-1} \mathbb{E}_{s \sim \mu^t_{\pi^E}}\left[\left\lVert \pi^E_{t}(\cdot | s) - \pi_t(\cdot | s) \right\rVert_1\right]\right.\\
        &\qquad\left. + \max_{0 \leq t \leq H-1} \mathbb{E}_{s \sim \mu^t_{\pi^E}}\left[\left\lVert \pi^E_{t}(\cdot | s) - (\pi^*_{i,t}, \pi_{-i, t})(\cdot | s) \right\rVert_1\right]\right)H^2\\
        &\leq \left((2n-1)\epsilon_{\text{BC}} + \delta(\epsilon_{\text{BC}})\right)H^2\\
        &\leq \left(2n\epsilon_{\text{BC}} + \delta(\epsilon_{\text{BC}})\right)H^2
    \end{align*}
\end{proof}

\end{document}